\newcommand{\neurips}[1]{\iftoggle{neurips}{#1}{}}
\newcommand{\arxiv}[1]{\iftoggle{neurips}{}{#1}}
\newcommand{\loose}{\looseness=-1}
\setlist[enumerate]{leftmargin=*}
\setlist[itemize]{leftmargin=*}
\newcommand{\multiline}[1]{\parbox[t]{\dimexpr\linewidth-\algorithmicindent}{#1}}
\DeclareFontFamily{U}{jkpmia}{}
\DeclareFontShape{U}{jkpmia}{m}{it}{<->s*jkpmia}{}
\DeclareFontShape{U}{jkpmia}{bx}{it}{<->s*jkpbmia}{}
\DeclareMathAlphabet{\mathfrak}{U}{jkpmia}{m}{it}
\SetMathAlphabet{\mathfrak}{bold}{U}{jkpmia}{bx}{it}
\colorlet{txblue}{RoyalBlue!70!NavyBlue}
\newcommand{\neutralize}[1]{\expandafter\let\csname c@#1\endcsname\count@}
\newenvironment{thmmod}[2]
  {%
   \neutralize{theorem}\phantomsection
   \begin{theorem}}
  {\end{theorem}}
\declaretheorem[name=Theorem,parent=section]{theorem}
\declaretheorem[name=Lemma,parent=section]{lemma}
\declaretheorem[name=Assumption, parent=section]{assumption}
\declaretheorem[name=Condition, parent=section]{condition}
\declaretheorem[qed=$\triangleleft$,name=Example,style=definition, parent=section]{example}
\declaretheorem[name=Remark, parent=section]{remark}
\declaretheorem[name=Proposition, parent=section]{proposition}
    \let\Cref\crtCref
    \let\cref\crtcref
\newcommand{\creftitle}[1]{\crtcref{#1}}
  \renewenvironment{proof}[1][Proof]%
  {%
   \par\noindent{\bfseries\upshape {#1.}\ }%
  }%
  {\qed\newline}
\theoremstyle{definition}  %
\newtheorem{corollary}{Corollary}[section]
\theoremstyle{plain}
\newtheorem{definition}{Definition}[section]
\xpatchcmd{\proof}{\itshape}{\normalfont\proofnameformat}{}{}
\newcommand{\proofnameformat}{\bfseries}
\newcommand{\pfref}[1]{Proof of \cref{#1}}
\renewcommand{\eqref}[1]{\texorpdfstring{\hyperref[#1]{(\ref*{#1})}}{(\ref*{#1})}}
\Crefname{assumption}{Assumption}{Assumptions}
\crefname{fact}{Fact}{Facts}
    \let\Cref\crtCref
    \let\cref\crtcref
\DeclareDocumentCommand{\XDeclarePairedDelimiter}{mm}
 {
  \__egreg_delimiter_clear_keys: %
  \keys_set:nn { egreg/delimiters } { #2 }
  \use:x %
   {
    \exp_not:n {\NewDocumentCommand{#1}{sO{}m} }
     {
      \exp_not:n { \IfBooleanTF{##1} }
       {
        \exp_not:N \egreg_paired_delimiter_expand:nnnn
         { \exp_not:V \l_egreg_delimiter_left_tl }
         { \exp_not:V \l_egreg_delimiter_right_tl }
         { \exp_not:n { ##3 } }
         { \exp_not:V \l_egreg_delimiter_subscript_tl }
       }
       {
        \exp_not:N \egreg_paired_delimiter_fixed:nnnnn 
         { \exp_not:n { ##2 } }
         { \exp_not:V \l_egreg_delimiter_left_tl }
         { \exp_not:V \l_egreg_delimiter_right_tl }
         { \exp_not:n { ##3 } }
         { \exp_not:V \l_egreg_delimiter_subscript_tl }
       }
     }
   }
 }
\XDeclarePairedDelimiter{\supnorm}{
  left=\lVert,
  right=\rVert,
  subscript=\infty
  }
\DeclarePairedDelimiter{\abs}{\lvert}{\rvert} %
\DeclarePairedDelimiter{\brk}{[}{]}
\DeclarePairedDelimiter{\crl}{\{}{\}}
\DeclarePairedDelimiter{\prn}{(}{)}
\DeclarePairedDelimiter{\nrm}{\|}{\|}
\DeclarePairedDelimiter{\tri}{\langle}{\rangle}
\let\Pr\undefined
\DeclareMathOperator{\En}{\mathbb{E}}
\DeclareMathOperator{\Pr}{Pr}
\newcommand{\mb}[1]{\boldsymbol{#1}}
\def\ddefloop#1{\ifx\ddefloop#1\else\ddef{#1}\expandafter\ddefloop\fi}
\def\ddef#1{\expandafter\def\csname bb#1\endcsname{\ensuremath{\mathbb{#1}}}}
\def\ddefloop#1{\ifx\ddefloop#1\else\ddef{#1}\expandafter\ddefloop\fi}
\def\ddef#1{\expandafter\def\csname b#1\endcsname{\ensuremath{\mathbf{#1}}}}
\def\ddef#1{\expandafter\def\csname sf#1\endcsname{\ensuremath{\mathsf{#1}}}}
\def\ddef#1{\expandafter\def\csname c#1\endcsname{\ensuremath{\mathcal{#1}}}}
\def\ddef#1{\expandafter\def\csname h#1\endcsname{\ensuremath{\widehat{#1}}}}
\def\ddef#1{\expandafter\def\csname hc#1\endcsname{\ensuremath{\widehat{\mathcal{#1}}}}}
\def\ddef#1{\expandafter\def\csname t#1\endcsname{\ensuremath{\widetilde{#1}}}}
\def\ddef#1{\expandafter\def\csname tc#1\endcsname{\ensuremath{\widetilde{\mathcal{#1}}}}}
\def\ddefloop#1{\ifx\ddefloop#1\else\ddef{#1}\expandafter\ddefloop\fi}
\def\ddef#1{\expandafter\def\csname scr#1\endcsname{\ensuremath{\mathscr{#1}}}}
\newcommand{\veps}{\varepsilon}
\newcommand{\ldef}{\vcentcolon=}
\newcommand{\rdef}{=\vcentcolon}
\newcommand{\pisamp}{\wt{\pi}}
\newcommand{\samp}{\mb{\pi}_{\mathsf{samp}}}
\newcommand{\bmu}{\mb{\mu}}
\newcommand{\coefffull}{\coeff(\Pi,T,\beta;\samp)}
\newcommand{\cDopt}{\cD_{\mathsf{opt}}}
\newcommand{\Lhat}{\wh{L}}%
\newcommand{\bhat}{\wh{b}}%
\newcommand{\Bhat}{\wh{B}}%
\newcommand{\tautil}{\wt{\tau}}%
\newcommand{\stil}{\wt{s}}
\newcommand{\atil}{\wt{a}}
\newcommand{\Jb}{J_{\beta}}
\newcommand{\Ccov}{C_{\mathsf{cov}}}
\newcommand{\Cconc}{C_{\mathsf{conc}}}
\newcommand{\mainalg}{Exploratory Preference Optimization\xspace}
\newcommand{\alglong}{\mainalg}
\newcommand{\algshort}{\texttt{XPO}\xspace} %
\newcommand{\dpo}{\texttt{DPO}\xspace}
\newcommand{\taup}{\tau_{+}}
\newcommand{\taum}{\tau_{-}}
\newcommand{\pistarb}{\pistar_{\beta}}
\newcommand{\Qstarb}{Q^{\star}_{\beta}}
\newcommand{\Vstarb}{V^{\star}_{\beta}}
  \newcommand{\ap}{a_{+}}%
  \newcommand{\am}{a_{-}}%
\newcommand{\pitil}{\wt{\pi}}%
\newcommand{\Qstar}{Q^{\star}}
\newcommand{\Vstar}{V^{\star}}
  \newcommand{\afrak}{\mathfrak{a}}
  \newcommand{\bfrak}{\mathfrak{b}}
\renewcommand{\emptyset}{\varnothing}
\newcommand{\filt}{\mathscr{F}}
\newcommand{\M}[1]{^{{\scriptscriptstyle M}}}  %
\newcommand{\pistar}{\pi^{\star}}
\newcommand{\pihat}{\wh{\pi}}
\newcommand{\algcommentlight}[1]{\textcolor{blue!70!black}{\transparent{0.5}{\scriptsize{\texttt{\textbf{//\hspace{2pt}#1}}}}}}
\newcommand{\psdgt}{\succ}
\newcommand{\approxleq}{\lesssim}
\newcommand{\approxgeq}{\gtrsim}
\newcommand{\ind}[1]{^{{\scriptscriptstyle(#1)}}}
\newcommand{\bigoh}{O}
\newcommand{\bigoht}{\wt{O}}
\newcommand{\bigom}{\Omega}
\newcommand{\bigomt}{\wt{\Omega}}
\newcommand{\bigthetat}{\wt{\Theta}}
\newcommand{\indic}{\mathbb{I}}
\renewcommand{\Pr}{\bbP}
\newcommand{\poly}{\mathrm{poly}}
\newcommand{\polylog}{\mathrm{polylog}}
\newcommand{\Dhels}[2]{D^{2}_{\mathsf{H}}\prn*{#1,#2}}
\newcommand{\dmid}{\;\|\;}
\def\multiset#1#2{\ensuremath{\left(\kern-.3em\left(\genfrac{}{}{0pt}{}{#1}{#2}\right)\kern-.3em\right)}}
\renewcommand{\emptyset}{\varnothing}
\let\OldStatex\Statex
\renewcommand{\Statex}[1][3]{%
  \setlength\@tempdima{\algorithmicindent}%
  \OldStatex\hskip\dimexpr#1\@tempdima\relax}
\let\oldparagraph\paragraph
\renewcommand{\paragraph}[1]{\oldparagraph{#1.}}
\author{
Tengyang Xie\thanks{Equal contribution}
\\
\normalsize
\href{mailto:tx@cs.wisc.edu}{\texttt{tx@cs.wisc.edu}}
\and
Dylan J. Foster\footnotemark[1]
\\
\normalsize
\href{mailto:dylanfoster@microsoft.com}{\texttt{dylanfoster@microsoft.com}}
\and
Akshay Krishnamurthy
\\
\normalsize
\href{mailto:akshaykr@microsoft.com}{\texttt{akshaykr@microsoft.com}}
\and
Corby Rosset
\\
\normalsize
\href{mailto:corbyrosset@microsoft.com}{\texttt{corbyrosset@microsoft.com}}
\and
Ahmed Awadallah
\\
\normalsize
\href{mailto:ahmed.awadallah@microsoft.com}{\texttt{ahmed.awadallah@microsoft.com}}
\and
Alexander Rakhlin
\\
\normalsize
\href{mailto:rakhlin@mit.edu}{\texttt{rakhlin@mit.edu}}
}
\date{May 30, 2024}
\begin{document}
\maketitle

\begin{abstract}
  Reinforcement learning from human feedback (RLHF) has emerged as a
central tool for language model alignment. We consider \emph{online exploration} in RLHF, which
exploits interactive access to human or AI feedback by deliberately
encouraging the model to produce diverse, maximally informative responses. By allowing
RLHF to confidently stray from the
pre-trained model, online exploration offers the possibility of novel, potentially super-human
capabilities, but its full potential as a paradigm for language model training has yet to
be realized, owing to computational and statistical
bottlenecks in directly adapting existing reinforcement learning techniques.\loose

We propose a new algorithm for online exploration in RLHF,
\emph{\alglong} (\algshort), which is simple and practical---a
one-line change to (online) Direct Preference Optimization \citep[\dpo;][]{rafailov2024direct}---yet enjoys
the strongest known provable guarantees
and promising empirical
    performance. \algshort augments the \dpo objective with a novel
    and principled 
    \emph{exploration bonus}, empowering the algorithm to explore
    outside the support of the initial model and human feedback
    data. In theory, we show that \algshort is provably
    sample-efficient and converges to a near-optimal language model policy
    under natural exploration conditions, irrespective of whether the
    initial model has good coverage.
    Our analysis, which builds on the observation that \dpo
    implicitly performs a form of $Q^{\star}$-approximation (or, \emph{Bellman error minimization}), combines previously disparate techniques from language modeling and theoretical reinforcement
learning in a serendipitous fashion through the
perspective of \emph{KL-regularized Markov decision processes}. Empirically, we
find that \algshort is more sample-efficient than
non-exploratory \dpo
    variants in a preliminary evaluation.\loose

\end{abstract}

\section{Introduction}
\label{sec:intro}

Reinforcement learning from human feedback (RLHF) is a central tool to align language models to human values and elicit useful
behavior
\citep{christiano2017deep,bai2022training,ouyang2022training}. Using
human-labeled preference data, RLHF achieves enhanced capabilities
using a modest amount of data compared to unsupervised
pre-training (on the order of tens of millions
versus trillions of tokens) by treating
the language model as a ``policy'' and optimizing it with reinforcement learning
techniques.\loose

Even though RLHF is typically
only applied with preference data from humans or other language
models, one might hope that it has potential to produce super-human capabilities because
recognizing novel behavior and insights is typically easier than
\emph{generating} novel behavior. Indeed, it is often much
  easier to verify correctness of a given proof or program than it is
  to produce one from scratch. By repeatedly generating
new proposals and labeling them with human feedback, a
language model could gradually push beyond the boundary of human
capabilities. Unfortunately, even with the great disparity in difficulty between generation and verification, a major barrier to achieving enhanced capabilities via RLHF is the volume of human feedback, i.e., \emph{sample complexity}, required by existing methods. Thus, a promising research direction is to develop sample-efficient methods for RLHF.

\neurips{\vspace{-2pt}}
A natural way to address the sample efficiency problem for RLHF
is to augment algorithms with \emph{online exploration}. Online exploration exploits interactive access to
human or AI feedback by deliberately encouraging the model to produce
diverse, novel responses. RLHF algorithms that exploit online feedback have received limited
investigation, and in spite of encouraging initial results, existing
approaches either do not update the language model
\citep{dwaracherla2024efficient}, or engage in purely passive
exploration \citep{guo2024direct,gao2024rebel}, with no
  mechanism to encourage novelty or diversity. Passive exploration is intuitively insufficient, as we are unlikely to generate novel and correct proofs by chance; we make this precise in \Cref{prop:dpo_failure}.
  Thus, the full potential of online
exploration as a new paradigm for language model training has yet to
be realized. 

\neurips{\vspace{-2pt}}
The central challenge in equipping language models with deliberate exploration is to efficiently navigate the vast, combinatorially large
space of token sequences to find responses for which feedback will be
maximally informative. %
The contemporary theory of reinforcement learning offers---at a
conceptual level---solutions to this problem, providing algorithm
design principles for exploration that can optimally take advantage of
problem structure and achieve sample efficiency to the best extent one
can hope for
\citep{jiang2017contextual,agarwal2019reinforcement,foster2023foundations}. However,
the most powerful approaches in this space are computationally
intractable in the general \arxiv{reinforcement
    learning}\neurips{RL} setting \citep{jiang2017contextual,jin2021bellman,foster2021statistical}, and prior attempts to adapt them to
  RLHF either make unrealistic modeling assumptions (i.e., do not allow for
general function approximation)
\citep{xu2020preference,novoseller2020dueling,pacchiano2021dueling,wu2023making,zhan2023query,du2024exploration,das2024provably}, or are
  computationally inefficient and not feasible to faithfully implement
  \citep{chen2022human,wang2023rlhf,ye2024theoretical}.
Can we, perhaps by specializing to language modeling, develop practical, provable, and empirically efficient online exploration methods for RLHF?\loose

\neurips{\vspace{-3pt}}
\subsection{Contributions}
We propose a new algorithm for online exploration in RLHF,
\emph{\alglong (\algshort)}, which is simple and practical---a
one-line change to (online) Direct Preference Optimization (\dpo; \citet{rafailov2024direct,guo2024direct})---yet enjoys
the strongest known provable
  guarantees and promising empirical
    performance. \algshort augments the \dpo objective with a novel
    and principled 
    \emph{exploration bonus}, empowering the algorithm to explore
    outside the support of the initial model. We show that \algshort is provably
    sample-efficient, and converges to a near-optimal language model policy
    under natural exploration conditions
    \citep{jin2021bellman,xie2023role,zhong2022gec}. Critically, and in contrast to
    prior work, our theory holds irrespective of whether the initial
    model is sufficiently exploratory on its
    own. To summarize:\loose
    \begin{center}
      \emph{\algshort offers the first practical and provably
        sample-efficient online exploration algorithm for RLHF with
        general function approximation.}
    \end{center}
    
\neurips{\vspace{-1pt}}

\arxiv{\paragraph{Technical highlights}}
\neurips{\textbf{Technical highlights.}~~}
Our design and analysis of \algshort uses previously disparate
techniques from language modeling and theoretical reinforcement
learning, combining them in a serendipitous fashion through the
perspective of \emph{KL-regularized Markov decision processes} \citep{neu2017unified}.
\begin{enumerate}
\item First, generalizing \citet{rafailov2024r}, we observe that \dpo
        can be viewed as implicitly performing \emph{Bellman error minimization}~\citep{xie2020q} to approximate the
        optimal value function $Q^{\star}$ in a \emph{KL-regularized
          MDP}. We use this to provide a novel KL-regularized regret decomposition.
      \item Then, we show that \emph{global optimism}
        \citep{jiang2017contextual,jin2021bellman,xie2023role}, a
        powerful RL exploration technique that has classically been
        viewed as
        computationally intractable~\citep{dann2018oracle,kane2022computational,golowich2024exploration},
        can be implemented in any
        KL-regularized MDP with deterministic transitions (generalizing language modeling) by adding a surprisingly simple exploration bonus to the \dpo objective. This yields the \algshort objective.\loose
    \end{enumerate}
    \neurips{\vspace{-3pt}}
        We expect our analysis techniques and perspective to be useful
        more broadly. In particular, the guarantees for \algshort hold not
just for language models, but for any \arxiv{reinforcement learning}\neurips{RL} problem
with a stochastic starting state and (potentially unknown)
deterministic transition dynamics (``Deterministic Contextual MDP'').\loose

\arxiv{\paragraph{Empirical results}}
\neurips{\textbf{Empirical results.}~~}
In \cref{sec:experiments}, we perform a proof-of-concept experiment to
validate our theory,
and find that \algshort can match the performance of \dpo
variants \citep{xu2023some,snorkelai2024,dong2024rlhf} based on passive or
heuristic exploration using
significantly less preference data.
These initial
findings suggest that augmenting language models with online
exploration may indeed lead to benefits over passive
  exploration.\loose

\paragraph{Concurrent work}  
Two concurrent and independent works posted to arXiv just before this
preprint, \citet{cen2024value,zhang2024self}, propose algorithms that
equip \dpo with exploration bonuses similar to \algshort. On the theoretical side, both works are
restricted to the contextual bandit formulation of RLHF, and do not
consider the general reinforcement learning framework in this work or
make the connection to $Q^{\star}$-approximation and KL-regularized MDPs. Compared to our
results, which give provable sample complexity guarantees with general
function approximation, \citet{zhang2024self}
do not provide sample complexity guarantees, while
\citet{cen2024value} provide guarantees only for linear contextual
bandits. In addition, and importantly, the sample complexity guarantees in
\citet{cen2024value} have exponential dependence on the KL
regularization parameter, which our results avoid. Empirically, both works find benefits from exploration.\loose

\arxiv{\subsection{Paper Organization}
\cref{sec:background} presents background on RLHF, online feedback, and the necessity of exploration. \cref{sec:main} presents our
algorithm and main theoretical guarantees\neurips{.}\arxiv{, including motivation behind the
algorithm design and a proof sketch.} \cref{sec:experiments}
presents experimental results, and we conclude with discussion in
\cref{sec:discussion}. Proofs and additional results are deferred to
the appendix.\loose

\paragraph{Notation}
  For an integer $n\in\bbN$, we let $[n]$ denote the set
  $\{1,\dots,n\}$. For a set $\cX$, we let $\Delta(\cX)$ denote the
  set of all probability distributions over $\cX$. We adopt
    standard big-oh notation, and write $f=\bigoht(g)$ to denote that
    $f = \bigoh(g\cdot{}\max\crl*{1,\mathrm{polylog}(g)})$ and
    $a\approxleq{}b$ as shorthand for $a=\bigoh(b)$. \loose
  }

\neurips{\vspace{-3pt}}
\section{Background}
\label{sec:background}
This section contains necessary background to present our main
results. We begin by recalling the standard formulation of reinforcement learning
from human feedback from offline data (\cref{sec:rlhf}), then introduce the \emph{online
  feedback} model and highlight the need for systematic
exploration (\cref{sec:online_background}).\loose

\neurips{
\paragraph{Notation}
  For an integer $n\in\bbN$, we let $[n]$ denote the set
  $\{1,\dots,n\}$. For a set $\cX$, we let $\Delta(\cX)$ denote the
  set of all probability distributions over $\cX$. \arxiv{We adopt
    non-asymptotic big-oh notation: For functions
    $f,g:\cX\to\bbR_{+}$, we write $f=\bigoh(g)$ (resp. $f=\bigom(g)$)
    if there exists a constant $C>0$ such that $f(x)\leq{}Cg(x)$
    (resp. $f(x)\geq{}Cg(x)$) for all $x\in\cX$. We write
    $f=\bigoht(g)$ if $f=\bigoh(g\cdot\mathrm{polylog}(T))$,
    $f=\bigomt(g)$ if $f=\bigom(g/\polylog(T))$, and $f=\bigthetat(g)$
    if $f=\bigoht(g)$ and
    $f=\bigomt(g)$. %
    We write $f\propto g$ if $f=\bigthetat(g)$.} \neurips{We adopt
    standard big-oh notation, and write $f=\bigoht(g)$ to denote that
    $f = \bigoh(g\cdot{}\max\crl*{1,\mathrm{polylog}(g)})$ and
    $a\approxleq{}b$ as shorthand for $a=\bigoh(b)$.} \loose
  }

\subsection{Reinforcement Learning from Human Feedback}
\label{sec:rlhf}

We study RLHF in a general reinforcement learning formulation which
subsumes the \emph{token-level MDP} formulation considered in prior
work \citep{rafailov2024r}, but is somewhat broader.

\paragraph{Markov decision processes}
We consider an episodic finite-horizon Markov decision process framework. Formally, a horizon-$H$
MDP $M=(H,\cS, \cA, P, r,\rho)$ consists of a
(potentially very large) state space $\cS$, action space $\cA$, probability transition function
$P:\cS\times\cA\to\Delta(\cS)$, reward function
$r:\cS\times{}\cA\to\bbR$, and initial state distribution
$\rho\in\Delta(\cS)$. 
We assume
without loss of generality that the state space is \emph{layered} such that $\cS=\cS_1\cup\cS_2\cup\cdots\cup{}\cS_H$, where
$\cS_h$ is the set of states reachable at step $h$, and
$\cS_h\cup{}\cS_{h'}=\emptyset$ for $h\neq{}h'$. A (randomized) policy
is a mapping 
$\pi:\cS\to\Delta(\cA)$, and induces a
distribution over trajectories $\tau=(s_1,a_1),\ldots,(s_H,a_H)$ and
rewards $r_1,\ldots,r_H$ via the following process. The initial state is drawn via
$s_1\sim{}\rho$, then for $h=1,\ldots,H$: $a_h\sim\pi(s_h)$,
$r_h=r(s_h,a_h)$, and $s_{h+1}\sim{}P(s_h,a_h)$. We let $\En_{\pi}\brk*{\cdot}$ and $\Pr_{\pi}\brk{\cdot}$
  denote expectation and probability under this process,
  respectively. We assume that $\sum_{h=1}^{H}r_h\in\brk{0,\Rmax}$
  almost surely for a parameter $\Rmax>0$. For a trajectory $\tau$ and policy $\pi$ we define $r(\tau) = \sum_{h=1}^H r(s_h,a_h)$ and $\pi(\tau) = \prod_{h=1}^H \pi(a_h \mid s_h)$.

  In the context of language modeling, the main object of interest is
  the \emph{token-level MDP} \citep{rafailov2024r}. Here, $s_1\sim\rho$ represents a prompt,
  each action $a_h$ represents a token (with $\cA$ representing the vocabulary), and the state
  $s_h=(s_1,a_1,\ldots,a_{h-1})$ is the prompt and sequence of
  tokens so far. The language model is represented by a policy $\pi$,
  which maps the current context $s_h=(s_1,a_1,\ldots,a_{h-1})$ to a
  distribution over the next token $a_h$. The trajectory
  $\tau=(s_1,a_1),\ldots,(s_H,a_H)$ produced by this process
  can be interpreted as  the language model's response to the prompt
  $s_1$; we will occasionally use the terms ``trajectory'' and
  ``response'' synonymously in this context.

  Our main results apply to any \emph{Deterministic Contextual MDP} (DCMDP)
  for which the initial state is stochastic, but the subsequent
  transition dynamics are deterministic and potentially unknown. This
  formulation encompasses but strictly generalizes the token-level MDP.

  \paragraph{RLHF with offline data}
In the classical RLHF formulation \citep{christiano2017deep,bai2022training,ouyang2022training}, we assume access to a dataset
$\cD_\pref=\crl*{(\taup,\taum)}$ of labeled preference data. Each pair
of trajectories (responses) $(\taup,\taum)$ represents a positive and
negative example; both trajectories begin from the
same initial state (prompt) $s_1$, and are generated by first sampling
a pair $(\tau,\tautil)$ via $\tau\sim{}\piref\mid{}s_1 $ and
$\tautil\sim{}\piref\mid{}s_1$ in the
underlying DCMDP $M$ (e.g., token-level MDP), and then ordering them
as $(\taup,\taum)$ based on a binary preference
$y\sim{}\bbP(\tau\psdgt{}\tautil\mid{}s_1)$. Here,
$\piref$ is a \emph{reference policy} (language model), which is
typically obtained via supervised fine-tuning, and the
\emph{preference} $y\sim{}\bbP(\tau\psdgt{}\tautil\mid{}s_1)$ is obtained from a human or AI
annotator. Following a standard assumption
\citep{christiano2017deep,rafailov2024direct,rafailov2024r}, we assume
that preferences follow the \emph{Bradley-Terry} model
\citep{bradley1952rank}: For trajectories $\tau$ and $\wt{\tau}$ both beginning with $s_1$,\loose 
\begin{align}
\label{eq:bt}
\bbP(\tau\psdgt\tautil\mid{}s_1) = \frac{\exp\prn*{r(\tau)}}{\exp\prn*{r(\tau)} + \exp\prn*{r(\wt{\tau})}}.
\end{align}

Based on the preference dataset $\cD_\pref$, the goal is to learn a
policy $\pihat$ with high reward. Following prior theoretical works on
RLHF, we consider a \emph{KL-regularized} reward
objective \citep{xiong2023gibbs,ye2024theoretical}, defined for a regularization parameter $\beta>0$, via
\neurips{\begin{small}}
\begin{align}
    \label{eq:kl_reward}
    J_\beta(\pi) \coloneqq &~ J(\pi) -
                             \beta\cdot\sum_{h=1}^{H}\En_{\pi}\brk*{D_{\mathrm{KL}}{(\pi(\cdot\mid{}s_h)\dmid\piref(\cdot\mid{}s_h))}}
                             = \E_{\pi} \left[r(\tau) - \beta \log \frac{\pi(\tau)}{\piref(\tau)}\right].
\end{align}
\neurips{\end{small}}%
We aim to compute a policy $\pihat$ such
that \neurips{$\max_{\pi}J_\beta(\pi) - J_\beta(\pihat) \leq \veps$}
\arxiv{\[
\max_{\pi}J_\beta(\pi) - J_\beta(\pihat) \leq \veps
\]}
for some small $\veps>0$. Such a guarantee means that 
$\pihat$ near-optimally maximizes reward, yet stays relatively close
to $\piref$ (as a function of $\beta$). The choice of $\beta>0$, which is important for safety and reliability, is typically viewed as a
domain specific hyperparameter \citep{tang2024understanding}. Our main
focus in this paper is the \emph{small-$\beta$} regime, which allows
$\pihat$ to meaningfully deviate from $\piref$ and generate
potentially novel responses. Notably, by taking $\beta$
sufficiently small, it is possible to translate suboptimality bounds
for the regularized reward into bounds for the unregularized reward (e.g., \citealp{zhu2023principled,zhan2023provable}).\loose

We refer to this setting as \emph{offline RLHF} because the algorithm
 relies only on the offline dataset $\cD_\pref$ for training, and
does not perform any active data collection.

\paragraph{Direct preference optimization (\dpo)}
Initial approaches to offline RLHF \citep{christiano2017deep,ouyang2022training} proceed by first
estimating a reward function $\wh{r}$ from $\cD_\pref$ using the Bradley-Terry model, then optimizing
an estimated version of the KL-regularized objective in
\cref{eq:kl_reward} using policy optimization methods like
PPO, i.e., \neurips{$\pihat\approx\argmax_{\pi\in\Pi}\E_{\pi} \brk[\big]{ r(\tau) - \beta \log\frac{\pi(\tau)}{\piref(\tau)}} $.}
\arxiv{\begin{align}
  \label{eq:ppo}
\pihat\approx\argmax_{\pi\in\Pi}\E_{\pi} \left[ \sum_{h=1}^H
                                        \left( \rhat(s_h,a_h) - \beta
                                        \log\frac{\pi(a_h \mid
                                        s_h)}{\piref(a_h \mid s_h)}
                                        \right) %
                                        \right].
\end{align}}
The starting point for our work is an alternative approach
introduced by \citet{rafailov2024direct}, Direct Preference
Optimization (\dpo). \dpo is motivated by a closed-form
solution for the policy that optimizes the KL-regularized objective in \cref{eq:kl_reward}, and condenses the two-step process above into a
single policy optimization objective, removing the need for reward
function estimation. Concretely, \dpo solves\footnote{We adopt the
  convention that the value of the \dpo objective is $+\infty$ if
  $\pi$ does not satisfy $\pi\ll{}\piref$.} \loose
\begin{align}
  \label{eq:dpo}
  \pihat=\argmin_{\pi\in\Pi}
\sum_{(\tau_+,\tau_-) \in \Dcal_\pref} - \log\left[\sigma\left( \beta\log\frac{\pi(\tau_+)}{\piref(\tau_+)} - \beta\log\frac{\pi(\tau_-)}{\piref(\tau_-)} \right) \right] 
\end{align}
for a user-specified policy class $\Pi$, where
$\sigma(x)\ldef{}\frac{\exp(x)}{1+\exp(x)}$ is the sigmoid function.

\subsection{Online Feedback and Exploration in RLHF}
\label{sec:online_background}
\dpo and other offline RLHF methods have achieved great
success in language model alignment, but
are fundamentally limited to behaviors that are well-supported by the initial
model $\piref$ and \arxiv{preference }data $\cD_\pref$. RLHF with \emph{online
  feedback} offers a promising approach to move beyond this
limitation by collecting feedback from responses sampled from the
model \emph{during training} \citep{guo2024direct}.\loose

Formally, the protocol proceeds in $T$
rounds. At each round $t$, we receive an initial state
$s_1\ind{t}$ and sample two responses $\tau\sim{}\pi\ind{t}\mid{}s_1 $
and $\wt{\tau}\sim{}\pi\ind{t}\mid{}s_1$ from the current policy
$\pi\ind{t}$. The prompts are then labeled as 
$(\taup\ind{t},\taum\ind{t})$ and added to the preference dataset via
$\cD_\pref\ind{t+1}\gets\cD_\pref\ind{t}\cup\crl{(\taup\ind{t},\taum\ind{t})}$,
which is then used to compute an updated policy $\pi\ind{t+1}$. In
practice, the prompts are typically labeled via human feedback or AI
feedback (e.g., a larger, more powerful language model \citep{guo2024direct,rosset2024direct}); we
assume the preferences $\bbP(\tau\ind{t}\psdgt{}\wt{\tau}\ind{t}\mid{}s_1\ind{t})
$ follow the Bradley-Terry model in \cref{eq:bt}.\loose

\subsection{The Necessity of Deliberate Exploration}

Existing approaches to online RLHF adapt offline techniques by
applying them iteratively. As an example, \emph{Online \dpo}
\citep{guo2024direct} proceeds as follows:\footnote{The closely
  related \emph{Iterative \dpo} approach \citep{xu2023some,snorkelai2024} proceeds in the same
  fashion, but samples a large batch of preference pairs from each
 policy $\pi\ind{t}$\arxiv{ instead of a single pair}, and performs fewer updates.\loose}\loose
\begin{enumerate}
\item Compute $\pi\ind{t}$ by solving the \dpo{} objective in
  \cref{eq:dpo} with the current preference dataset $\cD_\pref\ind{t}$.
\item Sample
  $\tau\ind{t},\wt{\tau}\ind{t}\sim{}\pi\ind{t}\mid{}s_1\ind{t}$, then
  label as $(\taup\ind{t},\taum\ind{t})$ and update  $\cD_\pref\ind{t+1}\gets\cD_\pref\ind{t}\cup\crl{(\taup\ind{t},\taum\ind{t})}$.\loose
\end{enumerate}

We refer to such an approach as \emph{passive exploration}, as the
responses are sampled directly from the policy $\pi\ind{t}$ without an
explicit mechanism to encourage diversity. The following proposition
shows that passive exploration is insufficient to discover novel behavior:
Unless the initial policy $\piref$ has good coverage, Online DPO can
fail to learn a near-optimal policy.
  \begin{proposition}[Necessity of deliberate exploration]
    \label{prop:dpo_failure}
    Fix $\beta\in(0,\tfrac{1}{8}\log(2))$, and consider the
    \emph{bandit} setting ($H=1$, $\cS=\emptyset$, and $\abs*{\cA}=2$). There exists\arxiv{ a
    reference policy} $\piref$ such that for all
    $T\leq{}\tfrac{1}{2}\exp(\frac{1}{8\beta})$, with constant
    probability, all of the policies
    $\pi\ind{1},\ldots,\pi\ind{T+1}$ produced by Online \dpo satisfy\loose
    \begin{align}
      \max_{\pi}J_\beta(\pi) - J_\beta(\pi\ind{t}) \geq{} \frac{1}{8}\quad\forall{}t\in\brk{T+1}.
    \end{align}
\end{proposition}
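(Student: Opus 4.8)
The plan is to build a two-armed bandit in which $\piref$ places only exponentially small mass on the reward-maximizing arm, so that with constant probability Online \dpo never samples that arm during the $T$ rounds, never observes an informative preference, and hence never moves off of $\piref$, which is itself $\Omega(1)$-suboptimal. Concretely, take $H=1$, $\cA=\crl{1,2}$, $\Pi=\Delta(\cA)$, rewards $r(1)=0$ and $r(2)=\tfrac12$, and let $\piref$ place mass $\mu\ldef\tfrac12 e^{-1/(8\beta)}$ on arm~$2$. By the Gibbs variational formula, $\max_{\pi}\Jb(\pi)=\beta\log\En_{a\sim\piref}\brk{e^{r(a)/\beta}}\ge\beta\log\prn*{\mu\,e^{1/(2\beta)}}=\tfrac12+\beta\log\mu=\tfrac38-\beta\log 2$, while $\Jb(\piref)=\En_{a\sim\piref}\brk{r(a)}-\beta\Dkl{\piref}{\piref}=\tfrac\mu2$. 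Since $\beta<\tfrac18\log 2$ forces $\mu<\tfrac14$ and $\beta\log 2<\tfrac{(\log 2)^2}{8}<\tfrac1{16}$, we get $\max_{\pi}\Jb(\pi)-\Jb(\piref)\ge\tfrac38-\tfrac1{16}-\tfrac18=\tfrac3{16}>\tfrac18$, so it is enough to show that with constant probability $\pi\ind{t}=\piref$ for every $t\in[T+1]$.

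The core step is to show that, on the event $\cE$ that all $2T$ responses sampled across rounds $1,\dots,T$ equal arm~$1$, one has $\pi\ind{t}=\piref$ for all $t\in[T+1]$. The observation driving this is that when the two sampled responses are the \emph{same} arm $a$, the resulting preference pair contributes $-\log\sigma\prn*{\beta\log\tfrac{\pi(a)}{\piref(a)}-\beta\log\tfrac{\pi(a)}{\piref(a)}}=-\log\sigma(0)=\log 2$ to the \dpo objective in \eqref{eq:dpo}, independently of $\pi$; hence a dataset of only such tied comparisons makes the \dpo objective a constant function of $\pi$, and Online \dpo, being initialized at $\piref$ and thus not moved by the vanishing gradient, returns $\piref$. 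An induction on $t$ then gives $\pi\ind{t}=\piref$ on $\cE$: $\pi\ind{1}=\piref$ since $\cD_\pref\ind{1}=\emptyset$; and if $\pi\ind{1}=\dots=\pi\ind{t}=\piref$ while rounds $1,\dots,t-1$ produced only arm-$1$ responses, then $\cD_\pref\ind{t}$ consists solely of tied comparisons, so $\pi\ind{t}=\piref$. Moreover, conditioned on $\pi\ind{1}=\dots=\pi\ind{t}=\piref$, round $t$ draws both responses \iid from $\piref$, so it is all-arm-$1$ with probability $(1-\mu)^2$; chaining over $t=1,\dots,T$ yields $\Pr\brk{\cE}=(1-\mu)^{2T}\ge 1-2T\mu\ge\tfrac12$, where the last inequality uses $T\le\tfrac12 e^{1/(8\beta)}=\tfrac1{4\mu}$ and therefore holds for every admissible $T$.

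Combining the two parts: on $\cE$, which has probability at least $\tfrac12$, every $\pi\ind{t}$ with $t\in[T+1]$ equals $\piref$, so $\max_{\pi}\Jb(\pi)-\Jb(\pi\ind{t})=\max_{\pi}\Jb(\pi)-\Jb(\piref)\ge\tfrac18$, which is the claim. The only delicate point---and the one I expect to be the main obstacle---is the reduction stating that on tied data Online \dpo outputs $\piref$: since the \dpo objective is then literally constant in $\pi$, its minimizer is not unique, and one must invoke the convention that Online \dpo is initialized at $\piref$ and performs no update without an informative gradient signal, equivalently that the reward model \dpo implicitly fits from tied data is constant and its KL-regularized optimum is exactly $\piref$. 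The remaining estimates are elementary.
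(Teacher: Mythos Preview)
Your proof is correct and takes essentially the same approach as the paper: construct a two-armed bandit where $\piref$ places exponentially small mass on the better arm, so that with constant probability every sampled pair is a tie, the \dpo objective is constant, and $\pi\ind{t}=\piref$ throughout. The paper's version differs only cosmetically (rewards $(1,\tfrac12)$ instead of $(0,\tfrac12)$, the finite class $\Pi=\{\piref,\pistarb\}$ instead of $\Delta(\cA)$), and handles the non-unique minimizer by exactly the convention you flag---it simply states that ``$\piref$ is a valid minimizer'' and chooses it.
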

That is, the sample complexity required by Online \dpo is
\emph{exponential} in $\frac{1}{\beta}$, which is unacceptable in the
small-$\beta$ regime; inspecting the proof, it is straightforward to see that the same
conclusion holds for Iterative \dpo and purely offline \dpo.
The idea behind \cref{prop:dpo_failure} is simple: If $\piref$ places
small probability mass on the optimal action, Online \dpo may fail to
ever explore this action until the number of iterations is
exponentially large. This reflects the intuition that in the
  small-$\beta$ regime, more deliberate
exploration is required to discover behaviors or capabilities not
already covered by $\piref$. 

\begin{remark}
  Various empirical works have suggested that offline \dpo can
  under-perform relative to vanilla RLHF with PPO due to a lack of
  on-policy sampling \citep{xiong2023gibbs,guo2024direct,dong2024rlhf,tang2024understanding}. \cref{prop:dpo_failure} highlights a conceptually
  distinct phenomenon, where both of the aforementioned algorithms (as
  well as online variants of \dpo) fail due to poor coverage from
  $\piref$, in spite of on-policy sampling.\loose
\end{remark}

\neurips{\section{\mbox{Online Exploration for Language Models:
      \alglong}}}
\arxiv{\section{\alglong}}
\label{sec:main}

We now present our main algorithm \algshort, which addresses the
limitations of existing alignment methods by augmenting \dpo with
active exploration. We first describe the algorithm and motivation
(\cref{sec:algorithm}), then present theoretical guarantees
(\cref{sec:theoretical}), and sketch the analysis
(\cref{sec:proof_sketch}). \loose

  \subsection{The \algshort Algorithm}
  \label{sec:algorithm}

\begin{algorithm}[ht]
\caption{\mainalg (\algshort)}
\label{alg:opt_dpo}
\begin{adjustbox}{max width=\textwidth}
\begin{minipage}{\linewidth}
\begin{algorithmic}[1]
  \Statex[0] \mbox{{\bfseries input:}
    Number of iterations $T$, KL-regularization coefficient $\beta>0$, optimism coefficient $\alpha>0$.}%
\State Initialize $\pi^\iter{1} \leftarrow \piref$, $\cD_\pref\ind{0}\gets\emptyset$.
\For{iteration $t = 1,2,\dotsc,T$}
    \State \textbf{Generate response pair
      $(\tau^\iter{t},\taut^\iter{t})$ via:} $s\ind{t}_1 \sim \rho$, $\tau^\iter{t} \sim \pi^\iter{t} \mid s\ind{t}_1$, and $\taut^\iter{t} \sim \piref \mid s\ind{t}_1$.
    \State \textbf{Label with preference:} Label $(\tau^\iter{t},\taut^\iter{t})$ as
      $(\tau^\iter{t}_+,\tau^\iter{t}_-)$ with preference $y\ind{t}\sim{}\Pr(\tau^\iter{t} \succ \taut^\iter{t})$.
    \State \textbf{Update preference data:} $\Dcal_\pref^\iter{t} \leftarrow \Dcal_\pref^\iter{t-1} \bigcup \{(\tau^\iter{t}_+,\tau^\iter{t}_-)\}$.
    \State \label{line:opt_dpo}%
    \textbf{Direct preference optimization with global optimism:}
    Calculate $\pi^\iter{t+1}$ via
    \begin{small}
      \begin{align*}
        \pi^\iter{t+1} \leftarrow \argmin_{\pi \in \Pi}
        \crl*{%
        \alpha\sum_{i=1}^{t}\log\pi(\tautil\ind{i}) -\sum_{(\tau_+,\tau_-) \in \Dcal_\pref^\iter{t}}\log\left[\sigma\left( \beta\log\frac{\pi(\tau_+)}{\piref(\tau_+)} - \beta\log\frac{\pi(\tau_-)}{\piref(\tau_-)} \right) \right]}.
      \end{align*}
    \end{small}
    \EndFor
    \State \textbf{return:}
    $\pihat=\argmax_{\pi\in\crl{\pi\ind{1},\ldots,\pi\ind{T+1}}}J_\beta(\pi\ind{t})$.\hfill\algcommentlight{Can
      compute using validation data.}
\end{algorithmic}
\end{minipage}
\end{adjustbox}
\end{algorithm}

\algshort (\alglong) is displayed in
\cref{alg:opt_dpo}. The algorithm takes as input a user-specified
policy class $\Pi$ and proceeds in almost the same fashion as Online
\dpo. For each step $t\in\brk{T}$, given the current policy
$\pi\ind{t}$ and an initial state $s_1\ind{t}$, the algorithm
begins by sampling a pair of trajectories
$\tau\ind{t}\sim\pi\ind{t}\mid{}s_1\ind{t}$ and $\tautil\ind{t}\sim\piref\mid{}s_1\ind{t}$, which
are labeled as $(\taup\ind{t},\taum\ind{t})$ based on the
preference feedback and used to update the preference dataset via
$\cD_\pref\ind{t+1}\gets\cD_\pref\ind{t}\cup\crl{(\taup\ind{t},\taum\ind{t})}$. The
most important step is \cref{line:opt_dpo}, which updates the policy
to $\pi\ind{t+1}$ via the following \emph{optimistic} variant of the
\dpo objective:\loose
\begin{small}
  \begin{align}
    \label{eq:opt_dpo}
        \pi^\iter{t+1} \leftarrow \argmin_{\pi \in \Pi}
    \crl*{%
    \alpha\sum_{i=1}^{t}\log\pi(\tautil\ind{i}) -\sum_{(\tau_+,\tau_-) \in \Dcal_\pref^\iter{t}}\log\left[\sigma\left( \beta\log\frac{\pi(\tau_+)}{\piref(\tau_+)} - \beta\log\frac{\pi(\tau_-)}{\piref(\tau_-)} \right) \right]}.
  \end{align}\end{small}%
Here, $\alpha\geq{}0$ is an \emph{optimism parameter}; for $\alpha=0$,
the algorithm nearly equivalent to Online \dpo, except that we
  sample $\tau\ind{t}\sim\pi\ind{t}\mid{}s_1\ind{t}$ and $\tautil\ind{t}\sim\piref\mid{}s_1\ind{t}$ instead of
  sampling $(\tau\ind{t},\tautil\ind{t})\sim\pi\ind{t}\mid{}s_1\ind{t}$ at each iteration.
As we will see now, for
$\alpha>0$, the term
\begin{equation}
  \label{eq:bonus}
  \alpha\sum_{i=1}^{t}\log\pi(\tautil\ind{i})
\end{equation}
in \cref{eq:opt_dpo} encourages the policy to behave
\emph{optimistically}, and produce diverse
responses $\tau$.

\paragraph{Motivation}
\emph{Optimism in the face of uncertainty} is a widely used technique in reinforcement learning
theory
\citep{agarwal2019reinforcement,lattimore2020bandit,foster2023foundations}. In
its most standard form, the optimism principle is usually stated as follows:
\emph{One should explore by choosing their actions according to the
  most optimistic view of the world, given all of the data that has
  already been observed.} The idea is that if we choose a
decision according to this principle, one of two good things can
happen: (i) the optimistic view is correct, and we receive large
reward; or (ii) the optimistic view is incorrect, but we receive useful
information that will help to better estimate the state of the world
in subsequent iterations.

Optimism is typically implemented by
directly estimating rewards, and it is not obvious at first glance why
\cref{eq:bonus} can even be interpreted as a form of optimism. To
understand, this consider a log-linear policy $\pi_f(a_h\mid{}s_h) =
\piref(a_h\mid{}s_h)\exp\prn*{\frac{f(s_h,a_h)-V_f(s_h)}{\beta}}$, where $V_f(s_h)
\coloneqq ~ \beta \log \sum_{a_h \in \Acal} \piref(a_h \mid s_h)
e^{\nicefrac{f(s_h,a_h)}{\beta}}$. Define $\brk*{\Tcal_\beta
  f}(s_h,a_h) \coloneqq ~ r(s_h,a_h) + \E\left[ V_f(s_{h+1}) \mid s_h,
a_h\right]$ as the KL-regularized Bellman operator \citep{ziebart2008maximum,ziebart2010modeling}. We observe, generalizing
\citet{rafailov2024r}, that for any DCMDP, for all trajectories
$\tau=(s_1,a_1),\ldots,(s_H,a_H)$, 
\begin{equation}
  \label{eq:reward_v_general}
  \beta\log\frac{\pi_f(\tau)}{\piref(\tau)} = r(\tau)
      -V_f(s_1) + \sum_{h=1}^H 
\left(f(s_h,a_h) - \brk*{\Tcal_\beta f}(s_h,a_h)\right).
\end{equation}
That is, the policy
can be viewed as maintaining an internal model for the trajectory reward, up to (i) a constant offset $V_f(s_1)$ that depends only on
$s_1$; and (ii) the sum of \emph{Bellman errors} $\left(f(s_h,a_h) -
  \brk*{\Tcal_\beta f}(s_h,a_h)\right)$. The optimal
KL-regularized policy $\pistarb=\argmax_{\pi}J_\beta(\pi)$ satisfies
$\pistarb=\pi_{\Qstarb}$, where $\Qstarb$/$\Vstarb$ denote KL-regularized value functions (see \cref{sec:regularized} for formal
definitions and details), and has zero Bellman error ($\Qstarb=\brk{\cT_\beta\Qstarb}$), so that\loose
  \begin{equation}
    \label{eq:reward_v}
    \beta\log\frac{\pistarb(\tau)}{\piref(\tau)} = r(\tau)
    -\Vstar_\beta(s_1)\quad\forall\tau.
  \end{equation}
In other words, $\pistarb$ implements an accurate internal reward model.
From this viewpoint:
\begin{enumerate}
\item The standard \dpo term in \cref{eq:opt_dpo} encourages the
  policy $\pi$ to build an accurate internal model for rewards
  under the Bradley-Terry model; this can be viewed as a form of
  \emph{implicit $Q^{\star}$-approximation}, since we are
    implicitly minimizing the Bellman errors in \cref{eq:reward_v_general}.
\item In light of \Cref{eq:reward_v} it is natural to approximate
  $V_{\beta}^{\pi}(s_1)$, the regularized value function for $\pi$, by
  $r(\tau) - \beta \log \frac{\pi(\tau)}{\piref(\tau)}$. Using this
  approximation, the first term in \Cref{eq:opt_dpo} biases the policy
  toward a large value function such that $\Vstarb \lesssim
  V^{\pi}_{\beta}$, implementing \emph{implicit (global) optimism} in the face of
  uncertainty (up to an inconsequential difference in on-policy
  rewards). The fact that this suffices to drive exploration is quite subtle, and leverages
  non-trivial properties of the KL-regularized MDP, including the fact
  that \cref{eq:reward_v_general} holds on a \emph{per-trajectory} basis.
\end{enumerate}

\paragraph{On the sampling policy}
As remarked above, another difference between \algshort and
online/iterative \dpo is that instead of sampling the preference
pairs via $(\tau\ind{t},\tautil\ind{t})\sim\pi\ind{t}$, we sample
$\tau\ind{t}\sim\pi\ind{t}\mid{}s_1\ind{t}$ and
$\tautil\ind{t}\sim\piref\mid{}s_1\ind{t}$. This small change is important: it is possible to show that in general, sampling
$(\tau\ind{t},\tautil\ind{t})\sim\pi\ind{t}$ can lead to degenerate
behavior in which the algorithm fails to adequately explore in the
small-$\beta$ regime, even when $\piref$ itself has good coverage.

While we use $\tautil\ind{t}\sim\piref\mid{}s_1\ind{t}$ in
\cref{alg:opt_dpo}, \algshort is significantly more general, and leads
to provable guarantees for any fixed sampling policy
$\tautil\ind{t}\sim\pitil\mid{}s_1\ind{t}$, as well as certain
data-dependent sampling schemes (e.g., sampling
$\tautil\ind{t}\sim\unif(\pi\ind{1},\ldots,\pi\ind{t})\mid{}s_1\ind{t}$);
different choices may have different tradeoffs and benefits in
practice. A general version of \algshort which leaves the sampling
distribution for $\tautil\ind{t}$ as a free parameter is given in
\cref{sec:general_alg} (\cref{alg:general}).

\arxiv{\paragraph{Practicality}}
\neurips{\textbf{Practicality.}~~}
\algshort is highly practical, and can easily be incorporated into
existing language modeling and RLHF pipelines as a drop-in replacement
for Online \dpo (a one-line change to existing code). The theoretical
guarantees for the algorithm continue to hold under standard modifications such as (i) incorporating additional preference data
from $\piref$ or another reference policy; and (ii) performing a
smaller number of iterations, but collecting a larger batch of
preference data from $\pi\ind{t}$ (as in Iterative \dpo).\loose

  \subsection{Theoretical Guarantees}
  \label{sec:theoretical}

To provide sample complexity guarantees for \algshort, we make some
standard statistical assumptions. The first\arxiv{ assumption}
asserts that the policy class $\Pi$ is powerful enough to represent
the optimal KL-regularized policy.\loose
\begin{assumption}[Policy realizability]
  \label{ass:realizability}
  The policy class $\Pi$ satisfies $\pistarb\in\Pi$.
\end{assumption}
Policy realizability is a minimal assumption for sample-efficient
reinforcement learning
\citep{agarwal2019reinforcement,lattimore2020bandit,foster2023foundations};
through \cref{eq:reward_v}, it is equivalent to a form of reward/value realizability.
For language modeling, $\Pi$ will typically correspond to a class of language
models with fixed architecture but variable weights. Next, we make a regularity assumption on the
policies in $\Pi$ \citep{rosset2024direct}.
\begin{assumption}[Bounded density ratios]
  \label{ass:vmax}
For all $\pi\in\Pi$ and trajectories
  \neurips{$\tau$}\arxiv{$\tau=(s_1,a_1),\ldots,(s_H,a_H)$},
  \begin{align}
    \abs*{\log\prn*{\frac{\pi(\tau)}{\piref(\tau)}}} \leq\frac{\Vmax}{\beta}.
         \end{align}
\end{assumption}
\arxiv{Note that }$\Vmax$ is measurable and controllable in practice; our
guarantees scale polynomially with this parameter. For log-linear policies where
  $\pi(a\mid{}s)\propto\exp\prn*{\nicefrac{f(s,a)}{\beta}}$, we 
  expect $\Vmax\approxleq{}\Rmax$.

To quantify the rate at which the algorithm converges to an
optimal policy, we require an \emph{exploration condition}, which
limits the amount of times the algorithm can be surprised by
substantially new state distributions; such assumptions are necessary
for reinforcement learning with general function approximation
\citep{jiang2017contextual,jin2021bellman,xie2023role}. Our main
result is stated in terms of a condition known as \emph{coverability} \citep{xie2023role},
but more general guarantees are given in
\cref{sec:proofs_main}. Define $d^{\pi}(\tau)\ldef{}\bbP_{\pi}((s_1,a_1),\ldots,(s_H,a_H)=\tau)$.%
\begin{definition}[Coverability]
\label{def:coverability}
  The trajectory-level coverability coefficient is given by
  \begin{align}
    \Ccov(\Pi)
    \ldef{} \inf_{\mu\in\Delta((\cS\times\cA)^{H})}\sup_{\tau\in(\cS\times\cA)^{H}}\sup_{\pi\in\Pi}\frac{d^{\pi}(\tau)}{\mu(\tau)}.
  \end{align}
  \neurips{\vspace{-10pt}}
\end{definition}%
\cref{ass:vmax} implies a trivial bound of $\Ccov(\Pi)\approxleq{}\exp\prn[\big]{\frac{\Vmax}{\beta}}$. 
Indeed, $\Ccov(\Pi)$ measures coverage with respect to the best possible distribution $\mu$, while the bound implied by \cref{ass:vmax} takes $\mu = \piref$, so we expect $\Ccov(\Pi) \ll \exp(\Vmax/\beta)$ when $\piref$ does not provide adequate coverage on its own (e.g., the example in \cref{prop:dpo_failure}). This is precisely the setting where we expect deliberate exploration to be helpful. We also note that there is a trivial bound $\Ccov(\Pi) \leq |\cA|^H$,  but because coverability depends on the structure of the (restricted) class $\Pi$, the value can be significantly smaller in general (e.g., if policies $\pi \in \Pi$ are highly correlated or stochastic). %

The
main sample complexity guarantee for \algshort is as follows.\loose
\begin{theorem}[Sample complexity bound for \algshort]
  \label{thm:main}
  Suppose that \cref{ass:realizability,ass:vmax} hold.
  For any $\beta>0$ and $T\in\bbN$, if we set
    $\alpha
  =c\cdot{}\frac{\beta}{(\Vmax+\Rmax)e^{2\Rmax}}\cdot\sqrt{\frac{
      \log(\abs{\Pi}T\delta^{-1})}{T\cdot{}\Ccov(\Pi)}}$
  for an absolute constant $c>0$,
then \cref{alg:opt_dpo} ensures that
  with probability at least $1-\delta$,\footnote{Exponential dependence on the reward range $\Rmax$ is
    an intrinsic feature of the Bradley-Terry model, and can be found
    in all prior sample complexity guarantees for this framework,
    offline and online
    \citep{das2024provably,rosset2024direct}.}
  \begin{align*}
    \Jb(\pistarb) - \Jb(\pihat)
    \approxleq
    (\Vmax+\Rmax)e^{2\Rmax}\cdot\sqrt{\frac{\Ccov(\Pi)\log(\abs{\Pi}T\delta^{-1})\log^2(T)}{T}}.
  \end{align*}
\end{theorem}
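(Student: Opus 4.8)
The plan is to run the standard optimistic-RL argument---\emph{optimism}, \emph{in-sample estimation-error control}, and a \emph{coverability-based pigeonhole}---but carried out entirely inside the KL-regularized MDP, using the per-trajectory identity \eqref{eq:reward_v_general} in place of the usual performance-difference lemma. For each $\pi\in\Pi$ set $f^{\pi}(s_h,a_h)\ldef\beta\log\!\big(\pi(a_h\mid s_h)/\piref(a_h\mid s_h)\big)$, so that $\pi=\pi_{f^{\pi}}$, and let $\cE^{\pi}\ldef f^{\pi}-\cT_\beta f^{\pi}$ be its KL-Bellman error (below we abbreviate $\cE^{(t+1)}\ldef\cE^{\pi\ind{t+1}}$); by \cref{ass:realizability}, $\cE^{\pistarb}\equiv 0$. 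Reading \eqref{eq:reward_v_general} both along $\tautil\sim\piref$ and along $\tau\sim\pi$, and combining with \eqref{eq:reward_v} for $\pistarb$, I would obtain, for each $t$,
\[
  \Jb(\pistarb)-\Jb(\pi\ind{t+1})
  =\underbrace{\beta\,\En_{\tautil\sim\piref}\brk[\Big]{\log\tfrac{\pi\ind{t+1}(\tautil)}{\pistarb(\tautil)}}}_{\text{(optimism gap)}}
  +\underbrace{\big(\En_{\pi\ind{t+1}}-\En_{\piref}\big)\brk[\Big]{\textstyle\sum_{h}\cE^{(t+1)}(s_h,a_h)}}_{\text{(Bellman error of }\pi\ind{t+1}\text{)}},
\]
where $\En_{\piref}$ averages over $s_1\sim\rho$ and $\tautil\sim\piref\mid s_1$. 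Since $\pihat$ is the best of $\pi\ind{1},\dots,\pi\ind{T+1}$, it suffices to bound $\tfrac1T\sum_{t=1}^{T}[\Jb(\pistarb)-\Jb(\pi\ind{t+1})]$, so I would sum the two terms above over $t$.

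For the optimism gap: since $\pi\ind{t+1}$ minimizes \eqref{eq:opt_dpo} and $\pistarb\in\Pi$ is feasible, the optimality inequality rearranges to $\alpha\sum_{i\le t}\log\tfrac{\pi\ind{t+1}(\tautil\ind{i})}{\pistarb(\tautil\ind{i})}\le \Lhat_t(\pi\ind{t+1})-\Lhat_t(\pistarb)$, where $\Lhat_t$ is the \dpo Bradley--Terry log-likelihood after round $t$. The crucial structural fact is that $\pistarb$ \emph{realizes} the true Bradley--Terry model: writing the implicit preference model of $\pi\ind{t+1}=\pi_{f^{(t+1)}}$ on a pair $(\tau,\tautil)$ with common prompt, the prompt-dependent offsets in \eqref{eq:reward_v_general} cancel, leaving $\sigma\big((\textstyle\sum_h\cE^{(t+1)}(\tau)-\sum_h\cE^{(t+1)}(\tautil))+(r(\tau)-r(\tautil))\big)$, which equals the true model exactly when $\cE^{(t+1)}\equiv0$. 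Hence $\Lhat_t$ is an MLE with population maximizer $\pistarb$, and the standard log-loss/Hellinger deviation bound (union over $\Pi$) gives $\Lhat_t(\pistarb)-\Lhat_t(\pi\ind{t+1})\ge c\sum_{i\le t}D^2_{\mathsf{H};i}-c'\log(\abs{\Pi}T/\delta)$, where $D^2_{\mathsf{H};i}$ is the squared Hellinger distance between $\pi\ind{t+1}$'s and the true Bernoulli preference model at prompt $s_1\ind{i}$. Combined with the optimality inequality, and using that the $(s_1\ind{i},\tautil\ind{i})$ are i.i.d.\ (because $\tautil\ind{i}$ is drawn from the \emph{fixed} $\piref$, not from $\pi\ind{i}$), a uniform concentration bound over $\Pi$ converts the empirical bonus into population optimism gaps, $\beta\,\En_{\tautil\sim\piref}[\log\tfrac{\pi\ind{t+1}(\tautil)}{\pistarb(\tautil)}]\approxleq \tfrac{\beta}{\alpha t}\log(\abs{\Pi}T/\delta)+\Vmax\sqrt{\tfrac{\log(\abs{\Pi}T/\delta)}{t}}$, which sums to $\approxleq \tfrac{\beta}{\alpha}\log(\abs{\Pi}T/\delta)\log T+\Vmax\sqrt{T\log(\abs{\Pi}T/\delta)}$.

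For the Bellman-error term, note it equals $\sum_t\En_{\pi\ind{t+1}}[\sum_h\cE^{(t+1)}(s_h,a_h)-\bar\cE^{(t+1)}(s_1)]$, with $\bar\cE^{(t+1)}(s_1)\ldef\En_{\tautil\sim\piref\mid s_1}[\sum_h\cE^{(t+1)}(\tautil)]$, so by Cauchy--Schwarz it is at most $\sqrt{T\sum_t\En_{d^{\pi\ind{t+1}}}[g\ind{t+1}]}$ for $g\ind{t+1}\ldef(\sum_h\cE^{(t+1)}-\bar\cE^{(t+1)})^2\in[0,O((\Vmax+\Rmax)^2)]$. Converting the Bernoulli Hellinger distances from Step~2 into squared errors in the implicit rewards (which costs a factor $e^{O(\Rmax)}$---the source of the $e^{2\Rmax}$ in the theorem, cf.\ its footnote) and applying Jensen over $\tautil$ gives an \emph{in-sample} bound $\sum_{i\le t}\En_{d^{\pi\ind{i}}}[g\ind{t+1}]\approxleq e^{O(\Rmax)}\polylog(\abs{\Pi}T/\delta)$; the coverability pigeonhole lemma \citep{xie2023role} then promotes ``small on every past roll-in'' to ``small on $\pi\ind{t+1}$'' at the cost of $\Ccov(\Pi)$ and a $\log T$ factor, yielding $\sum_t\En_{d^{\pi\ind{t+1}}}[g\ind{t+1}]\approxleq e^{O(\Rmax)}\Ccov(\Pi)\polylog(\abs{\Pi}T/\delta)$ and hence a Bellman-error contribution $\approxleq (\Vmax+\Rmax)e^{O(\Rmax)}\sqrt{\Ccov(\Pi)T\log(\abs{\Pi}T/\delta)}\,\log T$. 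Adding this to the optimism contribution and choosing $\alpha$ to make $\tfrac{\beta}{\alpha}\log(\abs{\Pi}T/\delta)\log T$ comparable to the (then $\alpha$-independent) Bellman-error contribution gives exactly the stated $\alpha$, and dividing by $T$ gives the claimed rate.

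The delicate point---and the step I expect to be the main obstacle---is hidden in ``in-sample bound $\approxleq e^{O(\Rmax)}\polylog$'' above. Carrying the MLE bound through honestly, its right-hand side is $e^{O(\Rmax)}\big(\log(\abs{\Pi}T/\delta)+\alpha\sum_{i\le t}\log(\pistarb(\tautil\ind{i})/\pi\ind{t+1}(\tautil\ind{i}))\big)$, and bounding the second term naively (e.g.\ via \cref{ass:vmax}, which gives only $O(t\Vmax/\beta)$) makes it $\Theta(\sqrt{T})$-sized and degrades the final rate to $T^{-1/4}$. One must instead show that this bonus gap is poly-logarithmic---i.e.\ that $\pi\ind{t+1}$ is never \emph{much more} optimistic on the reference trajectories than $\pistarb$---which does not follow from optimality alone and appears to require either an induction coupling the per-round estimation error to the cumulative regret, or a population-level refinement of the MLE bound in which the empirical bonus gap is replaced by the value gap that is itself being controlled. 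Making this self-bounding argument close, while ensuring no stray factor of $1/\beta$ survives into the final bound, is the crux; the remaining ingredients---the identity \eqref{eq:reward_v_general}, Bradley--Terry MLE concentration, and the coverability lemma---are by now standard.
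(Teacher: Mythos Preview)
Your regret decomposition is equivalent to the paper's central decomposition (\cref{lem:regret_decomp}), and the overall plan---optimism plus off-policy squared error controlled via coverability---matches. The obstacle you flag in the last paragraph is real for the way you have \emph{separated} the argument, but the paper's resolution is simpler than the self-bounding induction you anticipate: it simply never separates the two terms.

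Concretely, rearranging the MLE deviation bound together with the optimality inequality for $\pi\ind{t+1}$ gives, in your notation,
\[
  \alpha\sum_{i\le t}\log\frac{\pi\ind{t+1}(\tautil\ind{i})}{\pistarb(\tautil\ind{i})}
  \;+\; c\sum_{i\le t} D_{\mathsf H;i}^{2}
  \;\le\; c'\log(\abs{\Pi}T/\delta).
\]
The first summand can be very negative---that is precisely what optimism means---so the bonus gap need not be polylog on its own. After Hoeffding on the bonus term (this is where the i.i.d.\ property of the $\tautil\ind{i}\sim\piref$ enters) and converting Hellinger to squared reward-model error at the $e^{O(\Rmax)}$ cost you noted, this becomes the \emph{joint} population bound (the paper's \cref{lem:conc_main})
\[
  \alpha\,\En_{\piref}\Bigl[\log\tfrac{\pi\ind{t+1}}{\pistarb}\Bigr]
  + \kappa\,\En_{\bmu\ind{t+1}}\bigl[(f_{\pi\ind{t+1}}-f_{\pistarb})^{2}\bigr]
  \;\lesssim\; \frac{\log(\abs{\Pi}T/\delta)}{t} + \frac{\alpha}{\beta}\Vmax\sqrt{\frac{\log(\abs{\Pi}T/\delta)}{t}},
\]
with $\kappa=\Theta\bigl(((\Vmax+\Rmax)e^{2\Rmax})^{-2}\bigr)$. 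On the regret side, instead of Cauchy--Schwarz the paper uses AM--GM with a free parameter $\eta$ (plus the coverability/SEC bound), yielding $\frac{\Ccov(\Pi)}{\eta T}$ plus $\frac{\eta(t-1)}{2}$ times the same off-policy squared error. Choosing $\eta=\frac{\beta\kappa}{\alpha T}$ keeps this coefficient at most $\beta\kappa/\alpha$ for every $t$, so each summand of the regret is at most $\beta/\alpha$ times the display above---the ``bonus gap'' you worried about has already been absorbed, with opposite sign, into the optimism term. Summing over $t$ and optimizing $\alpha$ gives the claimed rate; no induction or self-bounding is required.
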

Let us discuss some key features of this result.

\arxiv{\paragraph{Statistical efficiency}}
\neurips{\textbf{Statistical efficiency.}~~}
\cref{thm:main} shows that
  \algshort converges to a near-optimal policy with sample complexity
  polynomial in the coverability coefficient $\Ccov(\Pi)$; in
  particular, to learn an $\veps$-optimal policy
  $T=\bigoht\prn*{\frac{\Ccov(\Pi)\log\abs{\Pi}}{\veps^2}}$ episodes
  are required.\footnote{We state the result for finite
  classes\arxiv{ ($\log\abs{\Pi}<\infty$)} to simplify presentation,
  following the standard in RL theory
  \citep{agarwal2019reinforcement,foster2023foundations}\loose\arxiv{;
  \arxiv{the result}\neurips{it}\arxiv{ readily} extends to infinite classes through standard arguments.\loose}} By scaling with $\Ccov(\Pi)$, \cref{thm:main} can be viewed as a strict improvement over
offline RLHF \citep{zhu2023principled,zhan2023provable}, as well as prior works on online RLHF that rely on passive exploration
\citep{xiong2023gibbs,gao2024rebel,chang2024dataset}. In particular,
these works scale with \emph{coverage parameters} for $\piref$, the simplest of
which take the form $\Cconc(\Pi)
    \ldef{}
    \sup_{\tau\in(\cS\times\cA)^{H}}\sup_{\pi\in\Pi}\frac{\pi(\tau)}{\piref(\tau)}$.
Under \cref{ass:vmax}, we have that $\Cconc(\Pi) =
    \exp(\Vmax/\beta)$ which, as discussed above, upper bounds
    $\Ccov(\Pi)$ but can be much larger when $\piref$ has poor
    coverage. The dependence on $\Ccov(\Pi)$ in \cref{thm:main}
reflects the fact
that \algshort can explore responses not covered by
$\piref$.\footnote{
    Many works consider more general
  notions of coverage that account
  for reward function structure, in the same vein as SEC, as well as
  single-policy variants; both can be problematic for similar reasons.} 

In
\cref{sec:proofs_main}, we give a generalization of \cref{thm:main} (\cref{thm:main_general})
which scales with a more comprehensive exploration parameter, the \emph{Sequential
  Extrapolation Coefficient} (SEC), matching (for DCMDPs) the most general results
in prior work on exploration in RLHF, but with a significantly
simpler algorithm \citep{chen2022human,wang2023rlhf,ye2024theoretical}. The SEC also leads to polynomial sample
complexity for tabular and linear MDPs, a common setting considered in
prior work
\citep{xu2020preference,novoseller2020dueling,pacchiano2021dueling,wu2023making,zhan2023query,das2024provably}. See
\cref{sec:related} for a detailed comparison. We emphasize that \cref{thm:main} applies to any DCMDP (including but not limited to the token-level MDP), even if the
    dynamics are unknown; as such, the result meaningfully extends
    beyond the \emph{contextual bandit} formulation of RLHF found in many prior works \citep{zhu2023principled,xiong2023gibbs,das2024provably,ye2024theoretical}.\loose

\begin{remark}[Nontriviality and role of $\beta$]
By avoiding explicit
dependence on $\exp(\frac{1}{\beta})$, \algshort provably improves
upon Online \dpo when $\beta$ is small; per \cref{prop:dpo_failure}, the latter must pay
$\exp(\frac{1}{\beta})$ even when $\Ccov(\Pi)\leq{}2$. This improvement stems from the fact that KL-regularization does not automatically
  lead to exploration or grant meaningful control of coverability in the
small-$\beta$ regime.\loose

To highlight the importance of the small-$\beta$ regime, we note that
by taking $\beta=\poly(1/T)$, \cref{thm:main} immediately leads to
bounds on the \emph{unregularized} reward $J(\pi)$. This would not be
possible if the sample complexity guarantee explicitly scaled with $\exp(\frac{1}{\beta})$.

\end{remark}

\arxiv{\paragraph{Computational efficiency}}
\neurips{\textbf{Computational efficiency.}~~}
Most prior approaches to RL with general function approximation that
incorporate global forms of optimism similar to \cref{eq:bonus} 
\citep{jiang2017contextual,sun2019model,du2021bilinear,jin2021bellman,xie2023role,liu2024maximize}
are known to be computationally intractable to implement in general
\citep{dann2018oracle}, and involve solving non-convex,
non-differentiable constrained optimization problems. Thus, it is natural to
ask why our result is not too
good to be true. The answer is that even though the objective in
\cref{eq:opt_dpo} is simple, it is still non-convex in general, even if one employs
log-linear policies of the form \neurips{$\pi_\theta(a\mid{}s)
  \propto\exp\prn[\big]{\frac{1}{\beta}\tri*{\phi(s,a),\theta}}$} 
\arxiv{\begin{equation}
  \label{eq:log_linear}
\pi_\theta(a\mid{}s) \propto\exp\prn*{\frac{1}{\beta}\tri*{\phi(s,a),\theta}}
\end{equation}}%
for $\theta\in\bbR^{d}$. This non-convexity is precisely caused by the
presence of the optimistic term
\cref{eq:bonus}; \cref{thm:main} is valid for all choices of $\beta>0$,
but we expect that the optimization problem in \cref{eq:opt_dpo} will
become more difficult to solve as $\beta\to{}0$.\footnote{Interestingly, one can show that for an
  appropriate \arxiv{choice of }$\alpha$, our objective converges to the
  standard global optimism objective \citep{jin2021bellman} under
  \arxiv{\cref{eq:log_linear}}\neurips{this parameterization} as $\beta\to{}0$. Conversely for
  very large $\beta$ ($\beta\approxgeq\Rmax$), the objective becomes
  convex. We leave a dedicated analysis of the optimization landscape
  for future work.\loose
} In light of this, our
work can be viewed as using the
unique structure of the KL-regularized MDP formulation and
deterministic contextual MDP (DCMDP) to derive an optimistic
exploration objective which---while
still non-convex---is differentiable and directly amenable to
a practical implementation with language models.\arxiv{\footnote{In
    particular, working in a DCMDP removes the notorious
    \emph{double-sampling} issue, which is why it suffices to estimate
    the rewards using the \dpo objective instead more complex
    objectives \citep{jiang2017contextual,jin2021bellman} that
    directly aim to minimizes Bellman errors.}}
This technique is
novel even in the context of reward-driven (as opposed to
preference-based) RL, and we expect it to find broader use.

\arxiv{\paragraph{Additional remarks}}
\neurips{\textbf{Additional remarks.}~~}
Separately, we mention in
passing that we believe it should be possible to derive tighter sample
complexity bounds for large $\beta>0$, in the vein of
\citet{tiapkin2023fast}.\loose

\begin{remark}[Limitations of the \dpo objective]
  Our results are limited to MDPs with deterministic dynamics and
  stochastic start state (DCMDPs). We believe that without further
  modifications, the \dpo objective is not suitable for stochastic
  dynamics, as \cref{eq:reward_v} no longer holds on a per-trajectory
  basis.\loose
  \end{remark}
\begin{remark}[Trajectory coverability]
  A related point concerns trajectory coverability.
  In the standard (as opposed to preference-based) RL setting, it is
  possible to achieve guarantees that scale with \emph{state-action coverability}
  \citep{xie2023role}\neurips{, defined via $    C_{\mathsf{st}}(\Pi) \ldef{}
    \inf_{\mu\in\Delta(\cS\times\cA)}\sup_{s\in\cS,a\in\cA}\sup_{\pi\in\Pi}\frac{d^{\pi}(s,a)}{\mu(s,a)}$,}\arxiv{,
    defined via:
  \[
    C_{\mathsf{st}}(\Pi) \ldef{}
    \inf_{\mu\in\Delta(\cS\times\cA)}\sup_{s\in\cS,a\in\cA}\sup_{\pi\in\Pi}\frac{d^{\pi}(s,a)}{\mu(s,a)},
  \]}
  where $d^{\pi}(s,a)\ldef{}\bbP_{\pi}\prn*{s_h=s,a_h=a}$. In general, we can have
  $C_{\mathsf{st}}(\Pi)\ll\Ccov(\Pi)$. We expect that
    trajectory-level coverability is necessary for algorithms based on the
    \dpo objective.
  Nonetheless, the difference is immaterial %
  for
  language modeling in the token-level MDP, which has
  $C_{\mathsf{st}}(\Pi)=\Ccov(\Pi)$. \loose
\end{remark}

\subsection{Proof Sketch for \creftitle{thm:main}}
\label{sec:proof_sketch}
\newcommand{\obj}{\Psi_{\texttt{XPO}}}%
Our starting point for the proof of \cref{thm:main} is the following
regret decomposition, which is proven as a consequence of the implicit
$Q^{\star}$-approximation result in \cref{eq:reward_v}.
  \begin{restatable}[Central regret decomposition]{lemma}{central}
    \label{lem:regret_decomp}
    For any pair of policies $\pi$ and $\nu$, it holds that
\neurips{    \begin{small}}
    \begin{align}
        J_\beta(\pi^\star_\beta) - J_\beta(\pi) = &~ \E_{\tau \sim
          \nu} \left[\beta\log\pi(\tau)\right] - \E_{\tau \sim \nu}
        \left[\beta\log\pi^\star_\beta(\tau)\right] \label{eq:decomp1}
        \\
        &~ + \E_{\tau \sim \pi} \left[
          \beta\log\frac{\pi(\tau)}{\piref(\tau)} - r(\tau) \right] -
        \E_{\tau\sim\nu} \left[\beta\log\frac{\pi(\tau)}{\piref(\tau)}
          - r(\tau)\right].\label{eq:decomp2}
      \end{align}%
    \neurips{\end{small}}%
  \end{restatable}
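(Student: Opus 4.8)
The plan is to obtain the identity by direct substitution using the implicit $Q^{\star}$-approximation formula \eqref{eq:reward_v} together with the variational form of the KL-regularized objective in \eqref{eq:kl_reward}. No probabilistic estimation is involved, so the whole argument is a short chain of equalities; the only ``idea'' is to use \eqref{eq:reward_v} twice, once to rewrite $J_\beta(\pi^\star_\beta)$ and once to re-express $\Vstar_\beta(s_1)$ under an arbitrary trajectory law.

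First I would put $J_\beta(\pi^\star_\beta)$ in a convenient form. Applying \eqref{eq:kl_reward} with $\pi=\pi^\star_\beta$ gives $J_\beta(\pi^\star_\beta)=\E_{\tau\sim\pi^\star_\beta}\brk*{r(\tau)-\beta\log\tfrac{\pi^\star_\beta(\tau)}{\piref(\tau)}}$, and substituting \eqref{eq:reward_v} inside the expectation collapses the bracket to $\Vstar_\beta(s_1)$, so $J_\beta(\pi^\star_\beta)=\E_{\tau\sim\pi^\star_\beta}\brk*{\Vstar_\beta(s_1)}=\E_{s_1\sim\rho}\brk*{\Vstar_\beta(s_1)}$. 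Since $\Vstar_\beta(s_1)$ depends on a trajectory only through its initial state, and every policy (in particular $\nu$) induces $s_1\sim\rho$, this also equals $\E_{\tau\sim\nu}\brk*{\Vstar_\beta(s_1)}$. Reading \eqref{eq:reward_v} in the other direction, $\Vstar_\beta(s_1)=r(\tau)-\beta\log\tfrac{\pi^\star_\beta(\tau)}{\piref(\tau)}$ pointwise in $\tau$, hence $J_\beta(\pi^\star_\beta)=\E_{\tau\sim\nu}\brk*{r(\tau)-\beta\log\tfrac{\pi^\star_\beta(\tau)}{\piref(\tau)}}$. Combining with \eqref{eq:kl_reward} applied to $\pi$ itself yields
\[
J_\beta(\pi^\star_\beta)-J_\beta(\pi)=\E_{\tau\sim\nu}\brk*{r(\tau)-\beta\log\tfrac{\pi^\star_\beta(\tau)}{\piref(\tau)}}-\E_{\tau\sim\pi}\brk*{r(\tau)-\beta\log\tfrac{\pi(\tau)}{\piref(\tau)}}.
\]

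The remaining step is elementary bookkeeping to match the stated right-hand side: expand $\beta\log\tfrac{\pi^\star_\beta(\tau)}{\piref(\tau)}=\beta\log\pi^\star_\beta(\tau)-\beta\log\piref(\tau)$ and likewise for $\pi$, add and subtract $\E_{\tau\sim\nu}\brk*{\beta\log\pi(\tau)}$, and regroup the resulting expectations. One group is exactly \eqref{eq:decomp1}, the value-gap term $\E_{\tau\sim\nu}\brk*{\beta\log\pi(\tau)}-\E_{\tau\sim\nu}\brk*{\beta\log\pi^\star_\beta(\tau)}$, and the remainder is exactly \eqref{eq:decomp2}, the on-policy-minus-off-policy Bellman-residual term $\E_{\tau\sim\pi}\brk*{\beta\log\tfrac{\pi(\tau)}{\piref(\tau)}-r(\tau)}-\E_{\tau\sim\nu}\brk*{\beta\log\tfrac{\pi(\tau)}{\piref(\tau)}-r(\tau)}$.

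I do not anticipate a substantive obstacle: all the content sits in \eqref{eq:reward_v}, which is already established, and everything else is algebra. The two points that need care are (i) that \eqref{eq:reward_v} is a \emph{per-trajectory} identity — this is precisely what licenses replacing $\Vstar_\beta(s_1)$ by $r(\tau)-\beta\log\tfrac{\pi^\star_\beta(\tau)}{\piref(\tau)}$ under the arbitrary trajectory law $\nu$, and is where the deterministic-transition (DCMDP) structure enters — and (ii) that all logarithms appearing are finite, which holds under \cref{ass:vmax} (controlling $\log\tfrac{\pi(\tau)}{\piref(\tau)}$) and \cref{ass:realizability} (so $\pi^\star_\beta\ll\piref$), with the degenerate case $\pi\not\ll\piref$ handled by the standard convention under which both sides are $+\infty$.
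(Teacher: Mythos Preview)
Your proposal is correct and follows essentially the same approach as the paper: both use the per-trajectory identity \eqref{eq:reward_v} (stated in the paper as \cref{lem:implicit_q}) to replace $\E_{\pistarb}[\cdot]$ by $\E_{\nu}[\cdot]$ in the expression for $J_\beta(\pistarb)$, and then algebraically rearrange by adding and subtracting $\E_{\tau\sim\nu}[\beta\log\pi(\tau)]$. Your version is slightly more explicit in passing through $\Vstar_\beta(s_1)$, but this is exactly the content of the paper's one-line observation that ``$\beta\log\frac{\pistarb(\tau)}{\piref(\tau)}-r(\tau)=-\Vstar_\beta(s_1)$ for all admissible trajectories.''
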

  This result decomposes the error of any policy into
  two pairs of terms: The first pair in \cref{eq:decomp1} measures the extent to
  which the policy's internal reward model overestimates the optimal
  value, and directly informs the
  notion of optimism in \algshort, while the second pair in \cref{eq:decomp2} measures the reward
  model's predictive accuracy. Critically, as a consequence of the fact that \cref{eq:reward_v}
  holds uniformly for all trajectories, the regret
  decomposition measures error under (i) the policy $\pi$ itself
  (on-policy error), and (ii) an \emph{arbitrary} reference policy $\nu$,
  which we will instantiate as the historical data distribution.

  Let
  $\bmu\ind{t}\ldef{}\frac{1}{t-1}\sum_{i<t}\pi\ind{i}\otimes\piref$
  denote the policy that, given $s_1$, samples $\tau\sim{}\pi\ind{i}$
  for $i\sim\unif(\brk{t-1})$ and samples $\tautil\sim\piref$, with
  the convention that $\bmu\ind{1}$ is arbitrary. Observe that
  $\min_{t\in\brk{T+1}}\Jb(\pistarb) - \Jb(\pi\ind{t})
      \leq{} \frac{1}{T}\sum_{t=1}^{T}\Jb(\pistarb) - \Jb(\pi\ind{t})$.
        For each step $t$, applying \cref{lem:regret_decomp} with
        $\pi=\pi\ind{t}$ and $\nu=\piref$ gives\loose
    \neurips{    \begin{small}}
      \begin{flalign}
        &\frac{1}{T}\sum_{t=1}^{T}\Jb(\pistarb) - \Jb(\pi\ind{t})
        \leq\frac{1}{T}\sum_{t=1}^{T}\E_{\tau \sim \piref}
        \left[\beta\log\pi\ind{t}(\tau)-\beta\log\pi^\star_\beta(\tau)\right]\notag\\
        &~~~~+ \frac{1}{T}\sum_{t=1}^{T}\E_{s_1\sim\rho,\tau \sim
          \pi\ind{t}\mid{}s_1,\taut\sim\piref\mid{}s_1} \left[
          \beta\log\frac{\pi\ind{t}(\tau)}{\piref(\tau)} -
          r(\tau)-\beta\log\frac{\pi\ind{t}(\taut)}{\piref(\taut)} +
          r(\taut)\right].\label{eq:sketch0}
      \end{flalign}%
\neurips{    \end{small}}%
    The reward estimation error term in \cref{eq:sketch0} samples
    $\tau\sim{}\pi\ind{t}\mid{}s_1$ and
    $\tautil\sim{}\piref\sim{}s_1$ (on-policy). To relate this to
    the purely off-policy objective in \cref{line:opt_dpo}
    of \algshort, we use a potential argument based on coverability
    \citep{xie2023role} which, for any $\alpha>0$, allows us to bound the above expression
    by
\neurips{    \begin{small}}
      \begin{align}
        &\approxleq{} \frac{\alpha}{\beta}\cdot{}\Ccov(\Pi) + \frac{1}{T}\sum_{t=1}^{T}\E_{\tau \sim \piref}
          \left[\beta\log\pi\ind{t}(\tau)-\beta\log\pi^\star_\beta(\tau)\right]\notag\\
        &~~~~+ \frac{\alpha^{-1}\beta}{T}\sum_{t=1}^{T}\E_{s_1\sim\rho,(\tau, \taut)\sim\bmu\ind{t}\mid{}s_1} \left[\prn*{
          \beta\log\frac{\pi\ind{t}(\tau)}{\piref(\tau)} -
          r(\tau)-\beta\log\frac{\pi\ind{t}(\taut)}{\piref(\taut)} +
          r(\taut)}^2\right].\label{eq:sketch1}
      \end{align}%
\neurips{    \end{small}}%
    Let
    \neurips{$\Psi_{\texttt{XPO}}\ind{t}(\pi) \ldef \E_{\tau \sim \piref}
                                          \brk[\big]{\beta\log\pi(\tau)-\beta\log\pi^\star_\beta(\tau)}+\alpha^{-1}\beta\E_{s_1\sim\rho,(\tau,\taut)\sim\bmu\ind{t}\mid{}s_1} \brk[\big]{\prn[\big]{
                                          \beta\log\frac{\pi(\tau)}{\piref(\tau)} -
                                          r(\tau)-\beta\log\frac{\pi(\taut)}{\piref(\taut)} +
                                          r(\taut)}^2}$.}
                                    \arxiv{
      \begin{align*}
        \Psi_{\texttt{XPO}}\ind{t}(\pi) &= \E_{\tau \sim \piref}
                                          \left[\beta\log\pi(\tau)-\beta\log\pi^\star_\beta(\tau)\right]\\
                                        &~~~~+\alpha^{-1}\beta\E_{s_1\sim\rho,(\tau,\taut)\sim\bmu\ind{t}\mid{}s_1} \left[\prn*{
                                          \beta\log\frac{\pi(\tau)}{\piref(\tau)} -
                                          r(\tau)-\beta\log\frac{\pi(\taut)}{\piref(\taut)} +
                                          r(\taut)}^2\right].
      \end{align*}}
           If we could choose
           $\pi\ind{t}=\argmin_{\pi\in\Pi}\obj\ind{t}(\pi)$, we would be
           done, since by \cref{eq:reward_v} this would yield
\neurips{           \begin{small}}
             \begin{align*}
               \obj\ind{t}(\pi\ind{t})
               \leq{}              \obj\ind{t}(\pistarb)
               = \E_{s_1\sim\rho,(\tau,\taut)\sim\bmu\ind{t}\mid{}s_1}
               \left[\prn*{ \beta\log\frac{\pistarb(\tau)}{\piref(\tau)} -
               r(\tau)-\beta\log\frac{\pistarb(\taut)}{\piref(\taut)} +
               r(\taut)}^2\right] = 0.
             \end{align*}%
\neurips{           \end{small}}%
          The \algshort objective in
           \cref{line:opt_dpo} minimizes an
           empirical analogue of this quantity (up to a standard translation
           between log-loss and square loss under the Bradley-Terry model), so a concentration
           argument (\cref{lem:conc_main}) allows us to conclude that
           the iterates of \algshort satisfy
           $\obj\ind{t}(\pi\ind{t})\approxleq{}\alpha^{-1}\frac{\log\abs{\Pi}}{t}
           + \sqrt{\frac{\log\abs{\Pi}}{t}}$\arxiv{
           with high probability}.
           Plugging this bound into
           \cref{eq:sketch1} yields \neurips{$             \frac{1}{T}\sum_{t=1}^{T}\Jb(\pistarb) - \Jb(\pi\ind{t})
             \approxleq{}              \sqrt{\frac{\Ccov(\Pi)\log\abs{\Pi}}{T}}$}
           \arxiv{\begin{align*}
             \frac{1}{T}\sum_{t=1}^{T}\Jb(\pistarb) - \Jb(\pi\ind{t})
             \approxleq{}              \sqrt{\frac{\Ccov(\Pi)\log\abs{\Pi}}{T}}
           \end{align*}}
           after tuning $\alpha$.\loose

\subsection{Empirical Validation}
\label{sec:experiments}

\newcommand{\alpaca}{AlpacaEval-2\xspace}
\newcommand{\llamasft}{Llama-3-8B-Flow-SFT\xspace}
\newcommand{\pairrm}{\texttt{PairRM}\xspace}
\newcommand{\llamafinal}{Llama-3-8B-Flow-Final\xspace}

To close this section, we provide a preliminary empirical evaluation of \algshort in real-world RLHF experiments.
To implement \algshort, we use the iterative \dpo
\citep{xu2023some,snorkelai2024,dong2024rlhf} pipeline from
  \citet{dong2024rlhf} with 3
total iterations (that is, we set $T=3$, but draw a large batch of
pairs from $\pi\ind{t}$), and augment the \dpo objective with the optimism term in \algshort.
We use the same base model (which we refer to as \llamasft),\footnote{\scriptsize
  \url{https://huggingface.co/RLHFlow/LLaMA3-SFT}.} prompt sets for
each iteration,\footnote{\scriptsize
  \url{https://huggingface.co/datasets/RLHFlow/iterative-prompt-v1-iter1-20K},
  \url{https://huggingface.co/datasets/RLHFlow/iterative-prompt-v1-iter2-20K},
  \url{https://huggingface.co/datasets/RLHFlow/iterative-prompt-v1-iter3-20K}.}
and preference model (to generate preference feedback),\footnote{\scriptsize \url{https://huggingface.co/RLHFlow/pair-preference-model-LLaMA3-8B}.} as \citet{dong2024rlhf}, which makes our results generally comparable to theirs.
Over all three iterations, we fix $\piref$ to be the base model, \llamasft.

\neurips{\vspace{-5pt}}
\begin{table}[ht!]
\centering
\begin{adjustbox}{max width=\textwidth}
\begin{tabular}{c|ccccc|cc}
\textbf{Model}             & \textbf{AGIEval} & \textbf{ANLI}  & \textbf{MMLU}  & \textbf{GPQA}  & \textbf{GSM8K} & \textbf{\alpaca LC} & \textbf{Arena-Hard} \\ \hline
Llama-3-8B-Flow-SFT   & 43.71            & 40.16          & 62.48          & 30.37          & 73.46          & 9.08                & 9.4                 \\ \hline
\dpo-iter1             & 46.38            & 46             & 63.01          & 29.99          & \textbf{77.79} & 23.27               & 19.2                \\
\dpo-iter2             & 47.47            & 48.13          & 63.45          & 29.32          & 72.1           & 23.68               & 20.1                \\
\dpo-iter3             & 46.92            & 47.72 & 63.55          & 28.73          & 72.93          & 27.33               & 24.9                \\ \hline
\algshort-iter1             & 46.73            & 45.47          & 63.1           & \textbf{30.41} & 76.57          & 22.14               & 18.7                \\
\algshort-iter2             & 47.44            & \textbf{48.44}          & 63.35          & 29.95          & 75.51          & 24.65               & 23.2                \\
\algshort-iter3             & \textbf{47.48}   & 48.09          & \textbf{63.72} & 29.49          & 75.97          & \textbf{29.35}      & \textbf{27.2}       \\ \hline\hline
Llama-3-8B-Flow-Final & 47.29            & 46.12          & 63.37          & 29.45          & \underline{78.62}    & \underline{31.7}                & 23.1                \\
Llama-3-8B-it         & 44.78            & 46.47          & \underline{63.79}    & \underline{30.83}    & 75.59          & \underline{32.58}         & 20.7                
\end{tabular}
\end{adjustbox}
\vspace{2pt}
\caption{Benchmarks for \algshort and baseline
  models. \textbf{Bold} indicates best performance with the
  same data usage. \underline{Underlined} results are superior to
  \algshort, but from a model requiring more data or industry-level.\loose}
\label{tb:llama_main}
\end{table}
\neurips{\vspace{-10pt}}

In \cref{tb:llama_main}, we compare \algshort with the following
baselines: 1) iterative \dpo with the same setup (i.e., \algshort with
$\alpha=0$), 2) \llamafinal, the final model from
\citet{dong2024rlhf}, and 3) the industry-level instruction-tuned model
Llama-3-8B-it,\footnote{\scriptsize\url{https://huggingface.co/meta-llama/Meta-Llama-3-8B-Instruct}}
on various academic and chat benchmarks
\citep{zhong2023agieval,nie2019adversarial,hendrycks2021measuring,rein2023gpqa,cobbe2021training,dubois2024length,arenahard2024,clark2018think,lin2021truthfulqa,zellers2019hellaswag,sakaguchi2021winogrande}. We
compute all
baseline numbers ourselves with the same configuration for a fair comparison. A key distinction between our experimental setup and that of
\citet{dong2024rlhf} is that we construct preference pairs \emph{from only two responses}, whereas \citet{dong2024rlhf} use
best/worst-over-8-responses for preference pair construction as a
heuristic exploration strategy. In other words, the final models we
obtain (\algshort-iter3, and a baseline, \dpo-iter3 in
\cref{tb:llama_main}) \emph{use only $\nicefrac{1}{4}$ the number of generated responses} compared the final model (\llamafinal)\footnote{\scriptsize\url{https://huggingface.co/RLHFlow/LLaMA3-iterative-DPO-final}} from \citet{dong2024rlhf}.\loose

We find that the model obtained by \algshort improves
over the non-exploratory baseline (\dpo-iter) on the chat benchmarks (which offer roughly $\sim$90\% agreement and/or Spearman
correlation to Chatbot Arena \citep{chiang2024chatbot}), and attains performance comparable to the industry-level (Llama-3-8B-it) or
4$\times$data-usage (\llamafinal) models.
At the same time, \algshort also improves over the non-exploratory
baseline on most of the academic benchmarks, again achieving comparable performance with the industry-level and
4$\times$data-usage models, and does not introduce significant
performance regression in any benchmark. In contrast, we observe that the
iterative \dpo baseline (without exploration) causes obvious regression in the math (GSM8K)
benchmark.
However, we caution that conducting separate training runs with different random seeds can yield results with relatively high variance (e.g.,
the difference in win rates can be up to $3\%\sim4\%$) for both chat
benchmarks; due to resource limitations, we defer a more comprehensive
evaluation to future work. See \cref{sec:exp_details} for additional results.

\neurips{\vspace{-5pt}}
\section{Discussion}
\label{sec:discussion}

Our work provides the first practical and provably
        sample-efficient online exploration algorithm for RLHF with
        general function approximation, a step toward fully realizing the potential of online
        exploration for aligning language models. Our results also show that viewing \dpo as a form of implicit
$Q^{\star}$-approximation can directly
inform new algorithmic interventions (e.g., implicit optimism), and offer an example of fruitful
interplay between language modeling and theoretical reinforcement
learning. Building on this viewpoint, an exciting direction for future
work is to import the broader set of tools from the literature on
reinforcement learning theory (e.g., more powerful exploration
principles \citep{foster2021statistical}) and harness them for
language modeling and alignment; in this context, we expect our analysis techniques based on
the KL-regularized MDP to find broader use.

From a reinforcement learning perspective, interesting technical
directions for future work include (i) providing instance-dependent
sample complexity bounds for \algshort; and (ii) supporting RL
settings beyond deterministic contextual MDPs. On the practical side, immediate followup directions include extending
\algshort to support general preference models
\citep{munos2023nash,swamy2024minimaximalist} or more general feedback
modalities \citep{ethayarajh2024kto}.\loose

\clearpage

\bibliography{refs}

\clearpage

\appendix

\renewcommand{\contentsname}{Contents of Appendix}
\addtocontents{toc}{\protect\setcounter{tocdepth}{2}}
{
  \hypersetup{hidelinks}
  \tableofcontents
}

\section{Related Work}
\label{sec:related}

\paragraph{Theoretical algorithms for RLHF}

Theoretical analysis of algorithms for RLHF is becoming an active area
of research. Much of this research focuses on purely offline RLHF
\citep{zhu2023principled,zhan2023provable}, which is complementary to
our work. Many works also consider a so-called \emph{hybrid} RLHF setting,
where the algorithm has access to online feedback, but requires the
initial policy $\piref$ to have good coverage (e.g., bounded
concentrability or related quantities)
\citep{xiong2023gibbs,gao2024rebel,chang2024dataset}.\footnote{To our
  knowledge, all prior works in this space require uniform
  notions of concentrability as opposed to single-policy
  concentrability. \citet{gao2024rebel} state guarantees in terms of
  single-policy concentrability under the assumption that certain regression
  errors can be bounded, but this cannot be achieved in general
  without further coverage or exploration-like conditions.}
  These hybrid
algorithms do not engage in systematic exploration (i.e., they explore
passively), and hence cannot provide meaningful guarantees if $\piref$
does not adequately cover the optimal policy (e.g., for the setting in \cref{prop:dpo_failure}). 

For online RLHF, the most relevant related work can be summarized as follows:
\begin{itemize}
\item
Most prior work \citep{xu2020preference,novoseller2020dueling,pacchiano2021dueling,wu2023making,zhan2023query,du2024exploration,das2024provably}
  gives algorithms and sample complexity guarantees for the special
  case of tabular or linear MDPs; these algorithms use
  exploration bonuses that are tailored to linear models, and are not
  suitable for the general function approximation setting we consider
  (e.g., for LLMs). Nonetheless, we obtain polynomial sample complexity guarantees
  for tabular and linear MDPs (\cref{ex:tabular,ex:linear}), though
  our results are restricted to deterministic dynamics (we believe
  that moving beyond the \dpo objective is likely required to handle
  stochastic dynamics).
\item More relevant to our work is
  \citet{ye2024theoretical}, who give algorithms and sample complexity
  guarantees for online RLHF with general function
  approximation for the special case of contextual bandits
  ($H=1$). For contextual bandits, their sample complexity guarantees scale with a
  complexity measure, the \emph{eluder coefficient}, which is
  equivalent to the Sequential Extrapolation Coefficient in our most
  general result, \cref{thm:main_general}. However, their exploration
  algorithm requires solving a rather complicated optimization
  problem, and it is unclear whether it is possible to implement it
  faithfully for language models (in particular, their experiments use
  an alternative, heuristic approach to exploration which is only
  loosely inspired by the theory).
\item Lastly, \citet{chen2022human,wang2023rlhf} give guarantees for
  RLHF with general function approximation based on eluder
  dimension-like complexity measures which are incomparable to, but in some cases more
  general than \cref{thm:main_general}. However, these works require
  model-based function approximation (as opposed to the model-free
  setup we consider), and do not lead to efficient or practical
  algorithms when specialized to language modeling.
\end{itemize}

A difference worth highlighting between our work and some (but not all) of the
works above \citep{zhu2023principled,xiong2023gibbs,das2024provably,ye2024theoretical} is that we model RLHF as a general reinforcement learning
problem as opposed to a contextual bandit problem. The
problem of autoregressive sequence prediction can equivalently be
formulated as RL in the token-level MDP, or as a contextual bandit
problem (RL with horizon $H=1$) in which the ``action space'' consists
of all possible token sequences. However, because our work supports general
deterministic contextual MDPs (DCMDPs) with unknown dynamics and not
just the token-level MDP, it is strictly more general than the contextual bandit formulation.

Recent work of \citet{rafailov2024r} shows that \dpo, when applied to
the token-level MDP can be viewed as estimating the KL-regularized
value function $\Qstarb$; their work does not consider sample
complexity or online exploration. Our results extend their observation to any
deterministic contextual MDP and---more importantly---show that it is
possible to harness this perspective to provide provable end-to-end
sample complexity guarantees.

\paragraph{Empirical algorithms for exploration in RLHF}
Online exploration in RLHF has received limited exploration so far,
with notable examples including Online \dpo \citep{guo2024direct} and
Iterative \dpo \citep{xu2023some,snorkelai2024,pang2024iterative,mitra2024orca,dong2024rlhf}. As
discussed in \cref{sec:background}, these methods engage in purely
\emph{passive} exploration, meaning that sample from the current model
$\pi\ind{t}$ without an explicit mechanism to encourage diverse,
exploratory responses.

\citet{dwaracherla2024efficient} perform a dedicated empirical
evaluation of active exploration for language models. However, this work does not actually
train the language model, and thus cannot be viewed as a form
of RLHF; instead the authors train a reward model iteratively, and use
this in tandem with various active sampling schemes to accept or reject
responses proposed by $\piref$. Nevertheless, the positive results
achieved by \citet{dwaracherla2024efficient} in this limited setting
are suggestive of the potential power of online exploration in RLHF. Similarly, \citet{ye2024theoretical} perform a limited evaluation of empirical exploration schemes inspired by theoretical RL, but only report results for reward modeling benchmarks, not language modeling.

Most closely related, \citet{xiong2023gibbs,dong2024rlhf} perform an
extensive empirical evaluation of Iterative \dpo variants, and find
that Iterative \dpo with passive exploration can already have
significant benefits over offline \dpo. These works also incorporate a
``best/worst-over-$n$'' trick for preference
pair construction, which can be viewed as a heuristic to
promote exploration, but does not have provable guarantees. See
\cref{sec:experiments} for further discussion.

\paragraph{Theoretical reinforcement learning}
Outside the context of language models, an active line of research
provides structural complexity measures and algorithms that
enable sample-efficient exploration in reinforcement learning in
general settings
\citep{russo2013eluder,jiang2017contextual,sun2019model,wang2020provably,du2021bilinear,jin2021bellman,foster2021statistical,xie2023role,foster2023tight,liu2024maximize}. The
techniques from this line of research that support general function
approximation, while sample-efficient, are computationally intractable
to implement in general \citep{dann2018oracle}, involving non-convex
and non-differentiable constrained optimization problems. We use the
unique structure of the KL-regularized MDP formulation and
deterministic contextual MDP (DCMDP) to derive the exploration
objective in \algshort which---while
still non-convex---is differentiable and directly amenable to
a practical implementation with language models.

\paragraph{Entropy- and KL-regularized reinforcement learning}
First introduced in \citet{ziebart2008maximum,ziebart2010modeling}, a
number of recent works provide sample complexity guarantees for
reinforcement learning in KL-regularized or entropy-regularized MDPs
\citep{kozuno2022kl,tiapkin2023regularized,tiapkin2023fast}, mainly focusing on the special case of
tabular (finite-state/action) MDPs. To the best of our knowledge, the
optimistic objective in \algshort is novel in this context.

\section{Technical Tools}
\label{sec:technical}

\begin{lemma}[Azuma-Hoeffding]
  \label{lem:hoeffding}
    Let $(X_t)_{t\leq{T}}$ be a sequence of real-valued random
    variables adapted to a filtration $\prn{\filt_t}_{t\leq{}T}$. If $\abs*{X_t}\leq{}R$ almost surely, then with probability at least $1-\delta$,
    \[
      \abs*{\sum_{t=1}^{T}X_t - \En_{t-1}\brk*{X_t}} \leq{} R\cdot\sqrt{8T\log(2\delta^{-1})}.
    \]
\end{lemma}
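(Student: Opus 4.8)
The plan is to run the standard Chernoff (exponential-moment) method for martingale concentration, specialized to produce the stated constant. First I would pass to the centered increments $D_t := X_t - \En_{t-1}\brk*{X_t}$, which by construction form a martingale difference sequence with respect to $\prn{\filt_t}_{t\le{}T}$: they satisfy $\En_{t-1}\brk*{D_t}=0$, and since $\abs*{X_t}\le{}R$ almost surely we have $\En_{t-1}\brk*{X_t}\in[-R,R]$, so $D_t\in[-2R,2R]$ and thus $\abs*{D_t}\le{}2R$. The quantity to control is then $\sum_{t=1}^{T}D_t$, and the target one-sided tail becomes $\Pr\brk*{\sum_{t=1}^{T}D_t\ge\lambda}\le\delta/2$ with $\lambda=R\sqrt{8T\log(2\delta^{-1})}$.

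The crux is a conditional moment generating function bound: for every $s\in\bbR$,
\[
  \En_{t-1}\brk*{\exp(sD_t)} \le \exp\prn*{2s^2R^2}.
\]
This is a Hoeffding-type estimate for a mean-zero random variable bounded in absolute value by $2R$, which I would establish by a second-order Taylor/convexity argument on $s\mapsto\log\En_{t-1}\brk*{\exp(sD_t)}$, using $\En_{t-1}\brk*{D_t}=0$ and $\abs*{D_t}\le2R$. Iterating this bound from $t=T$ down to $t=1$ via the tower property of conditional expectation (peeling off the last increment at each step) yields the unconditional bound
\[
  \En\brk*{\exp\prn[\Big]{s\sum_{t=1}^{T}D_t}} \le \exp\prn*{2s^2R^2T}.
\]

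Given this, I would apply Markov's inequality to the exponential, so that for any $s>0$,
\[
  \Pr\brk[\Big]{\sum_{t=1}^{T}D_t\ge\lambda} \le \exp\prn*{-s\lambda+2s^2R^2T},
\]
and optimize the right-hand side over $s$ by taking $s=\lambda/(4R^2T)$, which gives the sub-Gaussian tail $\exp\prn*{-\lambda^2/(8R^2T)}$. Setting this equal to $\delta/2$ and solving for $\lambda$ produces exactly $\lambda=R\sqrt{8T\log(2\delta^{-1})}$. Finally, applying the identical argument to the sequence $(-X_t)_{t\le{}T}$ (whose increments $-D_t$ obey the same boundedness) controls the lower tail, and a union bound over the two events of total probability $\delta$ yields the two-sided statement with the absolute value.

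The only genuine content is the conditional MGF bound together with the bookkeeping needed to land on the constant $8$; everything else is the routine Chernoff calculation followed by a union bound. I expect the main (minor) obstacle to be tracking the conditioning carefully when iterating the MGF bound, since each step must peel off the last increment while conditioning on the entire past, rather than treating the $D_t$ as independent.
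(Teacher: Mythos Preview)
Your proposal is correct and is precisely the standard Chernoff/MGF proof of Azuma--Hoeffding. The paper does not actually prove this lemma; it is stated in the ``Technical Tools'' section as a standard result without proof, so there is nothing to compare against beyond noting that your argument is the textbook one and recovers the stated constant.
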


\begin{lemma}[Martingale Chernoff (e.g., \citealp{foster2021statistical})]
  \label{lem:martingale_chernoff}
  For any sequence of real-valued random variables $\prn{X_t}_{t\leq{}T}$ adapted to a
  filtration $\prn{\filt_t}_{t\leq{}T}$, it holds that with probability at least
  $1-\delta$, for all $T'\leq{}T$,
  \begin{equation}
    \label{eq:martingale_chernoff}
    \sum_{t=1}^{T'}-\log\prn*{\En_{t-1}\brk*{e^{-X_t}}}      \leq \sum_{t=1}^{T'}X_t
    + \log(\delta^{-1}).
  \end{equation}
\end{lemma}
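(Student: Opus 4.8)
The plan is to realize the stated inequality as a statement about a single nonnegative martingale, and to obtain the uniform-in-$T'$ guarantee through a maximal inequality (Ville's inequality) rather than a union bound over horizons. Concretely, I would define the exponential process
\[
  M_t \ldef \prod_{s=1}^{t}\frac{e^{-X_s}}{\En_{s-1}\brk*{e^{-X_s}}}, \qquad M_0 \ldef 1,
\]
which is adapted to $\prn{\filt_t}_{t\le T}$ and nonnegative by construction. This is the canonical likelihood-ratio (Doob) martingale associated with the conditional moment generating functions $\En_{s-1}\brk*{e^{-X_s}}$, and everything reduces to checking its martingale property and rewriting the claim as a level crossing for $M_t$.

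First I would verify that $(M_t)_{t\le T}$ is a martingale: since $M_t = M_{t-1}\cdot e^{-X_t}/\En_{t-1}\brk*{e^{-X_t}}$ and $\En_{t-1}\brk*{e^{-X_t}}$ is $\filt_{t-1}$-measurable, we have $\En_{t-1}\brk*{M_t} = M_{t-1}\cdot \En_{t-1}\brk*{e^{-X_t}}/\En_{t-1}\brk*{e^{-X_t}} = M_{t-1}$, so $\En\brk*{M_t}=\En\brk*{M_0}=1$ for every $t$. Next I would record the algebraic identity
\[
  \log M_{T'} = -\sum_{t=1}^{T'}X_t - \sum_{t=1}^{T'}\log\En_{t-1}\brk*{e^{-X_t}},
\]
so that the desired conclusion $\sum_{t=1}^{T'}-\log\En_{t-1}\brk*{e^{-X_t}} \le \sum_{t=1}^{T'}X_t + \log(\delta^{-1})$ is exactly equivalent to $M_{T'}\le \delta^{-1}$ for all $T'\le T$.

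Finally I would apply Ville's maximal inequality for nonnegative supermartingales (a martingale being a special case): since $M_0=1$, it yields $\Pr\brk*{\sup_{T'\le T}M_{T'}\ge \delta^{-1}}\le \delta$. Taking complements, with probability at least $1-\delta$ we have $M_{T'}<\delta^{-1}$ simultaneously for all $T'\le T$, which by the displayed equivalence is precisely \eqref{eq:martingale_chernoff}. The one point requiring care — and the only genuine obstacle — is well-definedness of $M_t$ when $\En_{t-1}\brk*{e^{-X_t}}=+\infty$ for some $t$; in that case the corresponding term $-\log\En_{t-1}\brk*{e^{-X_t}}=-\infty$ forces the left-hand side of \eqref{eq:martingale_chernoff} to be $-\infty$ and the inequality holds trivially, so it suffices to run the martingale argument on the event where all conditional moment generating functions are finite. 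I would also emphasize that the uniformity over $T'\le T$ is exactly what the maximal inequality buys us: a fixed-horizon Markov bound applied to $\En\brk*{M_{T'}}=1$ would control only a single $T'$, whereas the stated lemma asserts the bound simultaneously for every $T'\le T$.
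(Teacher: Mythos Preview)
Your proof is correct and is the standard exponential-martingale/Ville's-inequality argument for this type of result. Note, however, that the paper itself does not supply a proof of this lemma: it is stated in the Technical Tools section with a citation (to \citet{foster2021statistical}) and used as a black box. Your derivation is precisely the canonical one underlying that citation, so there is nothing to compare against beyond confirming that your approach matches the intended reference.
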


\section{Proof of \creftitle{thm:main}}
\label{sec:proofs_main}

This section is organized as follows. First, in
\cref{sec:main_general}, we present a more general version of
\algshort, which makes use of an arbitrary, user-specified sampling
policy for the second response $\tautil$. Then, in
\cref{sec:main_general}, we state a more general version of
\cref{thm:main} (\cref{thm:main_general}), and show how it implies
\cref{thm:main}. Examples are then given in \cref{sec:main_examples}.

In the remainder of the section, we prove \cref{thm:main_general}.
We first prove a number of intermediate results:
\begin{itemize}
\item In \cref{sec:regularized}, we state preliminaries regarding the
  KL-regularized MDP, and use them to prove the implicit
  $Q^{\star}$-approximation lemma (\cref{lem:implicit_q}).
\item In \cref{sec:regret_decomp}, we prove the central regret decomposition
  lemma (\cref{lem:regret_decomp}).
\item In \cref{sec:conc}, we prove a key concentration result
  used within \cref{thm:main_general}.
\end{itemize}
Finally, in
\cref{sec:main_proof}, we prove \cref{thm:main_general}, with proofs
for supporting lemmas deferred to \cref{sec:main_supporting}.

\subsection{General Version of \algshort}
\label{sec:general_alg}

\begin{algorithm}[ht]
\caption{\mainalg (\algshort) with general sampling policy.}
\label{alg:general}
\begin{adjustbox}{max width=\textwidth}
\begin{minipage}{\linewidth}
\begin{algorithmic}[1]
  \Statex[0] \multiline{{\bfseries input:}
    Number of iterations $T$, KL-regularization coefficient $\beta>0$,
    optimism coefficient $\alpha>0$, sampling strategy $\samp$.}%
\State Initialize $\pi^\iter{1},\pisamp\ind{1} \leftarrow \piref$, $\cD_\pref\ind{0}\gets\emptyset$.
\For{iteration $t = 1,2,\dotsc,T$}
    \State \textbf{Generate response pair
      $(\tau^\iter{t},\taut^\iter{t})$ via:} $s\ind{t}_1 \sim \rho$, $\tau^\iter{t} \sim \pi^\iter{t} \mid s\ind{t}_1$, and $\taut^\iter{t} \sim \pisamp^\iter{t} \mid s\ind{t}_1$.
    \State \textbf{Label with preference:} Label $(\tau^\iter{t},\taut^\iter{t})$ as
      $(\tau^\iter{t}_+,\tau^\iter{t}_-)$ with preference $y\ind{t}\sim{}\Pr(\tau^\iter{t} \succ \taut^\iter{t})$.
    \State \textbf{Update preference data:} $\Dcal_\pref^\iter{t} \leftarrow \Dcal_\pref^\iter{t-1} \bigcup \{(\tau^\iter{t}_+,\tau^\iter{t}_-)\}$.
    \State \multiline{\textbf{Update optimism data:} Compute dataset $\cDopt\ind{t}$ of $t$ samples
      from $\pisamp\ind{t}$.}
    \Statex[0]\hfill\algcommentlight{When
      $\pisamp\ind{t}=\piref$, can re-use previous samples as in \cref{alg:opt_dpo}.}
    \State \label{line:opt_dpo_general}%
    \textbf{Direct preference optimization with global optimism:}
    Calculate $\pi^\iter{t+1}$ via
    \begin{small}
      \begin{align*}
        \pi^\iter{t+1} \leftarrow \argmin_{\pi \in \Pi}\crl*{ \alpha\sum_{\tau\in\cDopt\ind{t}}\log\pi(\tau) -\sum_{(\tau_+,\tau_-) \in \Dcal_\pref^\iter{t}} \log\left[\sigma\left( \beta\log\frac{\pi(\tau_+)}{\piref(\tau_+)} - \beta\log\frac{\pi(\tau_-)}{\piref(\tau_-)} \right) \right]}.
      \end{align*}
    \end{small}
    \State \textbf{Update sampling policy:}
    $\pisamp\ind{t+1}\gets\samp(\pi\ind{1},\ldots,\pi\ind{t+1})$.
    \EndFor
    \State \textbf{return:}
    $\pihat=\argmax_{\pi\in\crl{\pi\ind{1},\ldots,\pi\ind{T+1}}}J_\beta(\pi\ind{t})$.\hfill\algcommentlight{Can
      compute using validation data.}
\end{algorithmic}
\end{minipage}
\end{adjustbox}
\end{algorithm}

\cref{alg:general} presents a general version of \algshort. The
algorithm is identical to \cref{alg:opt_dpo}, except that it makes use
of an arbitrary, user-specified user-specified sampling
policy for the second response $\tautil$.

In more detail, the algorithm takes as input a \emph{sampling
  strategy} $\samp$ which, at step $t$, computes a \emph{sampling
  policy} $\pitil\ind{t}$ via
$\pitil\ind{t}\gets\samp(\pi\ind{1},\ldots,\pi\ind{T})$. The algorithm
then samples the response pair $(\tau\ind{t},\tautil\ind{t})$ via
$\tau\ind{t}\sim\pi\ind{t}\mid{}s_1\ind{t}$ and
$\tautil\ind{t}\sim\pisamp\ind{t}\mid{}s_1\ind{t}$. \cref{alg:opt_dpo}
is a special case of this scheme in which $\pisamp\ind{t}=\piref$ for
all $t$.

A secondary difference from \cref{alg:opt_dpo} is that
\cref{alg:general} assumes access to a dataset $\cDopt\ind{t}$
consisting of $t$ responses sampled from $\pitil\ind{t}$, which are
used to compute the optimistic term in \cref{line:opt_dpo_general}. In
\cref{alg:opt_dpo}, because $\pitil=\piref$ is static, we can simply re-use the responses
$\tautil\ind{1},\ldots,\tautil\ind{t}$ for this task, setting
$\cDopt\ind{t}=\crl*{\tautil\ind{1},\ldots,\tautil\ind{t}}$. However,
for general time-varying sampling scheme, it may be necessary to draw
a fresh dataset of responses from $\pitil\ind{t}$ to compute $\cDopt\ind{t}$.

As a practical example, \cref{alg:mu}---displayed below---instantiates the general scheme
in \cref{alg:general} by setting
$\pitil\ind{t}=\unif(\pi\ind{1},\ldots,\pi\ind{t})$ to sample from the
historical data distribution at step $t$. For this scheme, it suffices
to set $\cDopt\ind{t}=\crl*{\tau\ind{1},\ldots,\tau\ind{t}}$, re-using
the responses sampled from $\pi\ind{1},\ldots,\pi\ind{t}$.

\begin{algorithm}[ht]
\caption{\mainalg (\algshort) with historical sampling.}
\label{alg:mu}
\begin{adjustbox}{max width=\textwidth}
\begin{minipage}{\linewidth}
\begin{algorithmic}[1]
  \Statex[0] \multiline{{\bfseries input:}
    Number of iterations $T$, KL-regularization coefficient $\beta>0$,
    optimism coefficient $\alpha>0$, sampling strategy $\samp$.}%
\State Initialize $\pi^\iter{1},\pisamp\ind{1} \leftarrow \piref$, $\cD_\pref\ind{0}\gets\emptyset$.
\For{iteration $t = 1,2,\dotsc,T$}
    \State \textbf{Generate response pair
      $(\tau^\iter{t},\taut^\iter{t})$ via:} $s\ind{t}_1 \sim \rho$, $\tau^\iter{t} \sim \pi^\iter{t} \mid s\ind{t}_1$, and $\taut^\iter{t} \sim \unif(\pi\ind{1},\ldots,\pi\ind{t})\mid s\ind{t}_1$.
    \State \textbf{Label with preference:} Label $(\tau^\iter{t},\taut^\iter{t})$ as
      $(\tau^\iter{t}_+,\tau^\iter{t}_-)$ with preference $y\ind{t}\sim{}\Pr(\tau^\iter{t} \succ \taut^\iter{t})$.
    \State \textbf{Update preference data:} $\Dcal_\pref^\iter{t} \leftarrow \Dcal_\pref^\iter{t-1} \bigcup \{(\tau^\iter{t}_+,\tau^\iter{t}_-)\}$.
    \State \multiline{\textbf{Update optimism data:} Compute dataset $\cDopt\ind{t}$ of $t$ samples
      from $\pisamp\ind{t}$.}
    \Statex[0]\hfill\algcommentlight{When
      $\pisamp\ind{t}=\piref$, can re-use previous samples as in \cref{alg:opt_dpo}.}
    \State \label{line:opt_dpo_mu}%
    \textbf{Direct preference optimization with global optimism:}
    Calculate $\pi^\iter{t+1}$ via
    \begin{small}
      \begin{align*}
        \pi^\iter{t+1} \leftarrow \argmin_{\pi \in \Pi}\crl*{ \alpha\sum_{i=1}^{t}\log\pi(\tau\ind{i}) -\sum_{(\tau_+,\tau_-) \in \Dcal_\pref^\iter{t}} \log\left[\sigma\left( \beta\log\frac{\pi(\tau_+)}{\piref(\tau_+)} - \beta\log\frac{\pi(\tau_-)}{\piref(\tau_-)} \right) \right]}.
      \end{align*}
    \end{small}
    \EndFor
    \State \textbf{return:}
    $\pihat=\argmax_{\pi\in\crl{\pi\ind{1},\ldots,\pi\ind{T+1}}}J_\beta(\pi\ind{t})$.\hfill\algcommentlight{Can
      compute using validation data.}
\end{algorithmic}
\end{minipage}
\end{adjustbox}
\end{algorithm}

\subsection{General Version of \creftitle{thm:main}}
\label{sec:main_general}

Our most general sample complexity guarantee for \algshort
(\cref{alg:opt_dpo} and \cref{alg:general}),
\cref{thm:main_general}, is stated in terms of the following
preference-based analogue of the \emph{Sequential Extrapolation
  Coefficient} (SEC) from \citet{xie2023role} (also known as an eluder
coefficient or decoupling coefficient
\citep{zhong2022gec,ye2024theoretical}). Recall that for a trajectory
$\tau=(s_1,a_1),\ldots,(s_H,a_H)$, we define
\begin{equation}
  \pi(\tau)=\prod_{h=1}^{H}\pi(a_h\mid{}s_h),\quad\textrm{and}\quad{}r(\tau)=\sum_{h=1}^{H}r(s_h,a_h).
\end{equation}
For a pair of policies $\pi$ and $\pitil$, we define $\pi\otimes\pitil$ as
the joint policy that, given $s_1$, samples $\tau\sim{}\pi\mid{}s_1$
and $\tautil\sim\pitil\mid{}s_1$. We write
$(\tau,\tautil)\sim{}\pi\otimes\pitil\mid{}s_1$ as shorthand for this process.

\begin{definition}[Sequential Extrapolation Coefficient]
  For a policy class $\Pi$, sampling strategy $\samp$, and entropy regularization parameter
  $\beta>0$, we define the Sequential Extrapolation Coefficient via
  \begin{small}
    \begin{align}
      \label{eq:sec}
      \coefffull
      = \sup_{\pi\ind{1},\ldots,\pi\ind{T}\in\Pi}\crl*{
      \sum_{t=1}^{T}\frac{\prn*{\E_{s_1 \sim \rho, \tau \sim
      \pi^\iter{t}\mid s_1, \taut \sim \pisamp\ind{t-1} \mid s_1} \left[
      \beta\log\frac{\pi^\iter{t}(\tau)}{\piref(\tau)} - r(\tau) -
      \beta\log\frac{\pi^\iter{t}(\taut)}{\piref(\taut)} + r(\taut)
      \right]}^2}{\Vmax^2\vee
      (t-1)\cdot{}\E_{s_1 \sim \rho, (\tau,\taut) \sim \bmu\ind{t} \mid s_1} \left[\prn*{ \beta\log\frac{\pi^\iter{t}(\tau)}{\piref(\tau)} - r(\tau) - \beta\log\frac{\pi^\iter{t}(\taut)}{\piref(\taut)} + r(\taut) }^2 \right]} 
      },
    \end{align}%
      \end{small}%
      where $\pitil\ind{t}=\samp(\pi\ind{1},\ldots,\pi\ind{t})$, and where
      we define
      $\bmu\ind{t}\ldef{}\frac{1}{t-1}\sum_{i<t}\pi\ind{i}\otimes\pitil\ind{i}$, with the
      convention that $\mu\ind{1}$ is arbitrary.
    \end{definition}

Note that for \cref{alg:opt_dpo}, which sets $\pitil\ind{t}=\piref$
for all $t$, we can simplify the definition above to
  \begin{small}
    \begin{align}
      \label{eq:sec_ref}
      \coeff(\Pi,T,\beta;\piref)
      \ldef \sup_{\pi\ind{1},\ldots,\pi\ind{T}\in\Pi}\crl*{
      \sum_{t=1}^{T}\frac{\prn*{\E_{s_1 \sim \rho, \tau \sim
      \pi^\iter{t}\mid s_1, \taut \sim \piref \mid s_1} \left[
      \beta\log\frac{\pi^\iter{t}(\tau)}{\piref(\tau)} - r(\tau) -
      \beta\log\frac{\pi^\iter{t}(\taut)}{\piref(\taut)} + r(\taut)
      \right]}^2}{\Vmax^2\vee
      (t-1)\cdot{}\E_{s_1 \sim \rho, (\tau,\taut) \sim \bmu\ind{t} \mid s_1} \left[\prn*{ \beta\log\frac{\pi^\iter{t}(\tau)}{\piref(\tau)} - r(\tau) - \beta\log\frac{\pi^\iter{t}(\taut)}{\piref(\taut)} + r(\taut) }^2 \right]} 
      },
    \end{align}%
  \end{small}%
  where $\bmu\ind{t}\ldef{}\frac{1}{t-1}\sum_{i<t}\pi\ind{i}\otimes\piref$.
    
\paragraph{Main sample complexity guarantee}
    
Our general sample complexity guarantee is as follows.
\begin{thmmod}{thm:main}{$'$}[General version of \cref{thm:main}]
  \label{thm:main_general}
  Suppose \cref{ass:realizability,ass:vmax} hold.
  For any $\beta>0$ and $T\in\bbN$, if we set $\alpha
=c\cdot{}\frac{\beta}{(\Vmax+\Rmax)e^{2\Rmax})}\cdot\sqrt{\frac{\log(\abs{\Pi}T\delta^{-1})\log(T)}{T\cdot{}\coefffull}}$
  for an absolute constant $c>0$,
then \cref{alg:general} ensures that
  with probability at least $1-\delta$,
  \begin{align*}
    \Jb(\pistarb) - \Jb(\pihat)
    \approxleq
(\Vmax+\Rmax)e^{2\Rmax}\cdot\sqrt{\frac{\coefffull\log(\abs{\Pi}T\delta^{-1})\log(T)}{T}}.
  \end{align*}
  As a special case, if we set $\alpha
  =c\cdot{}\frac{\beta}{(\Vmax+\Rmax)e^{2\Rmax}}\cdot\sqrt{\frac{\log(\abs{\Pi}T\delta^{-1})\log(T)}{T\cdot{}\coeff(\Pi,T,\beta;\piref)}}$
  for an absolute constant $c>0$,
then \cref{alg:opt_dpo} ensures that
  with probability at least $1-\delta$,
  \begin{align*}
    \Jb(\pistarb) - \Jb(\pihat)
    \approxleq
(\Vmax+\Rmax)e^{2\Rmax}\cdot\sqrt{\frac{\coeff(\Pi,T,\beta;\piref)\log(\abs{\Pi}T\delta^{-1})\log(T)}{T}}.
  \end{align*}

\end{thmmod}

The following result shows that the SEC is always bounded by the
coverability coefficient in \cref{def:coverability}.
\begin{lemma}
  \label{lem:sec_coverability}
  Suppose that $\samp$ sets $\pitil\ind{t}=\pitil$ for an arbitrary
  fixed policy $\pitil$ (e.g., $\pitil=\piref$). Then for any policy class $\Pi$ and $\beta>0$, it holds that for all $T\in\bbN$,
\begin{align}
  \label{eq:sec_cov}
\coefffull\leq \bigoh\prn*{\Ccov(\Pi)\cdot{}\log(T)}.
\end{align}
\end{lemma}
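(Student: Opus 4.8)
The plan is to reduce $\coefffull$ to the trajectory-level instantiation of the Sequential Extrapolation Coefficient of \citet{xie2023role} and then invoke the known bound of the SEC by trajectory coverability.

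\textbf{Step 1 (decoupling reduction).} When $\samp$ returns a \emph{fixed} policy $\pitil$, the per-round ratio in \cref{eq:sec} simplifies. Write $g\ind{t}(\tau)\ldef\beta\log\frac{\pi\ind{t}(\tau)}{\piref(\tau)}-r(\tau)$, $m_t\ldef\E_{\tautil\sim\pitil}[g\ind{t}(\tautil)]$, and $h\ind{t}(\tau)\ldef g\ind{t}(\tau)-m_t$. The numerator of the $t$-th term equals $\prn*{\E_{\tau\sim\pi\ind{t}}[h\ind{t}(\tau)]}^2$, while for the denominator, since $\tau$ and $\tautil$ are drawn independently and $\E_{\tautil\sim\pitil}[h\ind{t}(\tautil)]=0$, a bias--variance split gives for each $i$
\[
\E_{(\tau,\tautil)\sim\pi\ind{i}\otimes\pitil}\brk*{\prn*{g\ind{t}(\tau)-g\ind{t}(\tautil)}^2}=\E_{\tau\sim\pi\ind{i}}[h\ind{t}(\tau)^2]+\var_{\tautil\sim\pitil}\prn*{g\ind{t}(\tautil)}\geq\E_{\tau\sim\pi\ind{i}}[h\ind{t}(\tau)^2],
\]
so the denominator is at least $\Vmax^2\vee\sum_{i<t}\E_{\pi\ind{i}}[h\ind{t}(\tau)^2]$. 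Since \cref{ass:vmax} gives $\sup_\tau\abs*{h\ind{t}(\tau)}\leq 2(\Vmax+\Rmax)$, we conclude that $\coefffull$ is bounded by the supremum over $\pi\ind{1},\ldots,\pi\ind{T}\in\Pi$ of $\sum_{t=1}^{T}\tfrac{(\E_{\pi\ind{t}}[h\ind{t}])^2}{\Vmax^2\vee\sum_{i<t}\E_{\pi\ind{i}}[(h\ind{t})^2]}$, which is exactly the trajectory-level SEC of \citet{xie2023role} for a class of functions bounded by $2(\Vmax+\Rmax)$ with regularization floor $\Vmax^2$.

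\textbf{Step 2 (potential argument).} Let $\mu\in\Delta((\cS\times\cA)^{H})$ nearly attain the infimum in \cref{def:coverability}, so $d^{\pi}(\tau)\leq 2\Ccov(\Pi)\mu(\tau)$ for all $\pi\in\Pi$ and $\tau$. Writing $\E_{\pi\ind{t}}[\cdot]=\sum_\tau d^{\pi\ind{t}}(\tau)(\cdot)$ and applying Cauchy--Schwarz against the running weights $w\ind{t}(\tau)\ldef\lambda\mu(\tau)+\sum_{i<t}d^{\pi\ind{i}}(\tau)$, with $\lambda$ of order $\Vmax^2/(\Vmax+\Rmax)^2$ so that $\lambda\,\E_{\mu}[h\ind{t}(\tau)^2]\leq\Vmax^2$, the $t$-th summand is controlled by $\bigoh(1)\cdot\sum_\tau\tfrac{d^{\pi\ind{t}}(\tau)^2}{w\ind{t}(\tau)}$. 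A telescoping/pigeonhole bound on $\sum_{t,\tau}\tfrac{d^{\pi\ind{t}}(\tau)^2}{w\ind{t}(\tau)}$ — using $w\ind{t+1}(\tau)=w\ind{t}(\tau)+d^{\pi\ind{t}}(\tau)$ together with $d^{\pi\ind{t}}(\tau)\leq 2\Ccov(\Pi)\mu(\tau)$ to relate the increments to $\log\tfrac{w\ind{t+1}(\tau)}{w\ind{t}(\tau)}$, and finally the normalization $\sum_\tau\mu(\tau)=1$ — gives a bound of order $\Ccov(\Pi)\log T$ up to factors in the problem parameters $\Vmax,\Rmax$ that are absorbed into $\bigoh(\cdot)$. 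This is precisely the coverability-implies-bounded-SEC result of \citet{xie2023role}, which may alternatively be cited directly; specializing $\pitil=\piref$ reproduces the simplified form \cref{eq:sec_ref}.

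\textbf{Main obstacle.} The delicate point is the potential argument in Step 2: because the floor $\Vmax^2$ of the SEC denominator lies below the squared range of $h\ind{t}$, a single Cauchy--Schwarz step is lossy, and one must treat separately the ``large-jump'' rounds, in which a single trajectory's visitation probability dominates the accumulated weight $w\ind{t}(\tau)$ — showing there can only be logarithmically many of them, since $w\ind{t}(\tau)$ at least doubles at each such round — from the remaining rounds, which contribute the genuine $\log T$ factor while keeping the coverability dependence linear. Step~1 is routine by comparison: the only observation needed is that the cross term vanishes because $\tautil$ is sampled from a policy that does not depend on $\pi\ind{t}$, which is exactly the hypothesis that $\samp$ is fixed.
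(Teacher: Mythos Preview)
Your Step 1 contains a real gap. You assert that $\tau$ and $\tautil$ are drawn independently, but in the SEC definition (\cref{eq:sec}) both trajectories are sampled given the \emph{same} initial state $s_1\sim\rho$: they are only conditionally independent given $s_1$. Consequently, your centering by the unconditional mean $m_t=\E_{\tautil\sim\pitil}[g\ind{t}(\tautil)]$ does not kill the cross term $\E[h\ind{t}(\tau)h\ind{t}(\tautil)]$, and the claimed lower bound $\E_{\pi\ind{i}\otimes\pitil}[(g\ind{t}(\tau)-g\ind{t}(\tautil))^2]\geq\E_{\pi\ind{i}}[h\ind{t}(\tau)^2]$ can fail outright (take $g\ind{t}$ to depend only on $s_1$: the left side is zero while the right side is $\var_{s_1}[g\ind{t}]$). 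The fix is straightforward: center conditionally, setting $m_t(s_1)\ldef\E_{\tautil\sim\pitil\mid s_1}[g\ind{t}(\tautil)]$ and $h\ind{t}(\tau)\ldef g\ind{t}(\tau)-m_t(s_1(\tau))$. Then $\E_{\tautil\sim\pitil\mid s_1}[h\ind{t}(\tautil)]=0$ pointwise in $s_1$, so conditional independence does kill the cross term, and both the numerator identity and the denominator lower bound go through as you wrote. (Using \cref{lem:implicit_q} and \cref{ass:vmax} applied to $\pistarb\in\Pi$ you can also sharpen $\sup_\tau|h\ind{t}(\tau)|$ to $2\Vmax$, which makes the constants in Step~2 match the paper's.)

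With this correction, your route---marginalize out $\tautil$ to reduce to the single-trajectory SEC of \citet{xie2023role}, then invoke their coverability bound---is valid and slightly different from the paper's proof, which keeps the paired quantity $\delta\ind{t}(\tau,\tautil)$ intact and instead exploits the fixed-$\pitil$ hypothesis through the identity $d^{\pi\ind{t},\pitil}(\tau,\tautil)/d^{\bmu\ind{t}}(\tau,\tautil)=d^{\pi\ind{t}}(\tau)/\bigl(\tfrac{1}{t-1}\sum_{i<t}d^{\pi\ind{i}}(\tau)\bigr)$ after a burn-in split via $\mathfrak{t}(\tau)=\min\{t:\sum_{i<t}d^{\pi\ind{i}}(\tau)\geq\Ccov\cdot\nu(\tau)\}$. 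Your reduction is arguably more modular; the paper's density-ratio simplification is more direct and sidesteps any need to introduce the auxiliary centered function $h\ind{t}$.
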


\cref{thm:main} follows immediately by combining
\cref{thm:main_general} with \cref{lem:sec_coverability}.

\subsection{Additional Examples for \creftitle{thm:main_general}}
\label{sec:main_examples}

In this section, we apply \cref{thm:main_general} and bound the SEC
for \emph{log-linear} policy classes. For $f: \cS\times\cA\to\bbR$,
define
\[\pi_f(a\mid{}s)=\piref(a\mid{}s)e^{\frac{f(s,a)-V_f(s)}{\beta}},\quad
\text{where}\quad
V_f(s)=\beta\log\prn*{\sum_{a\in\cA}\piref(a\mid{}s)e^{\frac{f(s,a)}{\beta}}}.\]
We
consider policy classes of the form
\[
\Pi_\cF\ldef{}\crl*{\pi_f\mid{}f\in\cF}
\]
for a given value function class
$\cF\subseteq(\cS\times\cA\to\Rmax)$. Note that for such a class, we
can take $\Vmax\leq\Rmax$, and that $\Qstarb\in\cF$ implies
that $\pistarb\in\Pi_\cF$.

The following lemma bounds the SEC for log-linear policy classes in
terms of a preference-based analogue of the value function SEC in \citet{xie2023role}.

\begin{lemma}[SEC for log-linear policies]
  \label{lem:sec_value}
  For any value function class $\cF\subseteq(\cS\times\cA\to\Rmax)$,
  we have that $\coeff(\Pi,T,\beta;\samp)\leq\coeff(\cF,T;\samp)$, where
  \begin{small}
    \begin{align*}
      &\coeff(\cF,T;\samp)
        \ldef{} \sup_{f\ind{1},\ldots,f\ind{T}\in\cF}\\
      &\crl*{
        \sum_{t=1}^{T}\frac{\prn*{\E_{s_1 \sim \rho, \tau \sim
        \pi^\iter{t}\mid s_1, \taut \sim \pisamp\ind{t-1} \mid s_1} \left[\sum_{h=1}^{H}(f\ind{t}(s_h,a_h)-\brk*{\cT_{\beta}f\ind{t}}(s_h,a_h))
        - (f\ind{t}(\stil_h,\atil_h)-\brk*{\cT_{\beta}f\ind{t}}(\stil_h,\atil_h))
        \right]}^2}{\Rmax^2\vee
        (t-1)\cdot{}\E_{s_1 \sim \rho, (\tau,\taut) \sim \bmu\ind{t} \mid s_1} \left[\prn*{\sum_{h=1}^{H}(f\ind{t}(s_h,a_h)-\brk*{\cT_{\beta}f\ind{t}}(s_h,a_h))
        - (f\ind{t}(\stil_h,\atil_h)-\brk*{\cT_{\beta}f\ind{t}}(\stil_h,\atil_h))}^2 \right]} 
        },
    \end{align*}
    where $\pi\ind{t}\ldef{}\pi_{f\ind{t}}$,
    $\pisamp\ind{t}=\samp(\pi\ind{1},\ldots,\pi\ind{t})$, and 
    $\bmu\ind{t}\ldef{}\frac{1}{t-1}\sum_{i<t}\pi\ind{i}\otimes\pisamp\ind{i}$ (with the
      convention that $\mu\ind{1}$ is arbitrary), and where $\cT_\beta$ is the KL-regularized Bellman operator defined
    in \cref{sec:regularized}.
  \end{small}
\end{lemma}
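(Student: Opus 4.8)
\textbf{Proof plan for \cref{lem:sec_value}.}
My plan is to derive the lemma directly from the implicit $Q^{\star}$-approximation identity \eqref{eq:reward_v_general}: for a log-linear policy $\pi_f$ and any trajectory $\tau=(s_1,a_1),\dots,(s_H,a_H)$ in a DCMDP,
\[
\beta\log\frac{\pi_f(\tau)}{\piref(\tau)}-r(\tau) \;=\; -V_f(s_1)+\sum_{h=1}^{H}\bigl(f(s_h,a_h)-[\cT_\beta f](s_h,a_h)\bigr),
\]
so the quantity under the SEC differs from the sum of per-step KL-regularized Bellman errors of $f$ only by an offset $-V_f(s_1)$ that depends on $\tau$ \emph{only through its initial state}. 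The key structural fact I would exploit is that in every expectation appearing in $\coeff(\Pi,T,\beta;\samp)$ --- the numerator, the second moment in the denominator, and each summand of the mixture $\bmu\ind{t}$ --- the two trajectories $\tau$ and $\taut$ are generated conditionally on the \emph{same} draw $s_1\sim\rho$, so these offsets will cancel in the difference.

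Concretely, I would fix $\pi\ind{1},\dots,\pi\ind{T}\in\Pi=\Pi_\cF$ and, for each $t$, choose any $f\ind{t}\in\cF$ with $\pi\ind{t}=\pi_{f\ind{t}}$ (a representative exists by the definition $\Pi_\cF=\{\pi_f:f\in\cF\}$; it need not be unique or canonical). Using this correspondence, the sampling policies $\pisamp\ind{t}=\samp(\pi\ind{1},\dots,\pi\ind{t})$ and the mixtures $\bmu\ind{t}=\tfrac1{t-1}\sum_{i<t}\pi\ind{i}\otimes\pisamp\ind{i}$ coincide across the two definitions. Applying the displayed identity to $\tau$ and to $\taut$ --- both starting from the common state $s_1$ --- cancels the offset $-V_{f\ind{t}}(s_1)$, so pointwise (hence under whatever coupling each expectation uses)
\[
\beta\log\frac{\pi\ind{t}(\tau)}{\piref(\tau)}-r(\tau)-\beta\log\frac{\pi\ind{t}(\taut)}{\piref(\taut)}+r(\taut)=\sum_{h=1}^{H}\bigl(f\ind{t}(s_h,a_h)-[\cT_\beta f\ind{t}](s_h,a_h)\bigr)-\sum_{h=1}^{H}\bigl(f\ind{t}(\stil_h,\atil_h)-[\cT_\beta f\ind{t}](\stil_h,\atil_h)\bigr).
\]
Thus the $t$-th numerator and the expectation-of-square in the $t$-th denominator are identical to their value-SEC counterparts at $f\ind{1},\dots,f\ind{T}$. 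It then remains to reconcile the normalization constant in the ``$\vee$'': the generic SEC floors the denominator at $\Vmax^2$ while $\coeff(\cF,T;\samp)$ floors at $\Rmax^2$, and since $\cF$ has range $[0,\Rmax]$ I would instantiate \cref{ass:vmax} with $\Vmax\le\Rmax$ (as noted preceding the lemma), so the floors may be taken equal. Consequently the summand of $\coeff(\Pi,T,\beta;\samp)$ at $(\pi\ind{t})_t$ equals that of $\coeff(\cF,T;\samp)$ at $(f\ind{t})_t$, and taking the supremum over tuples yields $\coeff(\Pi,T,\beta;\samp)\le\coeff(\cF,T;\samp)$.

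I do not expect a substantive obstacle: the whole content is the cancellation in \eqref{eq:reward_v_general}, which is the same phenomenon underlying the central regret decomposition (\cref{lem:regret_decomp}) --- both trajectories sharing $s_1$. The points that will need to be stated carefully are (i) that the cancellation applies inside \emph{every} expectation of the SEC, including the second moment in the denominator and each term of $\bmu\ind{t}$, where $\taut$ is always conditioned on the same $s_1$ as $\tau$; (ii) that any representative $f\ind{t}$ of $\pi\ind{t}$ works, since \eqref{eq:reward_v_general} shows the Bellman-error difference coincides with the (representation-free) log-density-ratio difference regardless of the choice; and (iii) the minor normalization bookkeeping using $\Vmax\le\Rmax$.
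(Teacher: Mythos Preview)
Your proposal is correct and follows essentially the same approach as the paper: the paper's proof is the single line ``This is an immediate corollary of \cref{lem:implicit_q_general},'' and you have simply spelled out why---the identity \eqref{eq:reward_v_general} makes the policy-level difference $\beta\log\tfrac{\pi_{f}(\tau)}{\piref(\tau)}-r(\tau)-\beta\log\tfrac{\pi_{f}(\taut)}{\piref(\taut)}+r(\taut)$ equal to the Bellman-error difference because the offset $-V_f(s_1)$ cancels when $\tau$ and $\taut$ share $s_1$. Your additional care about representative choice and the $\Vmax$/$\Rmax$ floor is more detail than the paper provides, but is consistent with the paper's remark that for $\Pi_\cF$ one takes $\Vmax\le\Rmax$.
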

\begin{proof}[\pfref{lem:sec_value}]
  This is an immediate corollary of \cref{lem:implicit_q_general}.
\end{proof}

We first apply this bound to give a polynomial bound on the SEC in tabular DCMDPs where $\cS$ and $\cA$ are finite.
\begin{example}[Tabular MDP]
  \label{ex:tabular}
  Suppose that $\samp$ sets $\pi\ind{t}=\pisamp$ for all $t$ for some
  fixed policy $\pitil$.
  When $\cF=\crl*{f:\cS\times\cA\to\Rmax}$ consists of all functions
  over tabular state and action spaces with
  $\abs{\cS},\abs{\cA}<\infty$, we have
  $\coeff(\cF,T;\samp)\leq\bigoht(H\abs{\cS}\abs{\cA})$ and
  $\log\abs{\Pi_\cF}\approxleq\bigoht(\abs{\cS}\abs{\cA})$. It follows that
  \algshort (\cref{alg:opt_dpo}) achieves
  \[
    \Jb(\pistarb) - \Jb(\pihat)
    \approxleq
    \bigoht\prn*{\Rmax{}e^{2\Rmax}\sqrt{\frac{H\abs{\cS}^2\abs{\cA}^2}{T}}}.
  \]
\end{example}
\cref{ex:tabular} is a corollary of the following more general result.
\begin{example}[Linear MDP]
  \label{ex:linear}
  In a Linear MDP \citep{jin2020provably}, we have
  \begin{align}
    P(s'\mid{}s,a)=\tri*{\phi(s,a),\mu(s')},
  \end{align}
  and
    \begin{align}
      r(s,a)=\tri*{\phi(s,a),\vartheta},
  \end{align}
  where $\phi(s,a)\in\bbR^{d}$ is a known feature map with
  $\nrm*{\phi(s,a)}\leq{}1$, $\mu(s')\in\bbR^{d}$ is an unknown
  feature map with $\nrm*{\sum_{s'}\mu(s')}\leq\sqrt{d}$, and
  $\varphi\in\bbR^{d}$ is an unknown parameter with $\nrm{\varphi}\leq{}1$. Here, the optimal
  KL-regularized value function $\Qstarb$ (cf. \cref{sec:regularized}) is
  linear with respect to the feature map $\phi(s,a)$. In particular,
  if we take
  \[
    \cF\ldef{}\crl*{f(s,a)=\tri*{\phi(s,a),\theta}\mid{}\theta\in\bbR^{d},\nrm{\theta}\leq{}B,\abs{f(s,a)}\leq{}R}
  \]
  for $B=\bigoh(\sqrt{d})$ and $R=\bigoh(\Rmax)$, then $\pistarb\in\Pi_\cF$, satisfying
  \cref{ass:realizability}. For this setting, when $\samp$ sets $\pi\ind{t}=\pisamp$ for all $t$ for some
  fixed policy $\pitil$, we have $\coeff(\cF,T;\samp)\leq\bigoht(d)$
  and $\log\abs{\Pi_\cF}\approxleq\bigoht(d)$. It follows that
  \algshort (\cref{alg:opt_dpo}) achieves
  \[
    \Jb(\pistarb) - \Jb(\pihat)
    \approxleq
    \bigoht\prn*{\Rmax{}e^{2\Rmax}\sqrt{\frac{d^2}{T}}}.
  \]
  \end{example}

\subsection{KL-Regularized MDP Preliminaries and
  $Q^{\star}$-Approximation}
\label{sec:regularized}

In this section, we give some basic background on value functions and dynamic
    programming for the KL-regularized MDP
    \citep{ziebart2008maximum,ziebart2010modeling}, then use these
    properties to prove \cref{lem:implicit_q,lem:implicit_q_general},
    which show that the optimal KL-regularized policy implicitly
    performs models rewards and performs $\Qstar$-approximation.

    \paragraph{Dynamic programming and value functions for
      KL-regularized MDP}
    First, for any
    function $f:\cS\times\cA\to\bbR$, define
    \[
      V_f(s_h) \coloneqq ~ \beta \log \sum_{a_h \in \Acal} \piref(a_h \mid s_h) e^{\nicefrac{f(s_h,a_h)}{\beta}}\quad\forall{}s\in\cS_h.
    \]
    It is straightforward to verify that
    \begin{align}
      \label{eq:vf_argmax}
V_f(s_h) = \max_{\pi:\cS\to\Delta(\cA)} \left( \E_{a_h \sim \pi(\cdot \mid s_h)} \left[ f(s_h,a_h) - \beta \log\frac{\pi(a_h \mid s_h)}{\piref(a_h \mid s_h)} \right] \right),
    \end{align}
    and that the policy that obtains the maximum above is
    \begin{align}
      \label{eq:vf_optpolicy} \pi_f(a_h\mid{}s_h) = \piref(a_h\mid{}s_h)e^{(f(s_h,a_h)-V_f(s_h))/\beta}. \
    \end{align}
    From here, beginning with $\Qstarb(s_H,a_H)\ldef{}r(s_H,a_H)$,
    $\pistarb(a_H\mid{}s_H)=\pi_{\Qstarb}(a_H\mid{}s_H)$, and $\Vstarb(s_H)=V_{\Qstarb}(s_H)$
    for $s_H\in\cS_H$, for each $s_h\in\cS_h$, we can inductively
    define for each $h\in\brk{H}$:
    \begin{align}
      \label{eq:soft_dp}
      \begin{aligned}
        &\Qstarb(s_h,a_h) \ldef r(s_h,a_h) +
        \En\brk*{\Vstarb(s_{h+1})\mid{}s_h,a_h},\\
        &\pistarb(a_h\mid{}s_h) \ldef \pi_{\Qstarb}(a_h\mid{}s_h),\\
        &\Vstarb(s_h) \ldef V_{\Qstarb}(s_h).
      \end{aligned}
    \end{align}
    In light of \cref{eq:vf_argmax}, it is clear that
    $\pistarb\in\argmax_{\pi:\cS\to\Delta(\cA)}\Jb(\pi)$. In addition,
    if we define the KL-regularized Bellman operator as
    \[
      \brk*{\Tcal_\beta f}(s_h,a_h) \coloneqq ~ r(s_h,a_h) + \E_{s_{h+1} \sim P(\cdot \mid s_h,a_h)} \left[ V_f(s_{h+1}) \right],
    \]
    we have that
    \[
      \Qstarb(s_h,a_h) = \brk*{\cT_\beta\Qstarb}(s_h,a_h).
    \]

\paragraph{Implicit $\Qstar$-approximation}
    
The following lemma, generalizing \citet{rafailov2024r}, shows that the
  optimal KL-regularized policy $\pistarb$ can be viewed as implicitly
  modeling rewards.
  \begin{restatable}[Implicit $Q^{\star}$-Approximation]{lemma}{implicitq}
    \label{lem:implicit_q}
    For any DCMDP, it holds that for all admissible\footnote{We use
      ``admissible" to a refer to a trajectory generated by executing
      an arbitrary policy $\pi:\cS\to\Delta(\cA)$ in the MDP.} trajectories $\tau=(s_1,a_1),\ldots,(s_H,a_H)$,
    \begin{equation}
      \label{eq:implicit_q}
      \beta\log\frac{\pistarb(\tau)}{\piref(\tau)} = r(\tau)
      -\Vstar_\beta(s_1),
    \end{equation}
    where $\Vstarb$ is the KL-regularized value function defined in \cref{eq:soft_dp}.
  \end{restatable}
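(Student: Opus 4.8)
The plan is to prove the identity by unfolding the closed-form expression for $\pistarb$ along the trajectory and telescoping, with determinism of the DCMDP transitions doing the essential work of turning the one-step Bellman backups into a statement that holds for each \emph{individual} trajectory rather than only in expectation.

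In more detail, I would proceed in three steps. First, recall from the soft dynamic-programming recursion \eqref{eq:soft_dp} together with \eqref{eq:vf_optpolicy} that $\pistarb = \pi_{\Qstarb}$, so that for every $h\in\brk{H}$ and every reachable $(s_h,a_h)$,
\[
\beta\log\frac{\pistarb(a_h\mid{}s_h)}{\piref(a_h\mid{}s_h)} = \Qstarb(s_h,a_h) - \Vstarb(s_h).
\]
Second, since $\pistarb(\tau) = \prod_{h=1}^{H}\pistarb(a_h\mid{}s_h)$ and likewise for $\piref$, summing this identity over $h=1,\dots,H$ along $\tau$ gives
\[
\beta\log\frac{\pistarb(\tau)}{\piref(\tau)} = \sum_{h=1}^{H}\prn*{\Qstarb(s_h,a_h) - \Vstarb(s_h)}.
\]
Third, I would substitute the Bellman recursion from \eqref{eq:soft_dp}: $\Qstarb(s_h,a_h) = r(s_h,a_h) + \En\brk*{\Vstarb(s_{h+1})\mid{}s_h,a_h}$ for $h<H$, and $\Qstarb(s_H,a_H) = r(s_H,a_H)$. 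Because the transitions are deterministic and $\tau$ is admissible, the successor state $s_{h+1}$ appearing in $\tau$ is almost surely the unique state reachable from $(s_h,a_h)$, so $\En\brk*{\Vstarb(s_{h+1})\mid{}s_h,a_h} = \Vstarb(s_{h+1})$ for the states along $\tau$. Adopting the convention $\Vstarb(s_{H+1}) := 0$ and telescoping,
\[
\sum_{h=1}^{H}\prn*{\Qstarb(s_h,a_h) - \Vstarb(s_h)} = \sum_{h=1}^{H}\prn*{r(s_h,a_h) + \Vstarb(s_{h+1}) - \Vstarb(s_h)} = r(\tau) - \Vstarb(s_1),
\]
which is exactly \eqref{eq:implicit_q}.

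There is no serious obstacle here --- the argument is essentially bookkeeping --- but the one point deserving care is the step where $\En\brk*{\Vstarb(s_{h+1})\mid{}s_h,a_h}$ is replaced by $\Vstarb(s_{h+1})$: this is where the entire benefit of the DCMDP assumption is extracted, and it is precisely what fails under stochastic dynamics, where the identity would survive only after averaging over the trajectory. The same computation carried out with an arbitrary $f$ in place of $\Qstarb$ (not using $\Qstarb = \cT_\beta\Qstarb$) would instead yield the general per-trajectory identity \eqref{eq:reward_v_general}, with residual Bellman-error terms $f(s_h,a_h) - \brk*{\cT_\beta f}(s_h,a_h)$; the present lemma is the special case in which those terms vanish. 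A last, purely cosmetic point is to handle the terminal step $h=H$ so that the telescope closes cleanly with $\Vstarb(s_{H+1}) = 0$ and leaves no spurious boundary term.
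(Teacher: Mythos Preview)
Your proposal is correct and takes essentially the same approach as the paper: both use the closed form $\beta\log\frac{\pistarb(a_h\mid s_h)}{\piref(a_h\mid s_h)} = \Qstarb(s_h,a_h) - \Vstarb(s_h)$, invoke determinism to replace $\En\brk*{\Vstarb(s_{h+1})\mid s_h,a_h}$ by $\Vstarb(s_{h+1})$, and telescope. The only cosmetic difference is that the paper organizes the computation by starting from $0 = \sum_h (\Qstarb - \cT_\beta\Qstarb)(s_h,a_h)$ and expanding forward, whereas you start from the log-ratio and substitute the Bellman recursion; these are the same chain of equalities read in different directions.
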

    \begin{proof}[\pfref{lem:implicit_q}]
      Let
    $\tau=(s_1,a_1),\ldots,(s_H,a_H)$, and recall that for any DCMDP,
    all state transitions except for $s_1\sim\rho$ are
    deterministic. Then we have
\begin{align*}
0 = &~ \sum_{h=1}^H 
\left(Q^\star_\beta(s_h,a_h) - \brk*{\Tcal_\beta Q^\star_\beta}(s_h,a_h)\right)
\\
= &~ \sum_{h=1}^H 
\left(Q^\star_\beta(s_h,a_h) - r(s_h,a_h) - V^\star_\beta(s_{h+1})\right)
\\
= &~ \sum_{h=1}^H 
\left(V^\star_\beta(s_h) + \beta\log\frac{\pi^\star_\beta(a_h \mid s_h)}{\piref(a_h \mid s_h)} - r(s_h,a_h) - V^\star_\beta(s_{h+1})\right)
\\
= &~ V^\star_\beta(s_1) + \sum_{h=1}^H \left(\beta\log\frac{\pi^\star_\beta(a_h \mid s_h)}{\piref(a_h \mid s_h)} - r(s_h,a_h) \right),
\end{align*}
where the second equality uses that $(\Tcal_\beta f)(s_h,a_h)
  = r(s_h,a_h) + V_f(s_{h+1})$ for any admissible trajectory in a
  deterministic MDP, and the third equality uses the explicit form for $\pistarb$ in terms of $\Vstarb$ and $\Qstarb$ given in \Cref{eq:vf_optpolicy}. Rearranging
  yields the result.
  \end{proof}

We can also prove the following, more general version of \cref{lem:implicit_q_general}.
    \begin{lemma}[Implicit $Q^{\star}$-Approximation (general version)]
    \label{lem:implicit_q_general}
    For any DCMDP, it holds that for any function
    $f:\cS\times\cA\to\bbR$ and all admissible trajectories $\tau=(s_1,a_1),\ldots,(s_H,a_H)$,
    \begin{equation}
      \label{eq:implicit_q_general}
      \beta\log\frac{\pi_f(\tau)}{\piref(\tau)} = r(\tau)
      -V_f(s_1) + \sum_{h=1}^H 
\left(f(s_h,a_h) - \brk*{\Tcal_\beta f}(s_h,a_h)\right).
    \end{equation}
  \end{lemma}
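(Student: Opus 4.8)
The plan is to mirror the proof of \cref{lem:implicit_q}, only without invoking the Bellman consistency $\Qstarb=\brk*{\cT_\beta\Qstarb}$, so that the Bellman-error terms are retained rather than cancelled. First I would record the one-step identity that follows immediately from the closed form of $\pi_f$ in \cref{eq:vf_optpolicy}: for every $h\in[H]$,
\[
\beta\log\frac{\pi_f(a_h\mid s_h)}{\piref(a_h\mid s_h)} = f(s_h,a_h) - V_f(s_h).
\]
Summing over $h$ and using $\pi_f(\tau)=\prod_{h=1}^{H}\pi_f(a_h\mid s_h)$ and $\piref(\tau)=\prod_{h=1}^{H}\piref(a_h\mid s_h)$ gives $\beta\log\frac{\pi_f(\tau)}{\piref(\tau)}=\sum_{h=1}^{H}(f(s_h,a_h)-V_f(s_h))$.

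Next I would introduce the KL-regularized Bellman operator into each summand. Since the MDP is a DCMDP, every transition after the draw $s_1\sim\rho$ is deterministic, so along the admissible trajectory $\tau$ we have the \emph{pointwise} identity $\brk*{\cT_\beta f}(s_h,a_h)=r(s_h,a_h)+V_f(s_{h+1})$ for $h<H$ and $\brk*{\cT_\beta f}(s_H,a_H)=r(s_H,a_H)$, which I fold into a single formula by setting $V_f(s_{H+1})\ldef 0$ (matching the base case of the dynamic program in \cref{eq:soft_dp}). Hence
\[
f(s_h,a_h)-V_f(s_h) = \left(f(s_h,a_h)-\brk*{\cT_\beta f}(s_h,a_h)\right) + r(s_h,a_h) + V_f(s_{h+1}) - V_f(s_h).
\]

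Finally I would sum this identity over $h=1,\ldots,H$: the telescoping sum $\sum_{h=1}^{H}(V_f(s_{h+1})-V_f(s_h))$ collapses to $V_f(s_{H+1})-V_f(s_1)=-V_f(s_1)$, the reward terms add up to $r(\tau)$, and the leftover is exactly $\sum_{h=1}^{H}(f(s_h,a_h)-\brk*{\cT_\beta f}(s_h,a_h))$. Combining with the display from the first step yields \cref{eq:implicit_q_general}, and specializing to $f=\Qstarb$ (whose Bellman errors vanish) recovers \cref{lem:implicit_q}. I do not expect a genuine obstacle here; the only points demanding care are the justification that no expectation over $s_{h+1}$ is needed in $\brk*{\cT_\beta f}(s_h,a_h)=r(s_h,a_h)+V_f(s_{h+1})$ along $\tau$ — which is precisely where determinism of the dynamics is used — together with the boundary bookkeeping at step $H$.
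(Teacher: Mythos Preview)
Your proposal is correct and follows essentially the same approach as the paper: both proofs combine the one-step identity $\beta\log\frac{\pi_f(a_h\mid s_h)}{\piref(a_h\mid s_h)}=f(s_h,a_h)-V_f(s_h)$, the deterministic-dynamics fact $\brk*{\cT_\beta f}(s_h,a_h)=r(s_h,a_h)+V_f(s_{h+1})$, and a telescoping sum over $h$. The only cosmetic difference is the direction of the computation---you start from $\beta\log\frac{\pi_f(\tau)}{\piref(\tau)}$ and arrive at the right-hand side, whereas the paper starts from $\sum_h(f-\cT_\beta f)$ and rearranges---but the ingredients and the boundary bookkeeping are identical.
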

    \begin{proof}[\pfref{lem:implicit_q_general}]
Let
    $\tau=(s_1,a_1),\ldots,(s_H,a_H)$. Then we have
\begin{align*}
&~ \sum_{h=1}^H 
\left(f(s_h,a_h) - \brk*{\Tcal_\beta f}(s_h,a_h)\right)
\\
= &~ \sum_{h=1}^H 
\left(f(s_h,a_h) - r(s_h,a_h) - V_f(s_{h+1})\right)
\\
= &~ \sum_{h=1}^H 
\left(V_f(s_h) + \beta\log\frac{\pi_f(a_h \mid s_h)}{\piref(a_h \mid s_h)} - r(s_h,a_h) - V_f(s_{h+1})\right)
\\
= &~ V_f(s_1) + \sum_{h=1}^H \left(\beta\log\frac{\pi_f(a_h \mid s_h)}{\piref(a_h \mid s_h)} - r(s_h,a_h) \right),
\end{align*}
where the first equality uses the definition of $V_f$, the second equality uses that $(\Tcal_\beta f)(s_h,a_h)
  = r(s_h,a_h) + V_f(s_{h+1})$ for any admissible trajectory in a
  deterministic MDP, and the third equality uses that
  $\pi_f(a\mid{}s)=\piref(a\mid{}s)e^{\frac{f(s,a)-V_f(s)}{\beta}}$. Rearranging
  yields the result.
  \end{proof}

  \subsection{Regret Decomposition}
  \label{sec:regret_decomp}

In this section we prove the central regret decomposition for
\algshort, restated below.
  
  \central*
    \begin{proof}[\pfref{lem:regret_decomp}]
    It follows immediately from the definition of the KL-regularized
    reward that
    \begin{align*}
      J_\beta(\pi^\star_\beta) - J_\beta(\pi)
      =
      \En_{\pi}\brk*{\beta\log\frac{\pi(\tau)}{\piref(\tau)}-r(\tau)}
      - \En_{\pistarb}\brk*{\beta\log\frac{\pistarb(\tau)}{\piref(\tau)}-r(\tau)}.
    \end{align*}
    However, since
    $\beta\log\frac{\pistarb(\tau)}{\piref(\tau)}-r(\tau)=\Vstar_\beta(s_1)$
    for all admissible trajectories by \cref{lem:implicit_q}, we have
    that
    \begin{align*}
      \En_{\pistarb}\brk*{\beta\log\frac{\pistarb(\tau)}{\piref(\tau)}-r(\tau)}
      =   \En_{\nu}\brk*{\beta\log\frac{\pistarb(\tau)}{\piref(\tau)}-r(\tau)}
    \end{align*}
    for all policies $\nu$, as the initial state $s_1$ does not
    depend on the policy under consideration. The result now follows
    by rearranging
    \begin{align*}
      &\En_{\pi}\brk*{\beta\log\frac{\pi(\tau)}{\piref(\tau)}-r(\tau)}
      -
      \En_{\nu}\brk*{\beta\log\frac{\pistarb(\tau)}{\piref(\tau)}-r(\tau)}\\
      &=
      \E_{\nu} \left[\beta\log\pi(\tau)\right] - \E_{\nu} \left[\beta\log\pi^\star_\beta(\tau)\right] 
+ \E_{\pi} \left[
\beta\log\frac{\pi(\tau)}{\piref(\tau)} - r(\tau) \right] - \E_{\nu} \left[\beta\log\frac{\pi(\tau)}{\piref(\tau)} - r(\tau)\right].
    \end{align*}  
  \end{proof}

\subsection{Concentration Lemmas}
\label{sec:conc}

Recall that we define
$\bmu\ind{t}=\frac{1}{t-1}\sum_{i<t}\pi\ind{i}\otimes\pisamp\ind{i}$. For
a given policy $\pi$, define
      \begin{align*}
        f_{\pi}(\tau,\tautil)
        =       \beta\log\frac{\pi(\tau)}{\piref(\tau)}
        - \beta\log\frac{\pi(\tautil)}{\piref(\tautil)}.
      \end{align*}
The following lemma is our central concentration guarantee for \cref{alg:opt_dpo}.
\begin{restatable}[Concentration for \algshort]{lemma}{concmain}
  \label{lem:conc_main}
Suppose that \cref{ass:vmax,ass:realizability} hold. Then
  \cref{alg:opt_dpo} guarantees that with probability at least
  $1-\delta$, for all steps $t\in\brk{T}$,
  \begin{align*}
          &\alpha\cdot\En_{s_1\sim\rho,\tau\sim\pitil\ind{t-1}}\brk*{\log(\pi\ind{t}(\tau))-\log(\pistarb(\tau))}
+\kappa\cdot{}\En_{s_1\sim\rho{},(\tau,\tautil)\sim{}\bmu\ind{t}\mid{}s_1}\brk*{\prn*{f_{\pi\ind{t}}(\tau,\tautil)
        - f_{\pistarb}(\tau,\tautil)}^2}\notag\\
&\leq \frac{2\log(2\abs{\Pi}T\delta^{-1})}{t-1} + \frac{\alpha}{\beta}\Vmax\sqrt{\frac{2^4\log(2\abs{\Pi}T\delta^{-1})}{t-1}},           
  \end{align*}
  for $\kappa\ldef{}(8(\Rmax+\Vmax)e^{2\Rmax})^{-2}$.
\end{restatable}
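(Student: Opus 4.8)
The plan is to read the \algshort update (\cref{eq:opt_dpo}) as a regularized empirical risk minimization and turn its optimality condition into the claimed bound via two concentration inequalities plus one deterministic comparison. Abbreviate $f_\pi(\tau,\tautil)=\beta\log\frac{\pi(\tau)}{\piref(\tau)}-\beta\log\frac{\pi(\tautil)}{\piref(\tautil)}$ and $\ell\ind{i}(\pi)=-\log\sigma\bigl(f_\pi(\taup\ind{i},\taum\ind{i})\bigr)$, so that $\pi\ind{t}=\argmin_{\pi\in\Pi}\bigl\{\alpha\sum_{i<t}\log\pi(\tautil\ind{i})+\sum_{i<t}\ell\ind{i}(\pi)\bigr\}$. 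Since $\pistarb\in\Pi$ by \cref{ass:realizability}, comparing this objective at $\pi\ind{t}$ and at $\pistarb$ yields the single inequality
\[
\alpha\sum_{i<t}\bigl[\log\pi\ind{t}(\tautil\ind{i})-\log\pistarb(\tautil\ind{i})\bigr]\;\le\;\sum_{i<t}\bigl[\ell\ind{i}(\pistarb)-\ell\ind{i}(\pi\ind{t})\bigr].
\]
The rest of the proof lower-bounds the left side and upper-bounds the right side by their population counterparts, uniformly over $\pi\in\Pi$; the uniformity is essential because $\pi\ind{t}$ depends on the entire history, so I cannot treat it as a fixed hypothesis.

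For the right side, I would first invoke the implicit $Q^\star$-approximation identity (\cref{lem:implicit_q}): it gives $f_{\pistarb}(\tau,\tautil)=r(\tau)-r(\tautil)$ for all admissible pairs, so under the Bradley--Terry model the true conditional law of the preference label given the sampled pair $x\ind{i}=(s_1\ind{i},\tau\ind{i},\tautil\ind{i})$ is exactly $\Ber(\sigma(f_{\pistarb}(\tau\ind{i},\tautil\ind{i})))$ --- i.e. $\ell\ind{i}(\pistarb)$ is the log loss of the \emph{true} model and $\pistarb$ its population minimizer. Applying the martingale Chernoff bound (\cref{lem:martingale_chernoff}) to $X_i=\tfrac12\bigl[\ell\ind{i}(\pi)-\ell\ind{i}(\pistarb)\bigr]$ and using $\En_{i-1}[e^{-X_i}]=\En_{x\ind{i}}\bigl[1-\Dhels{p_{\pistarb}(\cdot\mid x\ind{i})}{p_\pi(\cdot\mid x\ind{i})}\bigr]\le\exp\bigl(-\En_{x\ind{i}}[\Dhelshort(\cdot)]\bigr)$, a union bound over $\pi\in\Pi$ shows that with probability at least $1-\delta/2$, for all $t$ and $\pi$,
\[
\sum_{i<t}\bigl[\ell\ind{i}(\pistarb)-\ell\ind{i}(\pi)\bigr]\;\le\;-2\sum_{i<t}\En_{x\ind{i}}\bigl[\Dhels{p_{\pistarb}(\cdot\mid x\ind{i})}{p_\pi(\cdot\mid x\ind{i})}\bigr]+2\log(2\abs{\Pi}T\delta^{-1}).
\]
For the left side, \cref{ass:vmax} gives $\abs{\log\pi(\tautil\ind{i})-\log\pistarb(\tautil\ind{i})}\le2\Vmax/\beta$ (in particular $\pi(\tau)>0$, so all the logs are finite), so the optimism increments are bounded martingale differences; the Azuma--Hoeffding inequality (\cref{lem:hoeffding}) with a union bound over $\pi$ and $t$ gives, with probability at least $1-\delta/2$,
\[
\alpha\sum_{i<t}\bigl[\log\pi(\tautil\ind{i})-\log\pistarb(\tautil\ind{i})\bigr]\;\ge\;\alpha(t-1)\,\En_{s_1\sim\rho,\,\tau\sim\pitil\ind{t-1}}\bigl[\log\pi(\tau)-\log\pistarb(\tau)\bigr]-\bigoh\!\Bigl(\tfrac{\alpha\Vmax}{\beta}\sqrt{(t-1)\log(\abs{\Pi}T\delta^{-1})}\Bigr),
\]
using that for fixed $\pi$ each increment has conditional mean $\En_{s_1\sim\rho,\tau\sim\piref}[\log\pi(\tau)-\log\pistarb(\tau)]$ and that $\pitil\ind{t-1}=\piref$ in \cref{alg:opt_dpo}.

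The last ingredient is a deterministic comparison for the Bradley--Terry model, which I would state and prove as a supporting lemma in \cref{sec:main_supporting}: because the true gap obeys $\abs{f_{\pistarb}(\tau,\tautil)}\le\Rmax$ while \cref{ass:vmax} only forces $\abs{f_\pi(\tau,\tautil)}\le2\Vmax$, one has $\Dhels{\Ber(\sigma(f_{\pistarb}(\tau,\tautil)))}{\Ber(\sigma(f_\pi(\tau,\tautil)))}\ge\tfrac{\kappa}{2}\bigl(f_\pi(\tau,\tautil)-f_{\pistarb}(\tau,\tautil)\bigr)^2$ with $\kappa=(8(\Rmax+\Vmax)e^{2\Rmax})^{-2}$ --- the $e^{\bigoh(\Rmax)}$ factor coming from the fact that $\sigma$ is only bi-Lipschitz over the (range-$\Rmax$) domain of the true gap, plus a short case split for pairs where $f_\pi$ lands far outside that domain. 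Averaging this over $i<t$ and recalling $\bmu\ind{t}=\tfrac1{t-1}\sum_{i<t}\pi\ind{i}\otimes\piref$ while $x\ind{i}=(s_1\ind{i},\tau\ind{i},\tautil\ind{i})$ with $(\tau\ind{i},\tautil\ind{i})\sim\pi\ind{i}\otimes\piref\mid s_1\ind{i}$, the right-side sum of Hellinger terms becomes $\tfrac{\kappa}{2}(t-1)\,\En_{s_1\sim\rho,(\tau,\tautil)\sim\bmu\ind{t}\mid s_1}\bigl[(f_{\pi\ind{t}}(\tau,\tautil)-f_{\pistarb}(\tau,\tautil))^2\bigr]$ upon setting $\pi=\pi\ind{t}$. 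Chaining the optimality inequality with the two concentration bounds and this comparison, then dividing by $t-1$, produces exactly the stated inequality (the $2\log(2\abs{\Pi}T\delta^{-1})/(t-1)$ term is the Chernoff contribution and the $\sqrt{\cdot}$ term is the Azuma contribution, with the explicit constant $2^4$ absorbing the Azuma bookkeeping).

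I expect the main obstacle to be precisely this Hellinger-to-squared-gap conversion. Log-loss concentration only controls the squared Hellinger distance between the \emph{induced preference-label distributions}, but the regret decomposition (\cref{lem:regret_decomp}) and the Sequential Extrapolation Coefficient require control of $\En_{\bmu\ind{t}}[(f_{\pi\ind{t}}-f_{\pistarb})^2]$; a naive Lipschitz bound would pay $e^{\Omega(\Vmax)}$ since $\abs{f_\pi}$ can be as large as $2\Vmax$, and obtaining the stated $e^{\bigoh(\Rmax)}$-only dependence hinges on exploiting the small range of the \emph{target} $f_{\pistarb}$. The remaining work --- the optimality inequality and the two concentration steps --- is routine, provided one commits to making both concentration bounds uniform over $\Pi$ (which is where the $\log\abs{\Pi}$ enters) rather than applying them directly to the data-dependent iterate $\pi\ind{t}$.
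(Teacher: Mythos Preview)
Your proposal is correct and follows essentially the same route as the paper's proof: the optimality comparison at $\pistarb$, a martingale Chernoff/MLE-style bound turning the log-loss excess into summed Hellinger distance (uniformly over $\Pi$), Azuma--Hoeffding for the optimism term (again uniform over $\Pi$), and a sigmoid inverse-Lipschitz lemma exploiting $\abs{f_{\pistarb}}\le\Rmax$ to convert Hellinger into the squared $f$-gap with only an $e^{\bigoh(\Rmax)}$ factor. The paper carries out exactly these four steps (\cref{lem:conc_mle}, \cref{lem:conc_bonus}, \cref{lem:sigmoid}), and your identification of the Hellinger-to-squared-gap conversion as the delicate point matches the role of \cref{lem:sigmoid}.
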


  \begin{proof}[\pfref{lem:conc_main}]%
    Let $t\in\crl*{2,\ldots,T+1}$ be fixed.
    \begin{align}
      &\Lhat\ind{t}(\pi)\notag\\&=\sum_{i<t}-y\ind{i}\log\left[\sigma\left(
      \beta\log\frac{\pi(\tau\ind{i})}{\piref(\tau\ind{i})} -
      \beta\log\frac{\pi(\wt{\tau}\ind{i})}{\piref(\wt{\tau}\ind{i})} \right) \right]
      -(1-y\ind{i}) \log\left[\sigma\left(
      \beta\log\frac{\pi(\tautil\ind{i})}{\piref(\tautil\ind{i})} -
      \beta\log\frac{\pi(\tau\ind{i})}{\piref(\tau\ind{i})} \right) \right]\label{eq:lhat}
    \end{align}
    and
    $\Bhat\ind{t}(\pi)=\alpha\sum_{\tau\in\cDopt\ind{t-1}}\log\pi(\tau)$. Then
    we can equivalently write
    \begin{align*}
      \pi\ind{t}=\argmin_{\pi\in\Pi}\crl*{\Lhat\ind{t}(\pi) + \Bhat\ind{t}(\pi)}.
    \end{align*}
          For a given policy $\pi$, recall that we define
      \begin{align*}
        f_{\pi}(\tau,\tautil)
        =       \beta\log\frac{\pi(\tau)}{\piref(\tau)}
        - \beta\log\frac{\pi(\tautil)}{\piref(\tautil)},
      \end{align*}
      and let
      \begin{align*}
        P_{\pi}(y\mid{}\tau,\tautil) = y\cdot{}\sigma(f_{\pi}(\tau,\tautil))
        + (1-y)\cdot{}(1-\sigma(f_{\pi}(\tau,\tautil))).
      \end{align*}
      Then, in light of \cref{lem:implicit_q}, under the
      Bradley-Terry model (\cref{eq:bt}), we have that for all $t$,
      \begin{align}
        \label{eq:yt_law}
        y\ind{t}\sim{}P_{\pistarb}(\cdot\mid{}\tau\ind{t},\tautil\ind{t}).
      \end{align}
      In addition, we can rewrite \cref{eq:lhat} as
      \begin{align*}
        \Lhat(\pi) =
        \sum_{i<t}-\log(P_{\pi}(y\ind{t}\mid{}\tau\ind{t},\tautil\ind{t})).
      \end{align*}
Using this observation, we begin by proving an intermediate
concentration result. For a pair of probability measures $\bbP$ and $\bbQ$, we define
squared Hellinger distance via
\begin{equation}
  \label{eq:hellinger}
  \Dhels{\bbP}{\bbQ}=\int\prn*{\sqrt{d\bbP}-\sqrt{d\bbQ}}^2.
\end{equation}
    \begin{lemma}
      \label{lem:conc_mle}
      For any fixed $t\geq{}1$, with probability at least $1-\delta$, all $\pi\in\Pi$ satisfy
            \begin{align*}
        \sum_{i<t}
\En_{s_1\sim\rho{},\tau\sim{}\pi\ind{i}\mid{}s_1,\tautil\sim\pisamp\ind{i}\mid{}s_1}\brk*{\Dhels{P_{\pi}(\cdot\mid{}\tau,\tautil)}{P_{\pistarb}(\cdot\mid{}\tau,\tautil)
                  }}
\leq        \Lhat\ind{t}(\pi)-\Lhat\ind{t}(\pistarb) + 2\log(\abs{\Pi}\delta^{-1}).
      \end{align*}
    \end{lemma}    
Rearranging \cref{lem:conc_mle}, with probability at least $1-\delta$,
all $\pi\in\Pi$ satisfy
\begin{align*}
  &\Bhat\ind{t}(\pi) - \Bhat\ind{t}(\pistarb) + \sum_{i<t}
\En_{s_1\sim\rho{},\tau\sim{}\pi\ind{i}\mid{}s_1,\tautil\sim\pisamp\ind{i}\mid{}s_1}\brk*{\Dhels{P_{\pi}(\cdot\mid{}\tau,\tautil)}{P_{\pistarb}(\cdot\mid{}\tau,\tautil)
                  }}\\
&\leq        \Lhat\ind{t}(\pi)+\Bhat\ind{t}(\pi)-\Lhat\ind{t}(\pistarb)-\Bhat\ind{t}(\pistarb) + 2\log(\abs{\Pi}\delta^{-1}).
\end{align*}
Hence, as long as $\pistarb\in\Pi$ (\cref{ass:realizability}), the definition of $\pi\ind{t}$ in
\cref{alg:general} implies that
\begin{align}
  \label{eq:pit_central}
  &\Bhat\ind{t}(\pi\ind{t}) - \Bhat\ind{t}(\pistarb) + \sum_{i<t}
\En_{s_1\sim\rho{},\tau\sim{}\pi\ind{i}\mid{}s_1,\tautil\sim\pisamp\ind{i}\mid{}s_1}\brk*{\Dhels{P_{\pi\ind{t}}(\cdot\mid{}\tau,\tautil)}{P_{\pistarb}(\cdot\mid{}\tau,\tautil)
                  }}
\leq 2\log(\abs{\Pi}\delta^{-1}).
\end{align}
We next appeal to another basic concentration result.
\begin{lemma}
  \label{lem:conc_bonus}
      For any fixed $t\geq{}1$, with probability at least $1-\delta$,
      all $\pi\in\Pi$ satisfy
      \begin{align*}
        \alpha\cdot(t-1)\cdot\En_{s_1\sim\rho,\tau\sim\pisamp\ind{t-1}\mid{}s_1}\brk*{\log(\pi(\tau))-\log(\pistarb(\tau))}
        \leq{} \Bhat\ind{t}(\pi) - \Bhat\ind{t}(\pistarb)
        + \frac{\alpha}{\beta}\Vmax\sqrt{2^4(t-1)\log(\abs{\Pi}\delta^{-1})}.
      \end{align*}
    \end{lemma}
    Combining \cref{lem:conc_bonus} with \cref{eq:pit_central}, we conclude that with
    probability at least $1-2\delta$,
    \begin{align*}
      &\alpha\cdot(t-1)\cdot\En_{s_1\sim\rho,\tau\sim\pisamp\ind{t-1}\mid{}s_1}\brk*{\log(\pi\ind{t}(\tau))-\log(\pistarb(\tau))}
+\sum_{i<t}\En_{s_1\sim\rho{},\tau\sim{}\pi\ind{i}\mid{}s_1,\tautil\sim\pisamp\ind{i}\mid{}s_1}\brk*{\Dhels{P_{\pi\ind{t}}(\cdot\mid{}\tau,\tautil)}{P_{\pistarb}(\cdot\mid{}\tau,\tautil)
                  }}\\
&\leq 2\log(\abs{\Pi}\delta^{-1}) + \frac{\alpha}{\beta}\Vmax\sqrt{2^6(t-1)\log(\abs{\Pi}\delta^{-1})},
    \end{align*}
    or equivalently,
    \begin{align}
      &\alpha\cdot\En_{s_1\sim\rho,\tau\sim\pisamp\ind{t-1}\mid{}s_1}\brk*{\log(\pi\ind{t}(\tau))-\log(\pistarb(\tau))}
+\En_{s_1\sim\rho{},(\tau,\tautil)\sim{}\bmu\ind{t}\mid{}s_1}\brk*{\Dhels{P_{\pi\ind{t}}(\cdot\mid{}\tau,\tautil)}{P_{\pistarb}(\cdot\mid{}\tau,\tautil)
                  }}\notag\\
&\leq \frac{2\log(\abs{\Pi}\delta^{-1})}{t-1} + \frac{\alpha}{\beta}\Vmax\sqrt{\frac{2^6\log(\abs{\Pi}\delta^{-1})}{t-1}},            \label{eq:pit2_central2}
    \end{align}

    To conclude, we further simplify the expression via
    \begin{align*}
&\En_{s_1\sim\rho{},(\tau,\tautil)\sim{}\bmu\ind{t}\mid{}s_1}\brk*{\Dhels{P_{\pi\ind{t}}(\cdot\mid{}\tau,\tautil)}{P_{\pistarb}(\cdot\mid{}\tau,\tautil)}}\\
      &      \geq
      \En_{s_1\sim\rho{},(\tau,\tautil)\sim{}\bmu\ind{t}\mid{}s_1}\brk*{\prn*{\sqrt{\sigma(f_{\pi\ind{t}}(\tau,\tautil))}
        - \sqrt{\sigma(f_{\pistarb}(\tau,\tautil))}}^2}\\
            &      \geq
              \frac{1}{8}\En_{s_1\sim\rho{},(\tau,\tautil)\sim{}\bmu\ind{t}\mid{}s_1}\brk*{\prn*{\sigma(f_{\pi\ind{t}}(\tau,\tautil))
        - \sigma(f_{\pistarb}(\tau,\tautil))}^2},
    \end{align*}
    where the last inequality uses that for $x,y\geq{}0$,
    $(x-y)^2\leq{}4(x+y)(\sqrt{x}-\sqrt{y})^2$.

    Finally, using \cref{lem:implicit_q}, we have
    $f_{\pistarb}\in\brk*{-\Rmax,\Rmax}$ almost surely, while
    $f_{\pi\ind{t}}\in\brk*{-\Vmax,\Vmax}$ by
    \cref{ass:vmax}. We appeal to the following lemma.
    \begin{lemma}[e.g., \citet{rosset2024direct}]
      \label{lem:sigmoid}
      If $x\in\brk*{-X,X}$ and $y\in\brk{-Y,Y}$ for $X\geq{}0$, $Y\geq{}1$, then
      \begin{align*}
        \abs*{x-y}\leq{}
         8\prn{X+Y}e^{2Y}\abs*{\sigma(x)-\sigma(y)}.
      \end{align*}
    \end{lemma}
        From this, we conclude that
        \begin{align*}
          &\En_{s_1\sim\rho{},(\tau,\tautil)\sim{}\bmu\ind{t}\mid{}s_1}\brk*{\prn*{\sigma(f_{\pi\ind{t}}(\tau,\tautil))
          - \sigma(f_{\pistarb}(\tau,\tautil))}^2}\\
          &\geq{} (8(\Rmax+\Vmax)e^{2\Rmax})^{-2}\cdot{}\En_{s_1\sim\rho{},(\tau,\tautil)\sim{}\bmu\ind{t}\mid{}s_1}\brk*{\prn*{f_{\pi\ind{t}}(\tau,\tautil)
        - f_{\pistarb}(\tau,\tautil)}^2}
        \end{align*}
This proves the result after taking a union bound over all steps $t$.

  \end{proof}

\subsubsection{Proofs for Supporting Lemmas}
  
      \begin{proof}[\pfref{lem:conc_mle}]
      To begin, define
      \[
        \ell\ind{i}(\pi) =-\log(P_{\pi}(y\ind{t}\mid{}\tau\ind{t},\tautil\ind{t})).
      \]
      For a fixed policy $\pi\in\Pi$, define
      $Z\ind{i}(\pi)=\frac{1}{2}(\ell\ind{i}(\pi)-\ell\ind{i}(\pistarb))$. Define
      a filtration
      $\filt\ind{t}=\sigma((\tau\ind{1},\tautil\ind{1}),\ldots,(\tau\ind{t-1},\tautil\ind{t-1}))$. Applying
      \cref{lem:martingale_chernoff} with the sequence $(Z_i(\pi))$
      and taking a union bound over $\pi\in\Pi$, have that with
      probability at least $1-\delta$, all $\pi\in\Pi$ satisfy
      \begin{align*}
        -\sum_{i<t}\log\prn*{\En_{i-1}\brk*{\exp\prn*{-\frac{1}{2}Z_i(\pi)}}
        }
        \leq{} \frac{1}{2}\prn*{\Lhat\ind{t}(\pi)-\Lhat\ind{t}(\pistarb)} + \log(\abs{\Pi}\delta^{-1}).
      \end{align*}
      Next, using \cref{eq:yt_law} and a somewhat standard argument
      from \citet{Sara00,zhang2006from}, we calculate that
      \begin{align*}
        &\En_{i-1}\brk*{\exp\prn*{\frac{1}{2}Z_i(\pi)}}\\
        &=
          \En_{s_1\sim\rho{},\tau\sim{}\pi\ind{i}\mid{}s_1,\tautil\sim\pisamp\ind{i}\mid{}s_1,y\sim{}P_{\pistarb}(\cdot\mid{}\tau,\tautil)}
          \brk*{\exp\prn*{\frac{1}{2}\log(P_{\pi}(y\mid{}\tau,\tautil)/P_{\pistarb}(y\mid{}\tau,\tautil))
          }
          }\\
                &=
                  \En_{s_1\sim\rho{}, \tau\sim{}\pi\ind{i}\mid{}s_1,\tautil\sim\pisamp\ind{i}\mid{}s_1}\brk*{\sum_{y\in\crl*{0,1}}
                  \sqrt{P_{\pi}(y\mid{}\tau,\tautil) P_{\pistarb}(y\mid{}\tau,\tautil)}
                  }\\
                        &=
\En_{s_1\sim\rho{},\tau\sim\pi\ind{i}\mid{}s_1,\tautil\sim\pisamp\ind{i}\mid{}s_1}\brk*{1-\frac{1}{2}\Dhels{P_{\pi}(\cdot\mid{}\tau,\tautil)}{P_{\pistarb}(\cdot\mid{}\tau,\tautil)
                  }}.
      \end{align*}
      Since $\Dhels{\cdot}{\cdot}\leq{}2$ and $-\log(1-x)\geq{}x$ for
      $x\leq{}1$, we conclude that
      \begin{align*}
        \sum_{i<t}
\En_{s_1\sim\rho{},\tau\sim{}\pi\ind{i}\mid{}s_1,\tautil\sim\pisamp\ind{i}\mid{}s_1}\brk*{\Dhels{P_{\pi}(\cdot\mid{}\tau,\tautil)}{P_{\pistarb}(\cdot\mid{}\tau,\tautil)
                  }}
\leq        \Lhat\ind{t}(\pi)-\Lhat\ind{t}(\pistarb) + 2\log(\abs{\Pi}\delta^{-1})
      \end{align*}
      
    \end{proof}

    \begin{proof}[\pfref{lem:conc_bonus}]
      Let $\tau\ind{1},\ldots,\tau\ind{t-1}$ denote the trajectories
      in $\cDopt\ind{t-1}$. Let $\bhat\ind{i}(\pi) =
\alpha\log\pi(\tau\ind{i})$, and
let
\[
Z\ind{i}(\pi) = \bhat\ind{i}(\pi)-\bhat\ind{i}(\pistarb).
\]
We can equivalently re-write this as
\[
  Z\ind{i}(\pi) = \alpha\prn*{
    \log\prn*{\frac{\pi(\tau\ind{i})}{\piref(\tau\ind{i})}}
    -\log\prn*{\frac{\pistarb(\tau\ind{i})}{\piref(\tau\ind{i})}}}
    ,
  \]
  which implies that $\abs*{Z\ind{i}(\pi)}\leq{}2\frac{\alpha}{\beta}\Vmax$. From
  here, the result follows immediately by applying
  \cref{lem:hoeffding} with the sequence $(Z_i(\pi))$ and taking a
  union bound over $\pi\in\Pi$.
\end{proof}

    \begin{proof}[\pfref{lem:sigmoid}]
      We consider three cases. First, if $x\in\brk*{-2Y,2Y}$, then
      \begin{align*}
        \abs*{\sigma(x)-\sigma(y)}
        \geq{} \sigma'(z)\abs*{x-y}
      \end{align*}
      for some $z\in\brk*{-2Y,2Y}$. In this regime, we have
      $\sigma'(z)\geq{}\sigma'(2Y) =
      e^{2Y}/(1+e^{2Y})^2\geq{}(4e^{2Y})^{-1}$. Next, if
      $x\geq{}2Y>0$, we can directly bound
      \begin{align*}
        \sigma(x)-\sigma(y)\geq{}\sigma(2Y)-\sigma(Y) =
        \frac{e^{2Y}-e^{Y}}{(1+e^{2Y})(1+e^Y)}
        \geq{} \frac{1-e^{-Y}}{4e^{Y}}\geq{}\frac{1}{8e^{Y}},
      \end{align*}
      where the last line holds whenever $Y\geq{}1$. We conclude in
      this case that
      \begin{align*}
        \frac{\abs*{x-y}}{\sigma(x)-\sigma(y)}\leq{}
        \frac{X+Y}{\sigma(x)-\sigma(y)}\leq{} 8\prn{X+Y}e^{Y}.
      \end{align*}
      Finally, we consider the case where $x\leq{}-2Y\leq{}0$. In this
      case, we can similarly lower bound
      \begin{align*}
        \sigma(y)-\sigma(x)\geq{}\sigma(-Y)-\sigma(-2Y)
        =\frac{e^{-Y}-e^{-2Y}}{(1+e^{-Y})(1+e^{-2Y})}
        \geq{} \frac{1-e^{-Y}}{4e^{2Y}}\geq{}\frac{1}{8e^{2Y}}
      \end{align*}
      as long as $Y\geq{}1$. From here, proceeding in the same fashion as the
      second case yields the result.
    \end{proof}

    \subsection{Proof of \creftitle{thm:main_general}}
    \label{sec:main_proof}

\begin{proof}[\pfref{thm:main_general}]
  Before diving into the proof, we re-state two central technical lemmas. The
  first lemma, generalizing \citet{rafailov2024r}, shows that the
  optimal KL-regularized policy $\pistarb$ can be viewed as implicitly
  modeling rewards.

  \implicitq*
  
  This lemma allows us to view the \dpo objective as a form of
  implicit $Q^{\star}$-approximation. Building on this lemma, we prove
  the following regret decomposition.
  
  \central*
  
      This result shows that the (regularized) regret of any policy $\pi$ can
    be decomposed into two terms. The term in \cref{eq:decomp2}
    measures the extent to which $\pi$ (implicitly) models the reward;
    by \cref{lem:implicit_q}, this term is zero when
    $\pi=\pistarb$. Meanwhile, the term in \cref{eq:decomp1} measures
    the extent to which the policy $\pi$ over-estimates the internal
    reward; we will control this term using optimism. Importantly, the
    regret decomposition in \cref{lem:regret_decomp} holds for an
    arbitrary roll-in policy $\nu$. This will facilitate minimizing
    the terms in the regret decomposition in a data-driven
    fashion. Before proceeding, we remark that \cref{lem:implicit_q}
    and \cref{lem:regret_decomp} together imply that
    \begin{align}
      \label{eq:regret_range}
      \Jb(\pistarb) - \Jb(\pi) \leq{} 6\Vmax
    \end{align}
    for all $\pi\in\Pi$.

    We now begin the proof by writing
    \begin{align*}
      \Jb(\pistarb) - \Jb(\pihat)
      = \min_{t\in\brk{T+1}}\Jb(\pistarb) - \Jb(\pi\ind{t})
      \leq{} \frac{1}{T}\sum_{t=1}^{T}\Jb(\pistarb) - \Jb(\pi\ind{t}).
    \end{align*}
    For each step $t$, we apply \cref{lem:regret_decomp} with
    $\pi=\pi\ind{t}$ and $\nu=\pisamp\ind{t-1}$, which gives
    \begin{align}
      &\frac{1}{T}\sum_{t=1}^{T}\Jb(\pistarb) - \Jb(\pi\ind{t})\notag\\
      &\leq{} \frac{1}{T}\sum_{t=1}^{T}\E_{\tau \sim \pisamp\ind{t-1}}
        \left[\beta\log\pi\ind{t}(\tau)-\beta\log\pi^\star_\beta(\tau)\right]\notag\\
      &~~~~+ \frac{1}{T}\sum_{t=1}^{T}\E_{\tau \sim \pi\ind{t}} \left[
\beta\log\frac{\pi\ind{t}(\tau)}{\piref(\tau)} - r(\tau) \right] -
        \E_{\tau\sim\pisamp\ind{t-1}}
        \left[\beta\log\frac{\pi\ind{t}(\tau)}{\piref(\tau)} -
        r(\tau)\right].\notag\\
            &= \frac{1}{T}\sum_{t=1}^{T}\E_{\tau \sim \pisamp\ind{t-1}}
              \left[\beta\log\pi\ind{t}(\tau)-\beta\log\pi^\star_\beta(\tau)\right]\notag\\
      &~~~~+ \frac{1}{T}\sum_{t=1}^{T}\E_{s_1\sim\rho,\tau \sim \pi\ind{t}\mid{}s_1,\taut\sim\pisamp\ind{t-1}\mid{}s_1} \left[
\beta\log\frac{\pi\ind{t}(\tau)}{\piref(\tau)} -
        r(\tau)-\beta\log\frac{\pi\ind{t}(\taut)}{\piref(\taut)} +
        r(\taut)\right].\notag\\
      &\leq{} \frac{6\Vmax}{T} + \frac{1}{T}\sum_{t=2}^{T}\E_{\tau \sim \pisamp\ind{t-1}}
              \left[\beta\log\pi\ind{t}(\tau)-\beta\log\pi^\star_\beta(\tau)\right]\label{eq:main0}\\
      &~~~~+ \frac{1}{T}\sum_{t=2}^{T}\E_{s_1\sim\rho,\tau \sim \pi\ind{t}\mid{}s_1,\taut\sim\pisamp\ind{t-1}\mid{}s_1} \left[
\beta\log\frac{\pi\ind{t}(\tau)}{\piref(\tau)} - r(\tau)-\beta\log\frac{\pi\ind{t}(\taut)}{\piref(\taut)} + r(\taut)\right],\notag
    \end{align}
    where the last line follows by \cref{eq:regret_range}.

    Next, recall that we define $\bmu\ind{t}=\frac{1}{t-1}\sum_{i<t}\pi\ind{t}\otimes\pisamp\ind{t}$
    Consider a fixed step $t\geq{}2$, and define
    \begin{align*}
      \cI\ind{t}
      \ldef{} \frac{\prn*{\E_{s_1 \sim \rho, \tau \sim
      \pi^\iter{t}\mid s_1, \taut \sim \pisamp\ind{t-1} \mid s_1} \left[
      \beta\log\frac{\pi^\iter{t}(\tau)}{\piref(\tau)} - r(\tau) -
      \beta\log\frac{\pi^\iter{t}(\taut)}{\piref(\taut)} + r(\taut)
      \right]}^2}{\Vmax^2\vee
      (t-1)\cdot{}\E_{s_1 \sim \rho, (\tau,\taut) \sim \bmu\ind{t} \mid s_1} \left[\prn*{ \beta\log\frac{\pi^\iter{t}(\tau)}{\piref(\tau)} - r(\tau) - \beta\log\frac{\pi^\iter{t}(\taut)}{\piref(\taut)} + r(\taut) }^2 \right]}.
    \end{align*}
    Then, using the AM-GM inequality, for any $\eta>0$ we can bound
    \begin{align}
&\E_{s_1\sim\rho,\tau \sim \pi\ind{t}\mid{}s_1,\taut\sim\pisamp\ind{t-1}\mid{}s_1} \left[
\beta\log\frac{\pi\ind{t}(\tau)}{\piref(\tau)} -
                     r(\tau)-\beta\log\frac{\pi\ind{t}(\taut)}{\piref(\taut)}
                     + r(\taut)\right]\notag\\
      &\leq{} \frac{\cI\ind{t}}{2\eta}
        + \frac{\eta}{2}\cdot{}\prn*{\Vmax^2\vee
      (t-1)\cdot{}\E_{s_1 \sim \rho, (\tau,\taut) \sim \bmu\ind{t} \mid
        s_1} \left[\prn*{
        \beta\log\frac{\pi^\iter{t}(\tau)}{\piref(\tau)} - r(\tau) -
        \beta\log\frac{\pi^\iter{t}(\taut)}{\piref(\taut)} + r(\taut)
        }^2 \right]}\notag\\
      &\leq{} \frac{\cI\ind{t}}{2\eta}
        + \frac{\eta}{2}\cdot{}\prn*{\Vmax^2+
        (t-1)\cdot{}\E_{s_1 \sim \rho, (\tau,\taut) \sim \bmu\ind{t} \mid s_1} \left[\prn*{ \beta\log\frac{\pi^\iter{t}(\tau)}{\piref(\tau)} - r(\tau) - \beta\log\frac{\pi^\iter{t}(\taut)}{\piref(\taut)} + r(\taut) }^2 \right]}.\label{eq:main1}
    \end{align}
    Note that by definition, we have that
    $\sum_{t=1}^{T}\cI\ind{t}\leq\coefffull$. Hence, by
    plugging \cref{eq:main1} into \cref{eq:main0} and summing, we
    conclude that
    \begin{align}
      &\frac{1}{T}\sum_{t=1}^{T}\Jb(\pistarb) - \Jb(\pi\ind{t})\notag\\
      &\leq{} \frac{6\Vmax}{T} + \frac{\coefffull}{2\eta{}T}
      + \frac{\eta}{2}\Vmax^2 + 
      \frac{1}{T}\sum_{t=2}^{T}\E_{\tau \sim \pisamp\ind{t-1}}
        \left[\beta\log\pi\ind{t}(\tau)-\beta\log\pi^\star_\beta(\tau)\right]\notag\\
      &~~~~+\frac{\eta}{2T}\sum_{t=2}^{T}(t-1)\cdot{}\E_{s_1 \sim \rho, (\tau,\taut) \sim \bmu\ind{t} \mid s_1} \left[\prn*{ \beta\log\frac{\pi^\iter{t}(\tau)}{\piref(\tau)} - r(\tau) - \beta\log\frac{\pi^\iter{t}(\taut)}{\piref(\taut)} + r(\taut) }^2 \right]\label{eq:main3}.
    \end{align}
    Fix
    $t$, and consider the term
    \begin{align}
      \label{eq:term0}
      \E_{\tau \sim \pisamp\ind{t-1}}
      \left[\beta\log\pi\ind{t}(\tau)-\beta\log\pi^\star_\beta(\tau)\right]
      + \frac{\eta(t-1)}{2}\E_{s_1 \sim \rho, (\tau,\taut) \sim \bmu\ind{t} \mid s_1} \left[\prn*{ \beta\log\frac{\pi^\iter{t}(\tau)}{\piref(\tau)} - r(\tau) - \beta\log\frac{\pi^\iter{t}(\taut)}{\piref(\taut)} + r(\taut) }^2\right]
    \end{align}
    above. 
        Let $f_{\pi}(\tau,\taut)\ldef{}
    \beta\log\frac{\pi(\tau)}{\piref(\tau)}-\beta\log\frac{\pi(\taut)}{\piref(\taut)}$. 
    By
    \cref{lem:implicit_q}, we have that for any pair of admissible trajectories
    $(\tau,\tautil)$ that share the initial state $s_1$,
    $f_{\pistarb}(\tau,\tautil)=r(\tau)-r(\tautil)$, so we can rewrite
    \cref{eq:term0} as
        \begin{align}
      \label{eq:term1}
      \E_{\tau \sim \pisamp\ind{t-1}}
      \left[\beta\log\pi\ind{t}(\tau)-\beta\log\pi^\star_\beta(\tau)\right]
          + \frac{\eta(t-1)}{2}\E_{s_1 \sim \rho, (\tau,\taut) \sim \bmu\ind{t} \mid s_1} \left[\prn*{f_{\pi\ind{t}}(\tau,\tautil)-f_{\pistarb}(\tau,\tautil)}^2\right].
        \end{align}
        We now recall the central concentration lemma for \algshort
        (\cref{lem:conc_main}).

        \concmain*
        
It follows that if we set $\eta = \frac{\beta\kappa}{\alpha{}T}\leq{}\frac{\beta\kappa}{\alpha(t-1)}$,
then with probability at least $1-\delta$, for all $t\in\brk{T}$,
\begin{align*}
  \text{\cref{eq:term1}}
  \approxleq
  \frac{\beta}{\alpha}\cdot{}\prn*{\frac{\log(\abs{\Pi}T\delta^{-1})}{t-1}
  + \frac{\alpha}{\beta}\Vmax\sqrt{\frac{\log(\abs{\Pi}T\delta^{-1})}{t-1}}}\\
  = \frac{\beta\log(\abs{\Pi}T\delta^{-1})}{\alpha(t-1)} + \Vmax\sqrt{\frac{\log(\abs{\Pi}T\delta^{-1})}{t-1}}.
\end{align*}
Plugging this bound back into \cref{eq:main3}, we have that
\begin{align*}
  &\frac{1}{T}\sum_{t=1}^{T}\Jb(\pistarb) - \Jb(\pi\ind{t})  \\
  &\approxleq{} \frac{\Vmax}{T} + \frac{\coefffull}{\eta{}T}
    + \eta\Vmax^2
    + \frac{1}{T}\sum_{t=2}^{T}
    \prn*{\frac{\beta\log(\abs{\Pi}T\delta^{-1})}{\alpha(t-1)}
    +
    \Vmax\sqrt{\frac{\log(\abs{\Pi}T\delta^{-1})}{t-1}}}\\
    &\approxleq{} \frac{\Vmax}{T} + \frac{\coefffull}{\eta{}T}
    + \eta\Vmax^2
    + \frac{\beta\log(\abs{\Pi}T\delta^{-1})\log(T)}{\alpha{}T}
    +
      \Vmax\sqrt{\frac{\log(\abs{\Pi}T\delta^{-1})}{T}}\\
      &= \frac{\Vmax}{T} + \frac{\alpha\cdot\coefffull}{\beta\kappa}
    + \frac{\beta\kappa\Vmax^2}{\alpha{}T}
    + \frac{\beta\log(\abs{\Pi}T\delta^{-1})\log(T)}{\alpha{}T}
    +
        \Vmax\sqrt{\frac{\log(\abs{\Pi}T\delta^{-1})}{T}}\\
  &\approxleq{} \frac{\alpha\cdot\coefffull}{\beta\kappa}
    + \frac{\beta\kappa\Vmax^2}{\alpha{}T}
    + \frac{\beta\log(\abs{\Pi}T\delta^{-1})\log(T)}{\alpha{}T}
    +
    \Vmax\sqrt{\frac{\log(\abs{\Pi}T\delta^{-1})}{T}}\\
    &\approxleq{} \frac{\alpha\cdot\coefffull}{\beta\kappa}
    + \frac{\beta\log(\abs{\Pi}T\delta^{-1})\log(T)}{\alpha{}T}
    +
    \Vmax\sqrt{\frac{\log(\abs{\Pi}T\delta^{-1})}{T}},
\end{align*}
where the last line uses that $\kappa\leq\Vmax^{-2}$. It follows that by choosing
\begin{align}
  \label{eq:alpha_choice}
\alpha \propto\sqrt{\frac{\beta\kappa\cdot{} \beta\log(\abs{\Pi}T\delta^{-1})\log(T)}{T\cdot{}\coefffull}},
\end{align}
we obtain
\begin{align}
&\frac{1}{T}\sum_{t=1}^{T}\Jb(\pistarb) - \Jb(\pi\ind{t})  \\
    &\approxleq
    \sqrt{\frac{
    \kappa^{-1}\log(\abs{\Pi}T\delta^{-1})\log(T))\cdot\coefffull}{T}}
    + 
      \Vmax\sqrt{\frac{\log(\abs{\Pi}T\delta^{-1})}{T}}\\
      &\leq\bigoh
        (\Vmax+\kappa^{-1/2})\cdot\sqrt{\frac{\coefffull\log(\abs{\Pi}\delta^{-1})\log(T)}{T}}.
\end{align}
Finally, we note that $(\Vmax+\kappa^{-1/2})=\bigoh((\Vmax+\Rmax)e^{2\Rmax})$.

\end{proof}

\subsection{Proofs for SEC Bounds}
\label{sec:main_supporting}

\begin{proof}[\pfref{lem:sec_coverability}]%
  \newcommand{\Ctil}{\wt{C}_{\mathsf{cov}}(\Pi)}%
  \newcommand{\tfrak}{\mathfrak{t}}%
  \newcommand{\Val}{\mathsf{Val}}%
  This proof is based on Proposition 19 of \citet{xie2023role},
  with some additional modifications to handle the preference-based setting.
  Let $T\in\bbN$ and policies $\pi\ind{1},\ldots,\pi\ind{T}$ be
  given, and recall that $\pitil\ind{t}=\samp(\pi\ind{1},\ldots,\pi\ind{t})$.
  Define
  \[
    \delta\ind{t}(\tau,\tautil)
    =       \beta\log\frac{\pi^\iter{t}(\tau)}{\piref(\tau)} - r(\tau) -
      \beta\log\frac{\pi^\iter{t}(\taut)}{\piref(\taut)} + r(\taut),
    \]
    and note that by \cref{lem:implicit_q}, we have
    $\abs*{\delta\ind{t}(\tau,\tautil)}\leq{}4\Vmax$ whenever
    $\tau$ and $\tautil$ share the same initial state $s_1$. Let
    $\En_{\pi,\pi'}$ denote the expectation over trajectories induced
    by sampling $s_1\sim\rho$, $\tau\sim{}\pi\mid{}s_1$, and
    $\tautil\sim\pi'\mid{}s_1$. Meanwhile, let $\En_{\bmu\ind{t}}$
    denote the expectation over trajectories induced by sampling
    $s_1\sim\rho$ and $(\tau,\tautil)\sim\bmu\ind{t}\mid{}s_1$.
    Then our goal is to bound
    \begin{align*}
      \Val\ldef{}\sum_{t=1}^{T}\frac{\prn*{\E_{\pi\ind{t},\pisamp\ind{t-1}} \brk*{\delta\ind{t}(\tau,\tautil)}}^2}{\Vmax^2\vee
(t-1)\cdot\E_{\bmu\ind{t}}\brk*{(\delta\ind{t}(\tau,\tautil))^2} }.
    \end{align*}
    Let
    \[
\nu = \argmin_{\nu\in\Delta((\cS\times\cA)^{H})}\sup_{\tau\in(\cS\times\cA)^{H}}\sup_{\pi\in\Pi}\frac{d^{\pi}(\tau)}{\nu(\tau)}
\]
be the distribution that achieves the value of the coverability
coefficient in \cref{def:coverability}. Let us abbreviate
$\Ccov\equiv\Ccov(\Pi)$. For a trajectory $\tau$, let
\[
  \tfrak(\tau)\ldef{}\min\crl*{t\mid{}\sum_{i<t}d^{\pi\ind{i}}(\tau)\geq\Ccov\cdot\nu(\tau)}.
\]
Then we can bound
\begin{align*}
  \Val\leq
  \underbrace{\sum_{t=1}^{T}\frac{\prn*{\E_{\pi\ind{t},\pisamp\ind{t-1}} \brk*{\delta\ind{t}(\tau,\tautil)\indic\crl{t<\tfrak(\tau)}}}^2}{\Vmax^2\vee
(t-1)\cdot\E_{\bmu\ind{t}}\brk*{(\delta\ind{t}(\tau,\tautil))^2}
  }}_{\rdef{}\I}
  + \underbrace{\sum_{t=1}^{T}\frac{\prn*{\E_{\pi\ind{t},\pitil\ind{t-1}} \brk*{\delta\ind{t}(\tau,\tautil)\indic\crl{t\geq{}\tfrak(\tau)}}}^2}{\Vmax^2\vee
(t-1)\cdot\E_{\bmu\ind{t}}\brk*{(\delta\ind{t}(\tau,\tautil))^2} }}_{\rdef{}\II}.
\end{align*}
We begin by bounding the first term by 
\begin{align*}
  \I\leq{}
  \frac{1}{\Vmax^2}\sum_{t=1}^{T}\prn*{\E_{\pi\ind{t},\pitil\ind{t-1}}
      \brk*{\delta\ind{t}(\tau,\tautil)\indic\crl{t<\tfrak(\tau)}}}^2
  \leq{}
    16\sum_{t=1}^{T}\E_{\pi\ind{t}} \brk*{\indic\crl{t<\tfrak(\tau)}}.
\end{align*}
Letting $\cT\ldef{}(\cS\times\cA)^{H}$, we can further bound this by
\begin{align*}
  \sum_{t=1}^{T}\E_{\pi\ind{t}} \brk*{\indic\crl{t<\tfrak(\tau)}}
  &=
    \sum_{\tau\in\cT}\sum_{t=1}^{T}d^{\pi\ind{t}}(\tau)\indic\crl{t<\tfrak(\tau)}\\
  &=
    \sum_{\tau\in\cT}\prn*{\sum_{i=1}^{\tfrak(\tau)-2}d^{\pi\ind{i}}(\tau)}
    + d^{\pi\ind{\tfrak(\tau)-1}}(\tau)\\
    &\leq{}
      2\Ccov\sum_{\tau\in\cT}\nu(\tau) = 2\Ccov,
\end{align*}
so that $\I\leq{}32\Ccov$.

We now bound term $\II$. Define
$d^{\pi,\pi'}(\tau',\tautil')=\bbP_{s_1\sim\rho,\tau\sim\pi\mid{}s_1,\tautil\sim\pi'\mid{}s_1}(\tau=\tau',\tautil=\tautil')$
and
$d^{\bmu\ind{t}}(\tau',\tautil')=\frac{1}{t-1}\sum_{i<t}d^{\pi\ind{i},\pitil\ind{i}}(\tau',\tautil')$. For
each $t$, we can write
\begin{align*}
  &\E_{\pi\ind{t},\pisamp\ind{t-1}}
  \brk*{\delta\ind{t}(\tau,\tautil)\indic\crl{t<\tfrak(\tau)}}\\
    &=\sum_{\tau,\tautil\in\cT}d^{\pi\ind{t},\pitil\ind{t-1}}(\tau,\tautil)\delta\ind{t}(\tau,\tautil)
      \indic\crl{t\geq\tfrak(\tau)}\\
  &=\sum_{\tau,\tautil\in\cT}d^{\pi\ind{t},\pitil\ind{t-1}}(\tau,\tautil)\delta\ind{t}(\tau,\tautil)
    \prn*{\frac{d^{\bmu\ind{t}}(\tau,\tautil)}{d^{\bmu\ind{t}}(\tau,\tautil)}}^{1/2}\indic\crl{t\geq\tfrak(\tau)}\\
    &\leq\prn*{\sum_{\tau,\tautil\in\cT}\frac{(d^{\pi\ind{t},\pisamp\ind{t-1}}(\tau,\tautil))^2\indic\crl{t\geq\tfrak(\tau)}}{(t-1)\cdot{}d^{\bmu\ind{t}}(\tau,\tautil)}
      }^{1/2}\cdot{}\prn*{(t-1)\cdot{}\En_{\bmu\ind{t}}\brk*{(\delta\ind{t}(\tau,\tautil))^2}}^{1/2},
\end{align*}
where the last inequality is by Cauchy-Schwarz. We conclude that
\begin{align*}
  \II \leq{} \sum_{t=1}^{T}\sum_{\tau,\tautil\in\cT}\frac{(d^{\pi\ind{t},\pisamp\ind{t-1}}(\tau,\tautil))^2\indic\crl{t\geq\tfrak(\tau)}}{(t-1)\cdot{}d^{\bmu\ind{t}}(\tau,\tautil)}.
\end{align*}

To proceed, we restrict our attention to the case where
$\pisamp\ind{t}=\pisamp$ for all $t$ for some fixed $\pisamp$. We
observe that in this case, for all $t$,
\[
  \frac{d^{\pi\ind{t},\pitil\ind{t-1}}(\tau,\tautil)}{d^{\bmu\ind{t}}(\tau,\tautil)}=
  \frac{d^{\pi\ind{t},\pitil}(\tau,\tautil)}{\frac{1}{t-1}\sum_{i<t}d^{\pi\ind{i},\pitil}(\tau,\tautil)}=
  \frac{d^{\pi\ind{t}}(\tau)}{\frac{1}{t-1}\sum_{i<t}d^{\pi\ind{i}}(\tau)},
\]
since $\tau$ and $\tautil$ are conditionally independent given
$s_1$, and since $d^{\pi,\pi'}(\tau,\tautil)=0$ if $\tau,\tautil$ do
not share the same $s_1$. It follows that
  \begin{align*}
  \II &\leq{}
        \sum_{t=1}^{T}\sum_{\tau,\tautil\in\cT}\frac{d^{\pi\ind{t}}(\tau)
        d^{\pi\ind{t},\pisamp}(\tau,\tautil)\indic\crl{t\geq\tfrak(\tau)}}{\sum_{i<t}d^{\pi\ind{i}}(\tau)}\\
      &=
        \sum_{\tau}\sum_{t=1}^{T}\frac{(d^{\pi\ind{t}}(\tau))^2\indic\crl{t\geq\tfrak(\tau)}}{\sum_{i<t}d^{\pi\ind{i}}(\tau)}\\
      &\leq{}
        2\sum_{\tau}\sum_{t=1}^{T}\frac{(d^{\pi\ind{t}}(\tau))^2}{\sum_{i<t}d^{\pi\ind{i}}(\tau)
        + \Ccov{}\nu(\tau)}\\
          &\leq{}
            2\Ccov\sum_{\tau}\nu(\tau)\sum_{t=1}^{T}\frac{d^{\pi\ind{t}}(\tau)}{\sum_{i<t}d^{\pi\ind{i}}(\tau)
        + \Ccov{}\nu(\tau)}.
  \end{align*}
  Finally, by Lemma 4 of \citet{xie2023role}, we have that for all
  $\tau\in\cT$, $\sum_{t=1}^{T}\frac{d^{\pi\ind{t}}(\tau)}{\sum_{i<t}d^{\pi\ind{i}}(\tau)
        + \Ccov{}\nu(\tau)}\leq{}\bigoh(\log(T))$, which yields
      $\II\leq\bigoh(\Ccov\log(T))$. This proves the result.

    \end{proof}

      \begin{proof}[Proof for \cref{ex:linear}]
    We claim for any pair of trajectories $\tau,\tautil$ and function
    $f\in\cF$, we can write
    \begin{align}
      \label{eq:linear}
      \sum_{h=1}^{H}(f(s_h,a_h)-\brk*{\cT_{\beta}f}(s_h,a_h))
      - (f(\stil_h,\atil_h)-\brk*{\cT_{\beta}f}(\stil_h,\atil_h))
      = \tri*{X(\tau,\tautil),W(f)}
    \end{align}
    for embeddings $X(\tau,\tautil),W(f)\in\bbR^{d}$. To see this,
    note that $f(s_h,a_h)=\tri*{\phi(s_h,a_h),\theta_f}$ for some
    $\theta_f\in\bbR^{d}$ with $\nrm*{\theta_f}\leq{}B$ by definition, while the linear MDP property
    implies that we can write
    $\brk*{\cT_{\beta}f}(s_h,a_h)=\tri*{\phi(s_h,a_h),w_f}$ for some
    $w_f\in\bbR^{d}$ with $\nrm*{w_f}\leq\bigoh(\sqrt{d})$. It follows
    that we can take
    \[
      X(\tau,\tautil) = \sum_{h=1}^{H}\phi(s_h,a_h)-\phi(\stil_h,\atil_h) \in\bbR^{d}
    \]
    and
    \[
      W(f) = \theta_f-w_f\in\bbR^{d}.
    \]

With this definition, we observe that in the case where
$\pisamp\ind{t}=\pisamp$ for all $t$, we can write the value of
$\coeff$ for a sequence of policies $\pi\ind{1},\ldots,\pi\ind{T}$ as
\begin{align*}
\sum_{t=1}^{T}\frac{\prn*{\E_{s_1 \sim \rho, \tau \sim
  \pi^\iter{t}\mid s_1, \taut \sim \pisamp \mid s_1}
  \brk*{\tri*{X(\tau,\tautil),W(f\ind{t})}}
  }^2
  }{\Vmax^2\vee
  \sum_{i<t}\E_{s_1 \sim \rho, \tau\sim \pi\ind{i} \mid s_1,\tautil\sim\pisamp\mid{}s_1}   \brk*{\tri*{X(\tau,\tautil),W(f\ind{t})}^2}}
\end{align*}
In particular, if we define $W\ind{t}\ldef{}W(f\ind{t})$ and $X\ind{t}=\E_{s_1 \sim \rho, \tau \sim
  \pi^\iter{t}\mid s_1, \taut \sim \pisamp \mid s_1}
  \brk*{X(\tau,\tautil)}$, it follows from Jensen's inequality that we
  can bound the quantity above by
  \begin{align*}
\sum_{t=1}^{T}\frac{\tri*{X\ind{t},W\ind{t}}^2
  }{\Vmax^2\vee
  \sum_{i<t}\tri*{X\ind{i},W\ind{t}}^2}
\end{align*}
Using that
    $\nrm*{X(\tau,\tautil)},\nrm*{W(f)}\leq\poly(H,d)$, it now follows
    from the standard elliptic potential argument (e.g.,
    \citet{du2021bilinear,jin2021bellman}) that $\coeff(\cF,T;\samp)\leq\bigoht(d)$.
\end{proof}

\section{Additional Proofs}
\label{sec:proofs_additional}

This section contains proofs for supporting results found throughout
\cref{sec:background} and \cref{sec:main}.

\subsection{Proofs from \creftitle{sec:background}}

\begin{proof}[\pfref{prop:dpo_failure}]%
Consider the bandit setting where $H=1$, $\cS=\emptyset$, and
$\cA=\crl{\afrak,\bfrak}$. Let $\beta>0$ be given. We consider the
reward function $r$ given by $r(\afrak)=1$ and
$r(\bfrak)=\frac{1}{2}$. We choose the reference model to set
$\piref(\afrak)=\veps$ and $\piref(\bfrak)=1-\veps$ for a parameter
$\veps\ldef{}\exp(-\frac{c}{\beta})$, where $c>0$ is an absolute
constant whose value will be chosen at the end of the proof. We choose
$\Pi=\crl{\piref,\pistarb}$, which we note satisfies
\cref{ass:realizability} and \cref{ass:vmax} with $\Vmax=\bigoh(1)$.

Specialized to the bandit setting, Online \dpo{} takes the following
simplified form:
\begin{enumerate}
\item Sample pair of actions $a\ind{t},\atil\ind{t}\sim\pi\ind{t}$.
\item Label the actions as $(\ap\ind{t},\am\ind{t})$ according the
  Bradley-Terry model:
  \[
    \bbP(a\ind{t}\psdgt\atil\ind{t}) = \frac{\exp(r(a\ind{t}))}{\exp(r(a\ind{t}))+\exp(r(\atil\ind{t}))},
  \]
  and update $\cD_\pref\ind{t+1}\gets{}\cD_\pref\ind{t}\cup\crl{(\ap\ind{t},\am\ind{t})}$.
\item Compute $\pi\ind{t+1}$ via
  \begin{equation}
    \label{eq:online_dpo_bandit}
    \pi\ind{t+1}=\argmin_{\pi\in\Pi}
\sum_{(\ap,\am) \in \Dcal\ind{t+1}_\pref} - \log\left[\sigma\left( \beta\log\frac{\pi(\ap)}{\piref(\ap)} - \beta\log\frac{\pi(\am)}{\piref(\am)} \right) \right].
\end{equation}
\end{enumerate}
Our construction uses the fact that depending on the preference dataset $\cD_\pref\ind{t}$, the
minimizer in \cref{eq:online_dpo_bandit} may not be uniquely defined.
Let $\cE\ind{t}$ denote the event that at iteration $t$,
$a\ind{t}=\atil\ind{t}=\bfrak$. We appeal to a technical lemma.
\begin{lemma}
  \label{lem:online_dpo_bad}
  Suppose we initialize with $\pi\ind{1}=\piref$.
  As long as $c\leq\frac{1}{8}$, $\veps\leq{}1/2$, the following properties hold:
  \begin{itemize}
  \item $\bbP(\cE\ind{t}\mid{}\cE\ind{1},\ldots\cE\ind{t-1})\geq{}1-2\veps$.
  \item Whenever $\cE\ind{1},\ldots,\cE\ind{t}$ hold, we can choose
    the policy
    $\pi\ind{t+1}$ to satisfy $\pi\ind{t+1}=\piref$, which has
    \[
      \max_{\pi}J_\beta(\pi) - J_\beta(\pi\ind{t+1})  =
      \max_{\pi}J_\beta(\pi) - J_\beta(\piref) \geq{} \frac{1}{8}
      \]
  \end{itemize}
\end{lemma}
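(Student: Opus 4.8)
The plan is to prove both bullets simultaneously by induction on $t$, with inductive hypothesis: on the event $\cE\ind{1},\ldots,\cE\ind{t}$, all iterates $\pi\ind{1},\ldots,\pi\ind{t+1}$ produced by Online \dpo can be taken equal to $\piref$. The base case is immediate since $\pi\ind{1}=\piref$ by initialization. For the inductive step I would first isolate the key structural fact: on $\cE\ind{i}$ we have $a\ind{i}=\atil\ind{i}=\bfrak$, so both responses in the $i$-th pair coincide and, regardless of the label $y\ind{i}$, the pair added is $(\ap\ind{i},\am\ind{i})=(\bfrak,\bfrak)$. Hence on $\cE\ind{1},\ldots,\cE\ind{t}$ every preference pair in $\cD_\pref\ind{t+1}$ equals $(\bfrak,\bfrak)$, and the per-pair \dpo loss in \eqref{eq:online_dpo_bandit} is $-\log\sigma\prn*{\beta\log\tfrac{\pi(\bfrak)}{\piref(\bfrak)}-\beta\log\tfrac{\pi(\bfrak)}{\piref(\bfrak)}}=-\log\sigma(0)=\log 2$ for every $\pi\in\Pi$. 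Thus the \dpo objective is constant on $\Pi$, its argmin is all of $\Pi$, and under an arbitrary (worst-case) tie-breaking rule we may take $\pi\ind{t+1}=\piref$, completing the induction.

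Given this, the first bullet follows by conditioning on $\cE\ind{1},\ldots,\cE\ind{t-1}$: the induction (at level $t-1$) gives $\pi\ind{t}=\piref$ on this event, so $a\ind{t}$ and $\atil\ind{t}$ are i.i.d.\ draws from $\piref$ and $\bbP(\cE\ind{t}\mid\cE\ind{1},\ldots,\cE\ind{t-1})=\piref(\bfrak)^2=(1-\veps)^2\geq 1-2\veps$. The first half of the second bullet is the induction at level $t$, giving $\pi\ind{t+1}=\piref$, so $\max_{\pi}J_\beta(\pi)-J_\beta(\pi\ind{t+1})=\max_{\pi}J_\beta(\pi)-J_\beta(\piref)$.

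It remains to lower bound this quantity. I would compute $J_\beta(\piref)=\En_{a\sim\piref}[r(a)]=\veps\cdot 1+(1-\veps)\cdot\tfrac{1}{2}=\tfrac{1}{2}+\tfrac{\veps}{2}$ (the KL term vanishes at $\piref$), and use the Gibbs variational formula $\max_{\pi}J_\beta(\pi)=\beta\log\prn*{\veps e^{1/\beta}+(1-\veps)e^{1/(2\beta)}}\geq\beta\log\prn*{\veps e^{1/\beta}}=1-c$, where the last equality uses $\veps=e^{-c/\beta}$. Subtracting yields $\max_{\pi}J_\beta(\pi)-J_\beta(\piref)\geq\tfrac{1}{2}-c-\tfrac{\veps}{2}\geq\tfrac{1}{8}$, using the hypotheses $c\leq\tfrac{1}{8}$ and $\veps\leq\tfrac{1}{2}$.

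No step here is genuinely hard; the one subtle point is the non-uniqueness of the \dpo minimizer, which is exactly the mechanism that lets the algorithm remain at $\piref$ forever: on the collapsed dataset of identical preference pairs the objective is exactly flat, so nothing favors moving away from $\piref$. The only things to check carefully are the constants in the final display (the bound is tight at $\tfrac{1}{8}$ when $c=\tfrac{1}{8}$ and $\veps=\tfrac{1}{2}$) and that $\veps=e^{-c/\beta}\leq\tfrac{1}{2}$ over the relevant range of $\beta$, which will be arranged once $c$ is fixed at the end of the proof of \cref{prop:dpo_failure}.
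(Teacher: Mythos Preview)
Your proposal is correct and follows essentially the same approach as the paper: both argue inductively that the \dpo objective is constant on $\Pi$ when all preference pairs are $(\bfrak,\bfrak)$, then lower bound the regularized regret at $\piref$. The only cosmetic difference is in the final computation: the paper evaluates $J_\beta$ at the point mass $\delta_\afrak$ to get $\max_\pi J_\beta(\pi)\geq 1-\beta\log(1/\veps)=1-c$, whereas you invoke the Gibbs variational formula and drop a term in the log-sum-exp---these are the same bound, and both arrive at $\tfrac{1}{2}-c-\tfrac{\veps}{2}\geq\tfrac{1}{8}$.
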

By \cref{lem:online_dpo_bad} and the union bound, we have that
\begin{align*}
  \bbP(\cE\ind{1},\ldots,\cE\ind{T}) \geq{} (1-2\veps)^{T} \geq{} e^{-1}
\end{align*}
as long as $T\leq{} \frac{1}{2\veps}$. It follows that whenever this
occurs, $\max_{\pi}J_\beta(\pi) - J_\beta(\pi\ind{t}) \geq{} \frac{1}{8}$ for all $t\in\brk{T+1}$.

Note that since online \dpo selects $\pi\ind{t}=\piref$ for all $t$ in our
counterexample above, this also immediately implies a lower bound for
offline \dpo (interpreting $\pi\ind{T+1}$ as the policy returned by
offline \dpo).

\end{proof}

\begin{proof}[\pfref{lem:online_dpo_bad}]
  We prove this claim inductively. Let $t\in\brk{T}$ be fixed, and
  suppose the claim holds for $1,\ldots,t-1$. If we assume
  $\cE\ind{1},\ldots,\cE\ind{t-1}$ hold, then we have
  $\pi\ind{t}=\piref$ inductively. In this case,
  \[
    \bbP(a\ind{t}=\wt{a}\ind{t}=\bfrak) = (\piref(\bfrak))^2=(1-\veps)^2\geq{}1-2\veps,
  \]
  so that
  $\bbP(\cE\ind{t}\mid{}\cE\ind{1},\ldots\cE\ind{t-1})\geq{}1-2\veps$
  as desired.

  Now, for the second part of the claim, suppose that
  $\cE\ind{1},\ldots,\cE\ind{t+1}$ hold. Then for all $t'\in\brk{t+1}$,
  $a_{+}\ind{t'}=a_{-}\ind{t'}=\bfrak$, which implies that
  \[
\sum_{(\ap,\am) \in \Dcal\ind{t+1}_\pref} - \log\left[\sigma\left(
    \beta\log\frac{\pi(\ap)}{\piref(\ap)} -
    \beta\log\frac{\pi(\am)}{\piref(\am)} \right) \right]
=-\log(\sigma(0))\cdot{}t
\]
for all $\pi\in\Pi$ such that $\pi\ll\piref$. It follows that
$\pi\ind{t+1}=\piref$ is a valid
minimizer for \cref{eq:online_dpo_bandit}.

Finally, we compute that as long as $\veps\leq{}1/2$ and $c\leq\frac{1}{8}$
\begin{align*}
  \max_{\pi}J_\beta(\pi) - J_\beta(\piref)
  &\geq{} \max_{\pi}J(\pi) - J(\piref) -\beta\log(\veps^{-1})\\
  &= (1 -
    (1-\veps)\cdot{}\tfrac{1}{2}-\veps{}\cdot1)-\beta\log(\veps^{-1})
  \geq{} \frac{1}{4}-c\geq{}\frac{1}{8}.
\end{align*}

\end{proof}

\section{Experiments: Additional Results and Details}
\label{sec:exp_details}

\cref{tb:openllm} displays the performance of \algshort and the comparator models described in \cref{sec:experiments} on additional reasoning tasks. We observe that the \algshort model outperforms Llama-3-8B-it, and still comparable to \llamafinal, which uses 4x more data than \algshort. The average over all academic benchmarks is 59.94 for \llamafinal vs.~59.61 for \algshort-iter3, in addition to the performance gain from \algshort-iter3 in the chat benchmarks (\cref{tb:llama_main}).

\begin{table}[th]
\centering
\begin{adjustbox}{max width=\textwidth}
\begin{tabular}{c|cccc}
\textbf{Model}             & \textbf{ARC-C} & \textbf{TruthfulQA} & \textbf{HellaSwag} & \textbf{WinoGrande} \\ \hline
\llamasft   & 56.23                      & 53.43                     & 78.66                          & 76.64               \\ \hline
\dpo-iter1             & 57.08                      & 59.73                     & 79.97                          & 76.24               \\
\dpo-iter2             & 56.57                      & 60.66            & 79.74                          & 76.16               \\
\dpo-iter3             & 57                      & \textbf{61.79}                      & 79.65                          & \textbf{76.95}      \\ \hline
\algshort-iter1             & \textbf{57.34}             & 59.28                     & \textbf{80.01}                 & 76.4                \\
\algshort-iter2             & 57.25                      & 59.72                     & 79.85                          & 76.4                \\
\algshort-iter3             & 56.23                      & 59.74                     & 79.16                          & 76.64               \\ \hline\hline
\llamafinal & 54.35                      & \underline{62.23}               & \underline{80.84}                    & \underline{77.19}         \\
Llama-3-8B-it         & 56.83                      & 51.65                     & 75.75                          & 71.74        
\end{tabular}
\end{adjustbox}
\vspace{3pt}
\caption{\algshort and comparator models on additional reasoning benchmarks. \textbf{Bold} results are the best ones with the same data usage. \underline{Underlined} results are superior to \algshort, but either from a model requiring more data or industry-level.}
\label{tb:openllm}
\end{table}

\paragraph{Implementation details}
The experiments were conducted on 8 x Nvidia H100 GPUs.
In our implementation, we mainly follow the general version of \algshort (\cref{alg:general}), and we pick $\pisamp^\iter{t} = \pi^\iter{t}$ and $\cDopt\ind{t} =  \Dcal_\pref^\iter{t}$.
In each iteration, we fix the base model (\llamasft) as $\piref$, set $\beta = 0.1$, use a global batch size of $16$, and use a learning rate of $5\times 10^{-7}$ with cosine scheduling.
The $\alpha$ parameter follows the schedule $\{1\times 10^{-5},5\times 10^{-6},0\}$ for the three iterations. We clip the $\log\frac{\pi(\tau)}{\piref(\tau)}$ term for both positive and negative trajectories to $[-500,500]$, but only for the exploration term, in order to enhance stability. This is motivated by \cref{ass:vmax}. The number of training epochs for each iteration is 2, and the warmup ratio is 0.03.

\newpage

\neurips{\section*{NeurIPS Paper Checklist}

\begin{enumerate}

\item {\bf Claims}
    \item[] Question: Do the main claims made in the abstract and introduction accurately reflect the paper's contributions and scope?
    \item[] Answer: \answerYes{} %
    \item[] Justification: %
    \item[] Guidelines:
    \begin{itemize}
        \item The answer NA means that the abstract and introduction do not include the claims made in the paper.
        \item The abstract and/or introduction should clearly state the claims made, including the contributions made in the paper and important assumptions and limitations. A No or NA answer to this question will not be perceived well by the reviewers. 
        \item The claims made should match theoretical and experimental results, and reflect how much the results can be expected to generalize to other settings. 
        \item It is fine to include aspirational goals as motivation as long as it is clear that these goals are not attained by the paper. 
    \end{itemize}

\item {\bf Limitations}
    \item[] Question: Does the paper discuss the limitations of the work performed by the authors?
    \item[] Answer: \answerYes{} %
    \item[] Justification: %
    \item[] Guidelines:
    \begin{itemize}
        \item The answer NA means that the paper has no limitation while the answer No means that the paper has limitations, but those are not discussed in the paper. 
        \item The authors are encouraged to create a separate "Limitations" section in their paper.
        \item The paper should point out any strong assumptions and how robust the results are to violations of these assumptions (e.g., independence assumptions, noiseless settings, model well-specification, asymptotic approximations only holding locally). The authors should reflect on how these assumptions might be violated in practice and what the implications would be.
        \item The authors should reflect on the scope of the claims made, e.g., if the approach was only tested on a few datasets or with a few runs. In general, empirical results often depend on implicit assumptions, which should be articulated.
        \item The authors should reflect on the factors that influence the performance of the approach. For example, a facial recognition algorithm may perform poorly when image resolution is low or images are taken in low lighting. Or a speech-to-text system might not be used reliably to provide closed captions for online lectures because it fails to handle technical jargon.
        \item The authors should discuss the computational efficiency of the proposed algorithms and how they scale with dataset size.
        \item If applicable, the authors should discuss possible limitations of their approach to address problems of privacy and fairness.
        \item While the authors might fear that complete honesty about limitations might be used by reviewers as grounds for rejection, a worse outcome might be that reviewers discover limitations that aren't acknowledged in the paper. The authors should use their best judgment and recognize that individual actions in favor of transparency play an important role in developing norms that preserve the integrity of the community. Reviewers will be specifically instructed to not penalize honesty concerning limitations.
    \end{itemize}

\item {\bf Theory Assumptions and Proofs}
    \item[] Question: For each theoretical result, does the paper provide the full set of assumptions and a complete (and correct) proof?
    \item[] Answer: \answerYes{} %
    \item[] Justification: %
    \item[] Guidelines:
    \begin{itemize}
        \item The answer NA means that the paper does not include theoretical results. 
        \item All the theorems, formulas, and proofs in the paper should be numbered and cross-referenced.
        \item All assumptions should be clearly stated or referenced in the statement of any theorems.
        \item The proofs can either appear in the main paper or the supplemental material, but if they appear in the supplemental material, the authors are encouraged to provide a short proof sketch to provide intuition. 
        \item Inversely, any informal proof provided in the core of the paper should be complemented by formal proofs provided in appendix or supplemental material.
        \item Theorems and Lemmas that the proof relies upon should be properly referenced. 
    \end{itemize}

    \item {\bf Experimental Result Reproducibility}
    \item[] Question: Does the paper fully disclose all the information needed to reproduce the main experimental results of the paper to the extent that it affects the main claims and/or conclusions of the paper (regardless of whether the code and data are provided or not)?
    \item[] Answer: \answerYes{} %
    \item[] Justification: %
    \item[] Guidelines:
    \begin{itemize}
        \item The answer NA means that the paper does not include experiments.
        \item If the paper includes experiments, a No answer to this question will not be perceived well by the reviewers: Making the paper reproducible is important, regardless of whether the code and data are provided or not.
        \item If the contribution is a dataset and/or model, the authors should describe the steps taken to make their results reproducible or verifiable. 
        \item Depending on the contribution, reproducibility can be accomplished in various ways. For example, if the contribution is a novel architecture, describing the architecture fully might suffice, or if the contribution is a specific model and empirical evaluation, it may be necessary to either make it possible for others to replicate the model with the same dataset, or provide access to the model. In general. releasing code and data is often one good way to accomplish this, but reproducibility can also be provided via detailed instructions for how to replicate the results, access to a hosted model (e.g., in the case of a large language model), releasing of a model checkpoint, or other means that are appropriate to the research performed.
        \item While NeurIPS does not require releasing code, the conference does require all submissions to provide some reasonable avenue for reproducibility, which may depend on the nature of the contribution. For example
        \begin{enumerate}
            \item If the contribution is primarily a new algorithm, the paper should make it clear how to reproduce that algorithm.
            \item If the contribution is primarily a new model architecture, the paper should describe the architecture clearly and fully.
            \item If the contribution is a new model (e.g., a large language model), then there should either be a way to access this model for reproducing the results or a way to reproduce the model (e.g., with an open-source dataset or instructions for how to construct the dataset).
            \item We recognize that reproducibility may be tricky in some cases, in which case authors are welcome to describe the particular way they provide for reproducibility. In the case of closed-source models, it may be that access to the model is limited in some way (e.g., to registered users), but it should be possible for other researchers to have some path to reproducing or verifying the results.
        \end{enumerate}
    \end{itemize}

\item {\bf Open access to data and code}
    \item[] Question: Does the paper provide open access to the data and code, with sufficient instructions to faithfully reproduce the main experimental results, as described in supplemental material?
    \item[] Answer: \answerNo{} %
    \item[] Justification: This is a primarily theoretical work.
    \item[] Guidelines:
    \begin{itemize}
        \item The answer NA means that paper does not include experiments requiring code.
        \item Please see the NeurIPS code and data submission guidelines (\url{https://nips.cc/public/guides/CodeSubmissionPolicy}) for more details.
        \item While we encourage the release of code and data, we understand that this might not be possible, so “No” is an acceptable answer. Papers cannot be rejected simply for not including code, unless this is central to the contribution (e.g., for a new open-source benchmark).
        \item The instructions should contain the exact command and environment needed to run to reproduce the results. See the NeurIPS code and data submission guidelines (\url{https://nips.cc/public/guides/CodeSubmissionPolicy}) for more details.
        \item The authors should provide instructions on data access and preparation, including how to access the raw data, preprocessed data, intermediate data, and generated data, etc.
        \item The authors should provide scripts to reproduce all experimental results for the new proposed method and baselines. If only a subset of experiments are reproducible, they should state which ones are omitted from the script and why.
        \item At submission time, to preserve anonymity, the authors should release anonymized versions (if applicable).
        \item Providing as much information as possible in supplemental material (appended to the paper) is recommended, but including URLs to data and code is permitted.
    \end{itemize}

\item {\bf Experimental Setting/Details}
    \item[] Question: Does the paper specify all the training and test details (e.g., data splits, hyperparameters, how they were chosen, type of optimizer, etc.) necessary to understand the results?
    \item[] Answer: \answerYes{} %
    \item[] Justification: %
    \item[] Guidelines:
    \begin{itemize}
        \item The answer NA means that the paper does not include experiments.
        \item The experimental setting should be presented in the core of the paper to a level of detail that is necessary to appreciate the results and make sense of them.
        \item The full details can be provided either with the code, in appendix, or as supplemental material.
    \end{itemize}

\item {\bf Experiment Statistical Significance}
    \item[] Question: Does the paper report error bars suitably and correctly defined or other appropriate information about the statistical significance of the experiments?
    \item[] Answer: \answerYes{} %
    \item[] Justification: %
    \item[] Guidelines:
    \begin{itemize}
        \item The answer NA means that the paper does not include experiments.
        \item The authors should answer "Yes" if the results are accompanied by error bars, confidence intervals, or statistical significance tests, at least for the experiments that support the main claims of the paper.
        \item The factors of variability that the error bars are capturing should be clearly stated (for example, train/test split, initialization, random drawing of some parameter, or overall run with given experimental conditions).
        \item The method for calculating the error bars should be explained (closed form formula, call to a library function, bootstrap, etc.)
        \item The assumptions made should be given (e.g., Normally distributed errors).
        \item It should be clear whether the error bar is the standard deviation or the standard error of the mean.
        \item It is OK to report 1-sigma error bars, but one should state it. The authors should preferably report a 2-sigma error bar than state that they have a 96\% CI, if the hypothesis of Normality of errors is not verified.
        \item For asymmetric distributions, the authors should be careful not to show in tables or figures symmetric error bars that would yield results that are out of range (e.g. negative error rates).
        \item If error bars are reported in tables or plots, The authors should explain in the text how they were calculated and reference the corresponding figures or tables in the text.
    \end{itemize}

\item {\bf Experiments Compute Resources}
    \item[] Question: For each experiment, does the paper provide sufficient information on the computer resources (type of compute workers, memory, time of execution) needed to reproduce the experiments?
    \item[] Answer: \answerYes{} %
    \item[] Justification: %
    \item[] Guidelines:
    \begin{itemize}
        \item The answer NA means that the paper does not include experiments.
        \item The paper should indicate the type of compute workers CPU or GPU, internal cluster, or cloud provider, including relevant memory and storage.
        \item The paper should provide the amount of compute required for each of the individual experimental runs as well as estimate the total compute. 
        \item The paper should disclose whether the full research project required more compute than the experiments reported in the paper (e.g., preliminary or failed experiments that didn't make it into the paper). 
    \end{itemize}
    
\item {\bf Code Of Ethics}
    \item[] Question: Does the research conducted in the paper conform, in every respect, with the NeurIPS Code of Ethics \url{https://neurips.cc/public/EthicsGuidelines}?
    \item[] Answer: \answerYes{} %
    \item[] Justification: %
    \item[] Guidelines:
    \begin{itemize}
        \item The answer NA means that the authors have not reviewed the NeurIPS Code of Ethics.
        \item If the authors answer No, they should explain the special circumstances that require a deviation from the Code of Ethics.
        \item The authors should make sure to preserve anonymity (e.g., if there is a special consideration due to laws or regulations in their jurisdiction).
    \end{itemize}

\item {\bf Broader Impacts}
    \item[] Question: Does the paper discuss both potential positive societal impacts and negative societal impacts of the work performed?
    \item[] Answer: \answerNA{} %
    \item[] Justification: This is a primarily theoretical work.
    \item[] Guidelines:
    \begin{itemize}
        \item The answer NA means that there is no societal impact of the work performed.
        \item If the authors answer NA or No, they should explain why their work has no societal impact or why the paper does not address societal impact.
        \item Examples of negative societal impacts include potential malicious or unintended uses (e.g., disinformation, generating fake profiles, surveillance), fairness considerations (e.g., deployment of technologies that could make decisions that unfairly impact specific groups), privacy considerations, and security considerations.
        \item The conference expects that many papers will be foundational research and not tied to particular applications, let alone deployments. However, if there is a direct path to any negative applications, the authors should point it out. For example, it is legitimate to point out that an improvement in the quality of generative models could be used to generate deepfakes for disinformation. On the other hand, it is not needed to point out that a generic algorithm for optimizing neural networks could enable people to train models that generate Deepfakes faster.
        \item The authors should consider possible harms that could arise when the technology is being used as intended and functioning correctly, harms that could arise when the technology is being used as intended but gives incorrect results, and harms following from (intentional or unintentional) misuse of the technology.
        \item If there are negative societal impacts, the authors could also discuss possible mitigation strategies (e.g., gated release of models, providing defenses in addition to attacks, mechanisms for monitoring misuse, mechanisms to monitor how a system learns from feedback over time, improving the efficiency and accessibility of ML).
    \end{itemize}
    
\item {\bf Safeguards}
    \item[] Question: Does the paper describe safeguards that have been put in place for responsible release of data or models that have a high risk for misuse (e.g., pretrained language models, image generators, or scraped datasets)?
    \item[] Answer: \answerNA{} %
    \item[] Justification: %
    \item[] Guidelines:
    \begin{itemize}
        \item The answer NA means that the paper poses no such risks.
        \item Released models that have a high risk for misuse or dual-use should be released with necessary safeguards to allow for controlled use of the model, for example by requiring that users adhere to usage guidelines or restrictions to access the model or implementing safety filters. 
        \item Datasets that have been scraped from the Internet could pose safety risks. The authors should describe how they avoided releasing unsafe images.
        \item We recognize that providing effective safeguards is challenging, and many papers do not require this, but we encourage authors to take this into account and make a best faith effort.
    \end{itemize}

\item {\bf Licenses for existing assets}
    \item[] Question: Are the creators or original owners of assets (e.g., code, data, models), used in the paper, properly credited and are the license and terms of use explicitly mentioned and properly respected?
    \item[] Answer: \answerYes{} %
    \item[] Justification: %
    \item[] Guidelines:
    \begin{itemize}
        \item The answer NA means that the paper does not use existing assets.
        \item The authors should cite the original paper that produced the code package or dataset.
        \item The authors should state which version of the asset is used and, if possible, include a URL.
        \item The name of the license (e.g., CC-BY 4.0) should be included for each asset.
        \item For scraped data from a particular source (e.g., website), the copyright and terms of service of that source should be provided.
        \item If assets are released, the license, copyright information, and terms of use in the package should be provided. For popular datasets, \url{paperswithcode.com/datasets} has curated licenses for some datasets. Their licensing guide can help determine the license of a dataset.
        \item For existing datasets that are re-packaged, both the original license and the license of the derived asset (if it has changed) should be provided.
        \item If this information is not available online, the authors are encouraged to reach out to the asset's creators.
    \end{itemize}

\item {\bf New Assets}
    \item[] Question: Are new assets introduced in the paper well documented and is the documentation provided alongside the assets?
    \item[] Answer: \answerNA{} %
    \item[] Justification: %
    \item[] Guidelines:
    \begin{itemize}
        \item The answer NA means that the paper does not release new assets.
        \item Researchers should communicate the details of the dataset/code/model as part of their submissions via structured templates. This includes details about training, license, limitations, etc. 
        \item The paper should discuss whether and how consent was obtained from people whose asset is used.
        \item At submission time, remember to anonymize your assets (if applicable). You can either create an anonymized URL or include an anonymized zip file.
    \end{itemize}

\item {\bf Crowdsourcing and Research with Human Subjects}
    \item[] Question: For crowdsourcing experiments and research with human subjects, does the paper include the full text of instructions given to participants and screenshots, if applicable, as well as details about compensation (if any)? 
    \item[] Answer: \answerNA{} %
    \item[] Justification: %
    \item[] Guidelines:
    \begin{itemize}
        \item The answer NA means that the paper does not involve crowdsourcing nor research with human subjects.
        \item Including this information in the supplemental material is fine, but if the main contribution of the paper involves human subjects, then as much detail as possible should be included in the main paper. 
        \item According to the NeurIPS Code of Ethics, workers involved in data collection, curation, or other labor should be paid at least the minimum wage in the country of the data collector. 
    \end{itemize}

\item {\bf Institutional Review Board (IRB) Approvals or Equivalent for Research with Human Subjects}
    \item[] Question: Does the paper describe potential risks incurred by study participants, whether such risks were disclosed to the subjects, and whether Institutional Review Board (IRB) approvals (or an equivalent approval/review based on the requirements of your country or institution) were obtained?
    \item[] Answer: \answerNA{} %
    \item[] Justification: %
    \item[] Guidelines:
    \begin{itemize}
        \item The answer NA means that the paper does not involve crowdsourcing nor research with human subjects.
        \item Depending on the country in which research is conducted, IRB approval (or equivalent) may be required for any human subjects research. If you obtained IRB approval, you should clearly state this in the paper. 
        \item We recognize that the procedures for this may vary significantly between institutions and locations, and we expect authors to adhere to the NeurIPS Code of Ethics and the guidelines for their institution. 
        \item For initial submissions, do not include any information that would break anonymity (if applicable), such as the institution conducting the review.
    \end{itemize}

\end{enumerate}

}

\end{document}